%% file: icml2026.tex
\documentclass{article}

\usepackage{titletoc}
\usepackage[dvipsnames, x11names, svgnames]{xcolor}

\usepackage[toc,page,header]{appendix}
\usepackage{minitoc}

\usepackage{microtype}
\usepackage{graphicx}
\usepackage{subcaption}
\usepackage{booktabs} 

\usepackage[accepted]{icml2026}

\usepackage{amsmath}
\usepackage{amssymb}
\usepackage{mathtools}
\usepackage{amsthm}
\usepackage{bm}

\definecolor{ballblue}{HTML}{338EA7}
\definecolor{lightseagreen}{HTML}{759D39}
\definecolor{lightred}{HTML}{DD7769}
\definecolor{org}{HTML}{F8A145}
\definecolor{blu}{HTML}{63ACE5}
\definecolor{c1}{HTML}{41B3A3}
\definecolor{c2}{HTML}{3500D3}
\definecolor{test}{HTML}{E5B8FD}
\definecolor{bdy}{HTML}{FDADAD}
\definecolor{corr}{HTML}{B2E1EA}
\definecolor{wrng}{HTML}{CFE8BD}
\usepackage[colorlinks,linkcolor=blue,citecolor=org,urlcolor=org, linktoc=all]{hyperref}

\usepackage[capitalize,noabbrev]{cleveref}

\theoremstyle{plain}
\newtheorem{theorem}{Theorem}[section]
\newtheorem{proposition}[theorem]{Proposition}
\newtheorem{lemma}[theorem]{Lemma}
\newtheorem{corollary}[theorem]{Corollary}
\theoremstyle{definition}

\theoremstyle{remark}

\newcommand{\be}{\mathbf{e}}
\newcommand{\br}{\mathbf{r}}
\newcommand{\bw}{\mathbf{w}}
\newcommand{\bx}{\mathbf{x}}

\newcommand{\bA}{\mathbf{A}}
\newcommand{\bD}{\mathbf{D}}
\newcommand{\bI}{\mathbf{I}}
\newcommand{\bK}{\mathbf{K}}
\newcommand{\bP}{\mathbf{P}}
\newcommand{\bQ}{\mathbf{Q}}
\newcommand{\bV}{\mathbf{V}}
\newcommand{\bW}{\mathbf{W}}
\newcommand{\bX}{\mathbf{X}}
\newcommand{\bY}{\mathbf{Y}}
\newcommand{\btheta}{\bm{\theta}}
\newcommand{\bSigma}{\bm{\Sigma}}

\usepackage[textsize=tiny]{todonotes}

\definecolor{app_blue}{RGB}{0,20,115}
\hypersetup{
  colorlinks,
  linkcolor=blue,
  citecolor=orange,
  urlcolor=orange,
  linktoc=all
}

\icmltitlerunning{The Devil is in the Condition Numbers}

\begin{document}

\doparttoc 
\faketableofcontents

\twocolumn[
  \icmltitle{The Devil is in the Condition Numbers: \\ Why is GLU Better than non-GLU Structure?}



  \icmlsetsymbol{equal}{*}

  \begin{icmlauthorlist}
    \icmlauthor{Xingyu Lyu}{ict,sch}
    \icmlauthor{Qianqian Xu}{ict,baai}
    \icmlauthor{Zhiyong Yang}{sch}
    \icmlauthor{Peisong Wen}{sch}
    \icmlauthor{Qingming Huang}{sch,ict}
  \end{icmlauthorlist}

  \icmlaffiliation{ict}{State Key Laboratory of AI Safety, Institute of Computing Technology, Chinese Academy of Sciences, Beijing 100190, China}
  \icmlaffiliation{sch}{School of Computer Science and Technology, University of Chinese Academy of Sciences, Beijing 101408, China}
  \icmlaffiliation{baai}{Beijing Academy of Artificial Intelligence (BAAI), Beijing, China}

  \icmlcorrespondingauthor{Qianqian Xu}{xuqianqian@ict.ac.cn}
  \icmlcorrespondingauthor{Qingming Huang}{qmhuang@ucas.ac.cn}

  \icmlkeywords{Machine Learning, ICML}

  \vskip 0.3in
]



\printAffiliationsAndNotice{}  

\begin{abstract}
  Gated Linear Units (GLU) and their variants are widely adopted in modern open-source large language model architectures and consistently outperform their non-gated counterparts, yet the underlying reasons for this advantage remain unclear. In this work, we study GLU by analyzing two-layer networks in the neural tangent kernel (NTK) regime. Our analysis reveals that the GLU structure reshapes the NTK spectrum, leading to a smaller condition number and a more compact eigenvalue distribution. Building on this finding, we further analyze the resulting training dynamics and show how the reshaped spectrum leads to faster convergence of GLU models, including a characteristic loss-crossing phenomenon observed between GLU and non-GLU models. Finally, we empirically observe that GLU has limited impact in reducing the generalization gap on various models, including ViT and GPT-2, suggesting that its primary benefit lies in accelerating optimization rather than reducing the generalization gap. The code is available at: \url{https://github.com/Zemdalk/GLU-NTK}.
\end{abstract}

\input{Sections/1.intro}
\input{Sections/3.preliminary}

\input{Sections/4.optimization.tex}

\input{Sections/5.training_dynamics.tex}

\input{Sections/6.generalization_gap.tex}
\input{Sections/7.conclusion.tex}

\section*{Acknowledgements}

This work was supported in part by National Natural Science Foundation of China: 62525212, U23B2051, 62236008, 62441232, 62521007, 62576332, and U21B2038, in part by Youth Innovation Promotion Association CAS, in part by the Strategic Priority Research Program of the Chinese Academy of Sciences, Grant No. XDB0680201, in part by the project ZR2025ZD01 supported by Shandong Provincial Natural Science Foundatio, in part by the China National Postdoctoral Program for Innovative Talents under Grant BX20250377, in part by the Beijing Major Science and Technology Project under Contract No. Z251100008125059, and in part by Beijing Academy of Artificial Intelligence (BAAI).

\section*{Impact Statement}

This paper presents work whose goal is to advance the field of Machine
Learning. There are many potential societal consequences of our work, none
which we feel must be specifically highlighted here.

\bibliography{icml2026}
\bibliographystyle{icml2026}

\newpage
\appendix
\onecolumn


\definecolor{app_blue}{RGB}{0,20,115}

\textcolor{white}{dasdsa}
\vspace*{-2cm} 

\part{\textcolor{blue}{Appendix}}

\section*{\textcolor{blue}{\Large{Contents}}}

\vspace*{-0.4cm} 
\noindent{\color{blue}\rule{\textwidth}{0.4pt}}

\startcontents[sections]

\printcontents[sections]{l}{1}{\setcounter{tocdepth}{3}}

\noindent{\color{blue}\rule{\textwidth}{0.4pt}} 

\newpage

\input{Sections/2.related_works}
\input{Appendices/1.theory.tex}
\input{Appendices/2.loss_crossing.tex}
\input{Appendices/3.experiments.tex}


\end{document}

%% file: Sections/1.intro.tex
\section{Introduction}

Gating mechanisms have a long history in neural network architectures as an effective means to improve optimization and expressivity \cite{hochreiter1997long,dey2017gate}. In modern open-source large language models, Gated Linear Units (GLU) and their variants, most notably SwiGLU, are widely adopted in feed-forward network (FFN) blocks due to their superior empirical performance \cite{shazeer2020glu,yang2025qwen3,wang2025abkd}. The GLU structure is characterized by a remarkably straightforward form:
\begin{equation*}
    \mathrm{GLU}_{\phi}(\mathbf{x}) = (\mathbf{P} \mathbf{x}) \odot \phi(\mathbf{W} \mathbf{x}),
\end{equation*}
where $\mathbf{W}, \mathbf{P} \in \mathbb{R}^{m \times d}$ are learnable weight matrices, $\phi$ is the activation function and $\odot$ denotes the Hadamard multiplication.

Empirically, even in simple two-layer network settings, introducing GLU leads to consistent performance improvements over non-gated counterparts (see Fig.\ref{fig:loss-curves}). Similar gains have also been observed when GLU-style gating is applied to attention mechanism \cite{wang2025glu}. These observations indicate that the advantages of GLU are not confined to a particular architectural setting, suggesting a fundamental benefit inherent to the gating structure itself. However, the theoretical reasons behind the advantages of GLU variants remain unclear.

We aim to systematically investigate the origin of these gains. Specifically, we must distinguish between two factors. First, GLU might offer more efficient fitting on training data. Second, it might be able to better reduce the gap between training and evaluation loss. To this end, we formalize our study by decomposing the population loss $\mathcal{L}_{\mathcal{D}}(f_{\btheta})$ into \textit{training error} and \textit{generalization gap}:
\begin{equation*}
    \underbrace{\mathcal{L}_{\mathcal{D}}(f_{\btheta})}_{\text{generalization error}}
    = \underbrace{\mathcal{L}_S(f_{\btheta})}_{\text{training error}}
    + 
    \underbrace{
        \left(
            \mathcal{L}_{\mathcal{D}}(f_{\btheta})
            - \mathcal{L}_S(f_{\btheta})
        \right)
    }_{\text{generalization gap}}.
\end{equation*}
Here $\mathcal{D}$ represents the underlying data distribution, $S$ is the training set sampled from $\mathcal{D}$, and $f_{\btheta}$ is the model parameterized by $\btheta$. Intuitively, the training error term captures how well the model fits the training data, whereas the generalization gap term describes how well the model learns the inherent structure from the training data. This decomposition naturally leads to two questions: \textit{(1) how do GLU variants affect optimization behavior during training?} And \textit{(2) do they influence generalization gap?}. 

In short, we find that \textbf{(1): GLU structure accelerates the convergence of training error; (2): the generalization gap stays roughly the same as the non-GLU variants; (3): (1) and (2) lead to a better overall generalization error of GLU.} 

\begin{figure*}[ht]
  \begin{center}
    \centerline{\includegraphics[width=0.90\linewidth]{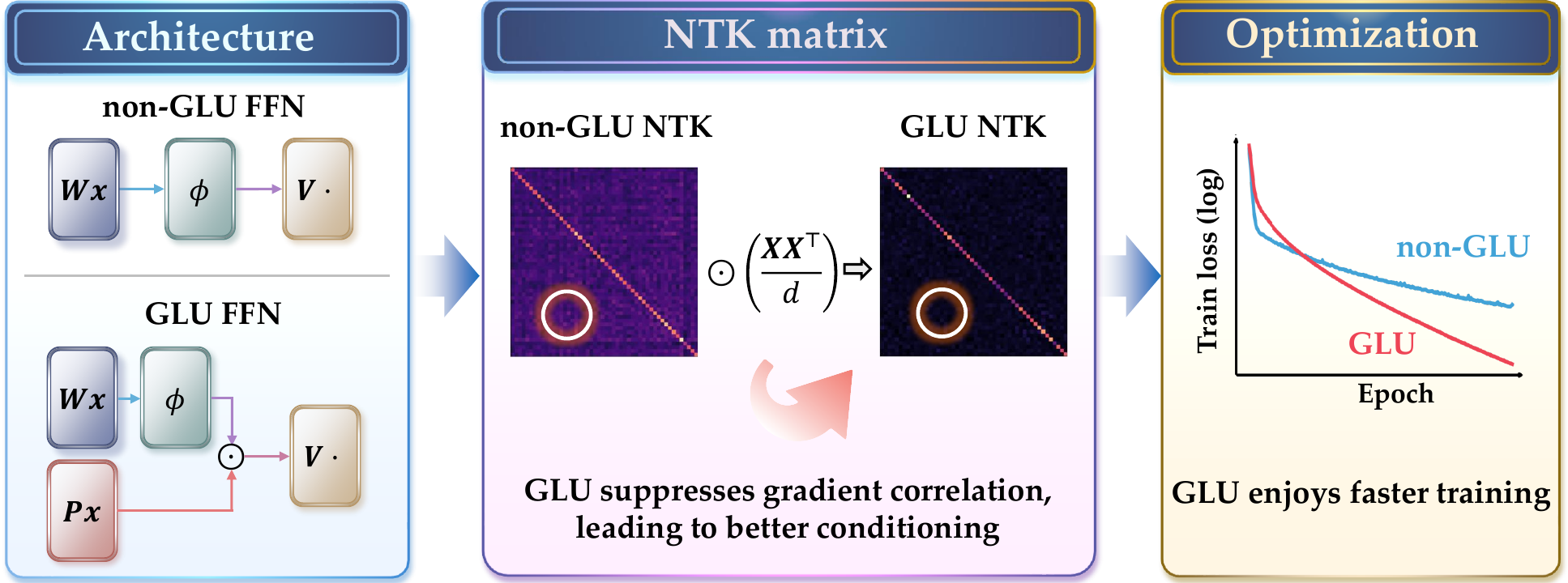}}
    \caption{
      \textbf{Illustration of our theoretical discovery.} We find that (A) by adding the gating structure, (B) the NTK matrix of GLU structure becomes better conditioned, which (C) explains the faster optimization of GLU-based models.
    }
    \label{fig:illustration}
  \end{center}
\end{figure*}

\textbf{For the training error, we find that GLU accelerates the convergence of training process in the Neural Tangent Kernel (NTK) regime}. While directly analyzing neural network optimization is inherently challenging, the NTK framework provides theoretical tractability by characterizing optimization through the spectral properties of the associated kernel matrix. Specifically, our main theoretical finding is that, in the kernel regime, the NTK matrix $\tilde{\bK}$ induced by GLU structure admits an approximate Hadamard-product structure of the form:
\begin{equation*}
    \tilde{\mathbf{K}} \approx \mathbf{K} \odot (\mathbf{X} \mathbf{X}^\top / d),
\end{equation*}
where $\mathbf{K}$ denotes the NTK of the corresponding non-GLU model, and $\bX\in\mathbb{R}^{n\times d}$ is the input data matrix. As is shown in Fig.\ref{fig:illustration}, this leads to a better-conditioned kernel spectrum, characterized by a more contracted eigenvalue distribution (Sec.\ref{sec:optimization}). According to existing results in NTK, this clearly suggests that GLU leads to faster convergent kernel regime \cite{de2024operator, terjek2025mlps}. More interestingly, our result also shares a close connection with the recent work on model gradient angle theory. Specifically, we show that such gating mechanism can be interpreted as enlarging the model gradient angle between $\nabla_{\btheta}z(\bx)$ and $\nabla_{\btheta}z(\bx')$, making samples more separated in gradient feature space. According to \citet{liu2025better}, this yields improved optimization rates for GLU-based models. 

Building on this spectral perspective, we further analyze how the modified NTK spectrum affects the training dynamics over time, providing a formal explanation for the stage-wise convergence behaviors and the loss-crossing phenomenon observed in practice (Sec.\ref{sec:training_dynamics}).

For the generalization gap, it stays roughly the same as the non-GLU variants. Through empirical comparisons of training loss $L_S$ and generalization gap $L_{\mathcal D}-L_S$, we observe that for the same training error, GLU-based and non-GLU models generally have similar generalization gap (Sec.\ref{sec:gengap}). This suggests that the primary advantage of GLU lies mainly in accelerating optimization rather than in reducing the generalization gap. Nonetheless, this is sufficient to reduce the overall generalization error.

In summary, our contributions are threefold:
\begin{itemize}
    \item We provide a theoretical analysis of GLU variants in the kernel regime, showing that GLU structure leads to improved spectral conditioning.
    \item We analyze the resulting optimization dynamics through the NTK spectrum and further explain characteristic training behaviors of GLU-based models, including stage-wise convergence behavior.
    \item We empirically examine the effect of GLU on generalization gap and find that, when training with gradient descent algorithm, GLU does not significantly alter the generalization gap, indicating that its primary advantage lies in optimization.
\end{itemize}

%% file: Sections/3.preliminary.tex
\section{Preliminary}

In this section, we introduce the key concepts used throughout the paper, including Gated Linear Unit (GLU) variants, the neural tangent kernel (NTK), and population loss decomposition.

\subsection{Gated Linear Units (GLU) and its Variants}

Modern large language models (LLMs) widely adopt Gated Linear Unit (GLU), particularly the SwiGLU variant, as the default feedforward block in Transformer architectures \cite{grattafiori2024llama, liu2024deepseek}. For an input $\bx \in \mathbb{R}^d$, the GLU structure is defined as \cite{shazeer2020glu}
\begin{equation} \label{eq:glu}
    \mathrm{GLU}_{\phi}(\bx)
    = (\bP \bx) \odot \phi(\bW \bx),
\end{equation}
where $\bW, \bP \in \mathbb{R}^{m \times d}$ are learnable weight matrices, $\phi(\cdot)$ is pointwise activation function, and $\odot$ denotes elementwise multiplication.

Different choices of $\phi$ yield different GLU variants, including:
\begin{itemize}
    \item \textbf{ReGLU}: $\phi(x) = \mathrm{ReLU}(x) = \max\left\{0, x\right\}$,
    \item \textbf{GEGLU}: $\phi(x) = \mathrm{GELU}(x) = \frac{x}{2}\left[1+\text{erf}\left(\frac{x}{\sqrt{2}}\right)\right]$,
    \item \textbf{SwiGLU}: $\phi(x) = \mathrm{Swish}_1(x) = x \cdot \sigma(x)$.
\end{itemize}
Here $\sigma(\cdot)$ represents the sigmoid function, and $\text{erf}(\cdot)$ stands for the error function.

All GLU variants share the same multiplicative gating structure Eq.\eqref{eq:glu}, which will play a central role in our subsequent analysis.

\subsection{Neural Tangent Kernel (NTK)}
\label{sec:prelim-ntk}

To analyze optimization dynamics, we adopt the neural tangent kernel (NTK) framework \cite{jacot2018neural}. For a network $f_{\btheta}(\bx)$ parameterized by $\btheta$, for any two inputs $\bx, \bx'\in\mathbb{R}^d$, the NTK function is defined as
\begin{equation*}
    K(\bx, \bx')
    = \left\langle
        \nabla_{\btheta} f_{\btheta}(\bx),
        \nabla_{\btheta} f_{\btheta}(\bx')
      \right\rangle .
\end{equation*}

Given training inputs $\{\bx_i\}_{i=1}^n$, the empirical NTK matrix $\bK \in \mathbb{R}^{n \times n}$ is given by
\begin{equation*}
    K_{ij} = K(\bx_i, \bx_j).
\end{equation*}

Let $\lambda_{\max}(\bK)$ and $\lambda_{\min}(\bK)$ denote the largest and smallest eigenvalues of $\bK$. The NTK condition number is defined as
\begin{equation*}
    \kappa(\bK) = \frac{\lambda_{\max}(\bK)}{\lambda_{\min}(\bK)}.
\end{equation*}

A series of recent studies has demonstrated that the conditioning of the NTK, i.e., the condition number, is strongly connected with the convergence rate of gradient descent algorithms\cite{liu2020linearity, de2024operator, liu2025better, terjek2025mlps}. In particular, \citet{liu2025better} summarize that, the worst-case empirical loss $L_{S}$ satisfies
\begin{equation*}
    \mathcal{L}_S(\btheta_t)\leq (1-\kappa^{-1})^t\mathcal{L}_S(\btheta_0),
\end{equation*}
where $\btheta_t$ is the model parameter set at step $t$, $\kappa$ is the NTK condition number. 

Consequently, achieving error $\epsilon$ requires
\begin{equation*}
    N(\epsilon)=\frac{\log(\epsilon / \mathcal{L}_S(\btheta_0))}{\log(1 - \kappa^{-1})}=O\left(\kappa \log (1/\epsilon)\right)
\end{equation*}
iterations, making the NTK condition number the key indicator of optimization speed.

\subsection{Wishart Matrices and the Marchenko-Pastur Law}

In our analysis, we will make use of classical results from random matrix theory concerning Wishart matrices. Let $\bX \in \mathbb{R}^{n \times d}$ be a data matrix whose rows are independent samples with zero mean and identity covariance. The corresponding (scaled) sample covariance matrix
\begin{equation}\label{eq:wishart}
    \mathbf{S} = \frac{1}{d}\,\bX \bX^\top
\end{equation}
is referred to as a Wishart matrix.

When both $n,d \to \infty$ with $n/d \to \gamma \in (0,\infty)$, the empirical spectral distribution of $\mathbf{S}$ converges almost surely to the Marchenko-Pastur distribution \cite{marchenko1967distribution}. The Marchenko-Pastur distribution has density
\begin{equation*}
    \rho_{\mathrm{MP}}(\lambda)
    =
    \frac{\sqrt{(\lambda_+ - \lambda)(\lambda - \lambda_-)}}{2\pi \gamma \lambda}
    \, \mathbf{1}_{[\lambda_-,\,\lambda_+]}(\lambda),
\end{equation*}
where $\lambda_{\pm} = (1 \pm \sqrt{\gamma})^2$.

%% file: Sections/4.optimization.tex
\section{Optimization in Kernel Regime: Better NTK Conditioning Accelerates GLU Convergence} \label{sec:optimization}

In this section, we analyze why GLU variants exhibit faster optimization. Using the NTK framework, we show that GLU's gating mechanism improves this conditioning in a two-layer ReGLU model. Finally, we empirically verify that the same conditioning improvement persists across activations and real datasets.

\subsection{ReGLU NTK has Smaller Condition Number}
\label{sec:optimization-smaller-cond-num}

Having established the connection between NTK conditioning and convergence rate (Sec.\ref{sec:prelim-ntk}), we now analyze how the GLU gating mechanism affects the NTK spectrum. Our goal in this subsection is to intuitively derive a relationship between the NTKs of two-layer ReLU and ReGLU models and to show that the latter yields a substantially smaller condition number.

\paragraph{Problem setup.}
We consider a two-layer neural network with input $\bx \in \mathbb{R}^d$ and hidden width $m$. The non-GLU model is defined as
\begin{equation*}
    z(\bx) = \bV\,\phi(\bW \bx),
\end{equation*}
while the corresponding GLU model is
\begin{equation*}
    z(\bx) = \bV\big[(\bP \bx) \odot \phi(\bW\bx)\big],
\end{equation*}
where $\bW, \bP \in \mathbb{R}^{m \times d}$ and $\bV \in \mathbb{R}^{1 \times m}$ are learnable parameters. All parameters are initialized independently with zero-mean Gaussian distributions:
\begin{equation*}
W_{ij} \sim \mathcal{N}(0, \sigma_w^2),\quad 
P_{ij} \sim \mathcal{N}(0, \sigma_p^2),\quad
V_{ij} \sim \mathcal{N}(0, \sigma_v^2).
\end{equation*}

We note that in practice, to ensure the same amount of parameter, one typically scales down $m$ for GLU implementation. Here $m$ does not appear in the final result of condition number, hence to maintain clarity and align with real world scenario, we do not perform such scaling.

\paragraph{Deriving the NTK matrix.} 
We compute the (expected) NTK for each model by taking expectations over the parameter distribution, according to law of large numbers. For the non-GLU mode,
\begin{equation}\label{eq:lu-kernel}
\begin{aligned}
    K_{ij} =& \mathbb{E}_{\btheta}\langle\nabla_{\btheta}z(\bx_i), \nabla_{\btheta}z(\bx_j)\rangle \\
    =& m\left[\mathbb{E}_{\mathbf{w}}[\phi(\mathbf{w}^{\top}\bx_i)\phi(\mathbf{w}^{\top}\bx_j)] \right.\\
    & + \left.\sigma_v^2\mathbb{E}_{\mathbf{w}}[\phi'(\mathbf{w}^{\top}\bx_i)\phi'(\mathbf{w}^{\top}\bx_j)](\bx_i^{\top}\bx_j)\right],
\end{aligned}
\end{equation}
where $\phi'$ denotes the derivative of $\phi$.

Similarly, for two-layer GLU model, we have\footnote{Unless otherwise specified, quantities with a tilde ($\tilde{\quad}$) refer to the GLU model.}:
\begin{equation}\label{eq:glu-kernel}
\begin{aligned}
    \tilde{K}_{ij} =& m\left[(\sigma_v^2+\sigma_p^2)\mathbb{E}_{\mathbf{w}}[\phi(\mathbf{w}^{\top}\bx_i)\phi(\mathbf{w}^{\top}\bx_j)](\bx_i^{\top}\bx_j) \right.\\
    & + \left.\sigma_v^2\sigma_p^2\mathbb{E}_{\mathbf{w}}[\phi'(\mathbf{w}^{\top}\bx_i)\phi'(\mathbf{w}^{\top}\bx_j)](\bx_i^{\top}\bx_j)^2\right],
\end{aligned}
\end{equation}

Under LeCun initialization ($\sigma_w^2 = \sigma_p^2 = 1/d$ and $\sigma_v^2 = 1/m$), and using $\sigma_v^2 + \sigma_p^2 \approx \sigma_p^2$ for large hidden width $m$, we obtain the approximation
\begin{equation*}
    \tilde{K}_{ij} \approx K_{ij}\left(\sigma_p^2\bx_i^{\top}\bx_j\right)
    = K_{ij}\left(\frac{\bx_i^{\top}\bx_j}{d}\right).
\end{equation*}
Equivalently, in matrix form, 
\begin{equation}\label{eq:two-ntk-matrix-relation}
    \tilde{\bK} \approx \bK\odot \left(\frac{\bX\bX^{\top}}{d}\right).
\end{equation}

The relation in Eq.~\eqref{eq:two-ntk-matrix-relation} reveals the key structural effect of the GLU gate: \textbf{the GLU NTK is obtained by Hadamard-multiplying the non-LU NTK with the data Gram matrix $\bX \bX^\top / d$}. This operation makes the NTK matrix better conditioned, as will be seen in the following theorem.

\begin{theorem}[informal] \label{thm:ntk-condition-number-informal}
    Consider two-layer ReLU and ReGLU models under LeCun initialization. Suppose the inputs $\{\bx_i\}_{i=1}^n$ are i.i.d.\ standard Gaussian vectors, and $d+1<n$. Then for their NTK matrices, consider their limiting spectral distribution, we have that

    1) The largest eigenvalues scale as
    \begin{equation*}
        \lambda_{\max}(\bK)=O\left(\frac{mn}{d}\right), \quad
        \lambda_{\max}(\tilde{\bK})=O\left(\frac{mn}{d^2}\right).
    \end{equation*}
    2) The smallest eigenvalues both scale as $O(m)$, meanwhile
    \begin{equation*}
        \lambda_{\min}(\tilde{\bK})\geq\lambda_{\min}(\bK).
    \end{equation*}
    3) Consequently, the NTK condition numbers scales as
    \begin{equation*}
        \kappa(\bK)=O\left(\frac{n}{d}\right), \quad
        \kappa(\tilde{\bK})=O\left(\frac{n}{d^2}\right).
    \end{equation*}
\end{theorem}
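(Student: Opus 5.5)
The plan is to make the ReLU kernel in Eq.~\eqref{eq:lu-kernel} explicit through the arc-cosine kernel formulas, decompose both $\bK$ and $\tilde{\bK}$ into a low-rank ``spike'' plus a near-diagonal ``bulk'', and read off the extreme eigenvalues from that decomposition.

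\textbf{Explicit kernels.} For $\phi=\mathrm{ReLU}$ with $\bw\sim\mathcal N(0,\bI/d)$, write $\theta_{ij}$ for the angle between $\bx_i$ and $\bx_j$. Then
\[
\mathbb E[\phi(\bw^\top\bx_i)\phi(\bw^\top\bx_j)]=\frac{\|\bx_i\|\|\bx_j\|}{2\pi d}\bigl(\sin\theta_{ij}+(\pi-\theta_{ij})\cos\theta_{ij}\bigr),
\qquad
\mathbb E[\phi'\phi']=\frac{\pi-\theta_{ij}}{2\pi}.
\]
For Gaussian inputs we have $\|\bx_i\|^2=d(1+O(d^{-1/2}))$ and $\cos\theta_{ij}=\bx_i^\top\bx_j/d=O(d^{-1/2})$ off the diagonal. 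I would Taylor-expand both functions of $\rho_{ij}:=\bx_i^\top\bx_j/d$ around $\rho=0$, in the style of El Karoui's kernel random matrix analysis. This gives
\[
\frac{K_{ij}}{m}=c_0\mathbf 1\mathbf 1^\top+c_1\rho_{ij}+c_2\rho_{ij}^2+\dots
\]
with $c_0=\tfrac{1}{2\pi}+\tfrac{1}{4}\cdot\tfrac{\sigma_v^2 d}{1}\cdot$(order-one constant), plus a diagonal correction $\delta\,\bI$ with $\delta=\Theta(1)$. The correction comes from the mismatch between the diagonal value ($\theta=0$) and the expansion evaluated at $\rho_{ii}\approx1$.

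\textbf{Largest eigenvalue.} The constant term $c_0 m\mathbf 1\mathbf 1^\top$ has eigenvalue $c_0 mn$, so I must track the normalization so that it yields the claimed $mn/d$. Under LeCun scaling the first summand in Eq.~\eqref{eq:lu-kernel} carries $\|\bx\|^2/d$. The claimed orders then mean that, in the paper's normalization, the constant mode contributes $\Theta(mn/d)$ for $\bK$. The remaining terms $c_1 m\bX\bX^\top/d$ and the Hadamard square of $\rho$ have operator norm $O(m(1+n/d))$ by Marchenko--Pastur, and these are subdominant.

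For $\tilde{\bK}\approx\bK\odot(\bX\bX^\top/d)$, the rank-one part becomes $c_0 m\,\mathbf 1\mathbf 1^\top\odot\rho=c_0 m\,\bX\bX^\top/d$. Its top eigenvalue is $c_0 m\lambda_+ $ with $\lambda_+=(1+\sqrt{n/d})^2$, rather than $c_0mn$. This is the mechanism: the Hadamard product with the Gram matrix kills the all-ones spike, and combining the scalings gives the extra factor $1/d$ claimed in 1). I would then bound the remaining Hadamard terms using Schur's inequality $\|\mathbf A\odot\mathbf B\|\le\max_i B_{ii}\|\mathbf A\|$ for PSD $\mathbf B$.

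\textbf{Smallest eigenvalue.} Write $\bK=\bK_{\rm low}+\delta m\bI$ with $\bK_{\rm low}\succeq0$ modulo vanishing error. Then $\lambda_{\min}(\bK)\ge\delta m(1-o(1))$. The assumption $d+1<n$ is used here: $\bX\bX^\top/d$ together with $\mathbf 1\mathbf 1^\top$ has rank at most $d+1<n$, so these pieces contribute nothing to $\lambda_{\min}$. That makes $\lambda_{\min}=\Theta(m)$ and determined by the diagonal term.

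For the comparison $\lambda_{\min}(\tilde{\bK})\ge\lambda_{\min}(\bK)$, I would invoke Schur's theorem: for PSD $\bK$ and PSD $\mathbf B=\bX\bX^\top/d$,
\[
\lambda_{\min}(\bK\odot\mathbf B)\ge\lambda_{\min}(\bK)\min_i B_{ii}.
\]
Since $B_{ii}=\|\bx_i\|^2/d\to1$, this gives the inequality in the limit. Part 3) then follows by dividing the bounds from 1) and 2).

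\textbf{Main obstacle.} The hardest step is getting the largest-eigenvalue orders to match the stated scalings exactly. This requires careful bookkeeping of the $\sigma_v^2=1/m$ versus $\sigma_p^2=1/d$ factors, and of which summand of Eq.~\eqref{eq:lu-kernel} dominates. It also requires controlling the nonlinear remainder of the kernel expansion in operator norm, where entrywise $O(d^{-3/2})$ terms summed over $n$ could be non-negligible when $n\gg d$. My first attempt would be to bound the remainder's Frobenius norm, which is $O(n d^{-3/2})$. If that is too crude, I would fall back to El Karoui's concentration results for the $\rho^2$ term.
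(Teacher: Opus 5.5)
\textbf{Gap in part 1), and hence 3).} Most of your architecture is the paper's: the arc-cosine closed forms, and the linearization in $\rho$ into a constant spike, a Gram part and a diagonal correction. You also use Marchenko--Pastur for the Gram part, and the rank-$(d+1)<n$ observation for $\lambda_{\min}(\bK)$, which is the paper's interlacing step.

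The problem is the sentence ``I must track the normalization so that it yields the claimed $mn/d$.'' It assumes the conclusion, and the bookkeeping cannot deliver it.

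\begin{itemize}
\item Under LeCun initialization $\sigma_w^2\|\bx_i\|\|\bx_j\|\approx 1$. So $m\,\mathbb{E}_{\bw}[\phi(\bw^\top\bx_i)\phi(\bw^\top\bx_j)]\approx m/(2\pi)$ off the diagonal, i.e.\ $c_0=1/(2\pi)$ with no factor of $d$.
\item The factor $\sigma_v^2=1/m$ multiplies only the $\phi'$ term. That term vanishes at $\rho=0$ and contributes about $\tfrac14\bX\bX^\top$, whose operator norm is $O(n+d)$. The $\tfrac14$ you placed in $c_0$ belongs to the coefficient of $\rho$.
\end{itemize}

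Hence the ReLU spike is $mn/(2\pi)=\Theta(mn)$. Your own ReGLU computation, $c_0m(1+\sqrt{n/d})^2$, then gives $\Theta(mn/d)$, not $mn/d^2$.

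This is exactly what the paper's appendix proves. Prop.~\ref{prop:largest-eigenvalue} gives $\lambda_1(\bK)=\Theta(mn)$ and $\lambda_1(\tilde{\bK})=\Theta(mn/d)$. Prop.~\ref{prop:smallest-eigenvalue} gives both smallest eigenvalues as $\Theta(m+d)$. The absolute orders in the informal statement come from the sketch writing $\beta\lambda_1(d\mathbf{1}\mathbf{1}^\top)$ as $O(mn/d)$, although it equals $\beta dn=mn/(2\pi)$.

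What can actually be proved is the relative factor of $d$: $\kappa(\bK)=\Theta(mn/(m+d))$ versus $\kappa(\tilde{\bK})=\Theta(mn/(d(m+d)))$. For $m\gtrsim d$ this is $\Theta(n)$ versus $\Theta(n/d)$. You should state and prove that rather than reverse-engineer a normalization. Relatedly, your diagonal correction is $\gamma d\approx(\tfrac14-\tfrac1{2\pi})m+\tfrac d4$, not $\delta m$; the $d/4$ coming from the first-layer term is why $\lambda_{\min}=\Theta(m+d)$.

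\textbf{Other pieces.} The rest of your argument is sound, and two steps are cleaner than the paper's.

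For the upper bound on $\lambda_{\max}(\tilde{\bK})$, Schur's inequality $\|\mathbf{A}\odot\mathbf{B}\|\le\max_iB_{ii}\|\mathbf{A}\|$ replaces the paper's Weyl-plus-row-sum argument. Note, however, that the Hadamard product does not merely kill the spike. Since $(\bX\bX^\top)\odot(\bX\bX^\top)$ has off-diagonal mean $d$, the Gram term regenerates a spike of size about $mn/(4d)$. That is the leading term in the paper's bound and is of the same order as the converted spike, so your order is unaffected.

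For part 2), your bound $\lambda_{\min}(\bK\odot\mathbf{B})\ge\lambda_{\min}(\bK)\min_iB_{ii}$ replaces the paper's route through Weyl, Lemma~\ref{lem:hadamard-lsd} and $\bD\approx d\bI$. It even applies to the exact kernels. From Eqs.~\eqref{eq:lu-kernel}--\eqref{eq:glu-kernel}, $\tilde{\bK}=\bK\odot(\bX\bX^\top/d)+\bSigma\odot\bX\bX^\top$ with $\Sigma_{ij}=\mathbb{E}_{\bw}[\phi(\bw^\top\bx_i)\phi(\bw^\top\bx_j)]$ positive semidefinite. So neither linearization nor dropping $\sigma_v^2$ is needed. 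The cost is that you only get $\lambda_{\min}(\tilde{\bK})\ge(1-o(1))\lambda_{\min}(\bK)$, since $\min_i\|\bx_i\|^2/d<1$. The paper's explicit approximate formulas instead show a strictly positive gap of order $d$.

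Finally, the $\rho^2$ remainder you flag as the main obstacle is simply dropped in the paper for the ReLU kernel (Prop.~\ref{prop:ntk-approximation}). You are not missing a step there relative to the paper.
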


This theorem reveals that the NTK spectrum of ReGLU is more ``contracted'' than ReLU: ReGLU model has smaller $\lambda_{\max}$ and larger $\lambda_{\min}$. This immediately leads to a better conditioned NTK matrix, which thereby accelerates the training convergence.

\subsection{Proof Sketch of Theorem \ref{thm:ntk-condition-number-informal}}

Below we provide a proof sketch for Thm.\ref{thm:ntk-condition-number-informal}. The proof is completed in two steps: we first rewrite the NTK matrix as a sum of different components, then analyze their condition numbers. 

We note that the argument below is simplified and aimed at conveying the main idea. The corresponding rigorous proof and precise asymptotics are provided in App.\ref{app:theory}.

\textbf{1) Rewriting the NTK matrix.}

Since we are considering ReLU activation, we can obtain the analytic form of (\ref{eq:lu-kernel}) and (\ref{eq:glu-kernel}) using results from arc-cosine kernel \cite{cho2010large}. 

For ReLU kernel (\ref{eq:lu-kernel}), using LeCun initialization, we have:
\begin{equation*}
\begin{aligned}
    & K_{ij} = \left(\frac{1}{4}+\frac{\arcsin(\rho_{ij})}{2\pi}\right)\rho_{ij}\|\bx_i\|\|\bx_j\| \\
    & + \frac{m\|\bx_i\|\|\bx_j\|}{2\pi d}\left(\sqrt{1-\rho_{ij}^2}+(\pi-\arccos(\rho_{ij}))\rho_{ij}\right),
\end{aligned}
\end{equation*}
where $\rho_{ij} = \frac{\bx_i^{\top}\bx_j}{\|\bx_i\|\|\bx_j\|}$ is the cosine similarity between samples $\bx_i$ and $\bx_j$.

Furthermore, since $\rho_{ij}$ is relatively small when $i\neq j$ for high dimensional inputs, we can well use Taylor expansion and keeping the first order of $\rho_{ij}$:
\begin{equation*}
\begin{aligned}
    K_{ij} \approx
    \left\{
    \begin{aligned}
        & \left(\frac{m}{2d}+\frac{1}{2}\right)\|\bx_i\|^2, & i=j; \\
        & \left[\frac{m}{2\pi d}+\left(\frac{1}{4}+\frac{m}{4d}\right)\rho_{ij}\right]\|\bx_i\|\|\bx_j\|, & i\neq j.
    \end{aligned}
    \right.
\end{aligned}
\end{equation*}
Equivalently, we have the following matrix form:
\begin{equation}\label{eq:relu-ntk}
    \bK = \underbrace{\alpha\bX\bX^{\top}}_{\text{data Gram matrix}} + \underbrace{\beta\br\br^{\top}}_{\text{rank-1 update}} + \underbrace{\gamma\bD}_{\text{diagonal rectification}},
\end{equation}
with coefficients $\alpha = \frac{1}{4}+\frac{m}{4d}, \beta = \frac{m}{2\pi d}$ and $\gamma=\frac{1}{4}+\frac{m}{4d}-\frac{m}{2\pi d}$, all of which are positive with order $O(m/d)$. Here $\br\in\mathbb{R}^n$ is a vector with its $i$-th element $r_i=\|\bx_i\|$, $\bD$ is a diagonal matrix defined as $\bD=\text{diag}\left\{r_1^2,\dots, r_n^2\right\} = \text{diag}(\bX\bX^{\top})$.

For ReGLU model, substituting (\ref{eq:relu-ntk}) to (\ref{eq:two-ntk-matrix-relation}), we obtain that
\begin{equation}\label{eq:reglu-ntk}
    \tilde{\bK}=\frac{\alpha}{d}(\bX\bX^{\top})\odot(\bX\bX^{\top}) + \frac{\beta}{d}(\br\br^{\top})\odot(\bX\bX^{\top})+\frac{\gamma}{d}\bD^2.
\end{equation}

\textbf{2) Analyzing the condition number under Gaussian input assumption.}

Note that, we assume the inputs are i.i.d. standard Gaussian vectors, which give rise to the following three observations:
\begin{enumerate}
    \item According to the definition \eqref{eq:wishart}, the data Gram matrix in (\ref{eq:relu-ntk}) is actually a Wishart matrix, which is defined as $\bW = \frac{1}{d}\bX\bX^{\top}$ with $X_{ij}\stackrel{\text{i.i.d.}}{\sim}\mathcal{N}(0, 1), \bX\in\mathbb{R}^{n\times d}$;
    \item $\bD$ is a diagonal matrix with $D_{ii}=\|\bx_i\|^2$. Since we are studying the limiting spectral distribution, we have that $\mu(\bD) = \mu(d\bI)$, where $\mu(\cdot)=\mathbb{E}\hat{\mu}(\cdot)$ stands for the expected empirical eigenvalue distribution;
    \item Similarly, $\mu(\br\br^{\top})= \mu(d\mathbf{1}\mathbf{1}^{\top})$. 
\end{enumerate}

Hence we may further write the limiting spectral distribution of the NTK matrices as:
\begin{equation*}
\begin{aligned}
    \mu(\bK) \approx&\; \mu(\alpha \bX\bX^{\top} + \beta d\mathbf{1}\mathbf{1}^{\top} + \gamma d\bI) \qquad\textbf{(\underline{non-GLU})}, \\
    \mu(\tilde{\bK}) \approx&\; \mu\left(\frac{\alpha}{d}(\bX\bX^{\top})\odot(\bX\bX^{\top}) + \beta \bX\bX^{\top} + \gamma d\bI\right) \\
    &\qquad\qquad\qquad\qquad\qquad\qquad\qquad \textbf{(\underline{GLU})}.
\end{aligned}
\end{equation*}

We now analyze the eigenspectrum of the matrix components respectively. Without loss of generality, we assume the eigenvalues are arranged in descending order: $\lambda_1\geq \dots\geq \lambda_n$. Note that $\lambda_i(\bK) = \lambda_i(\bK-\gamma d\bI) + \gamma d$, hence we focus on the three components: \textbf{the Wishart matrix $\bW=\frac{1}{d}\bX\bX^{\top}$, its self Hadamard product $\bW\odot \bW$, and rank-1 update $\mathbf{1}\mathbf{1}^{\top}$}.

\underline{For Wishart matrix}, its eigenspectrum follows Marchenko-Pastur distribution \cite{marchenko1967distribution}, we have:
\begin{equation*}
\begin{aligned}
    \lambda_{1}(\bW) =&\; (1+\sqrt{n/d})^2 \sim O(1+n/d),\\
    \lambda_{n}(\bW) =&\; \left(\max\left\{0, 1-\sqrt{n/d}\right\}\right)^2 \stackrel{(n>d+1)}{=} 0.
\end{aligned}
\end{equation*}

\underline{For the self Hadamard product of Wishart matrix}, $\bW\odot \bW$. The associated kernel mapping function is $f(x)=x^2$. Therefore, we use the canonical result on kernel random matrix established by \citet[Theorem 2.1]{karoui2010spectrum}, which states that, under mild regularity conditions, an inner product kernel random matrix can be approximated as 
\begin{equation*}
\begin{aligned}
    \bK = \left( f(0) + f''(0)\frac{\text{Tr}(\mathbf{\Sigma}^2)}{2d^2} \right)\mathbf{1}\mathbf{1}^\top 
    + f'(0)\frac{\bX\bX^\top}{d}
    + v_d\bI,
\end{aligned}
\end{equation*}
where
\begin{equation*}
    v_d = f\left(\frac{\text{Tr}\left(\mathbf{\Sigma}\right)}{d}\right)
    - f(0)
    - f'(0)\frac{\text{Tr}\left(\mathbf{\Sigma}\right)}{d}.
\end{equation*}
Here $\mathbf{\Sigma}$ is the population convariance matrix. In our setting $\mathbf{\Sigma}=\bI$, which yields, 
\begin{equation*}
    \bW\odot \bW \approx \frac{1}{d}\mathbf{1}\mathbf{1}^{\top} + \bI.
\end{equation*}
Consequently, the extreme eigenvalues of $\bW \odot \bW$ satisfy
\begin{equation*}
    \lambda_{1}(\bW\odot\bW) \approx 1 + \frac{n}{d}, \quad \lambda_n(\bW\odot\bW) \approx 1.
\end{equation*}

\underline{For the rank-1 update matrix} $\mathbf{1}\mathbf{1}^{\top}$, we have
\begin{equation*}
    \lambda_1\left(\mathbf{1}\mathbf{1}^{\top}\right)=n,\; \lambda_2\left(\mathbf{1}\mathbf{1}^{\top}\right)=\dots=\lambda_n\left(\mathbf{1}\mathbf{1}^{\top}\right)=0.
\end{equation*}

Tab.\ref{tab:eigenvalues} summarizes the results.

\begin{table}[t]
  \caption{Scaling of the extreme eigenvalues for the Wishart matrix, 
  its self Hadamard product, and the rank-1 matrix.}
  \label{tab:eigenvalues}
  \vskip 0.1in
  \begin{center}
    \begin{small}
      \begin{sc}
        \begin{tabular}{lcc}
          \toprule
          Matrix & $\lambda_{\max}$ & $\lambda_{\min}$ \\
          \midrule
          $\bX\bX^\top$ & $O(d+n)$ & $0$ \\
          $\frac{1}{d}\,(\bX\bX^\top)\odot(\bX\bX^\top)$ & $O(d+n)$ & $O(d)$ \\
          $d\,\mathbf{1}\mathbf{1}^\top$ & $dn$ & $0$ \\
          \bottomrule
        \end{tabular}
      \end{sc}
    \end{small}
  \end{center}
  \vskip -0.1in
\end{table}

Now we can put them together. For symmetric matrices $\bP$ and $\bQ$, for any $k$, Weyl's inequality states that \citep[Theorem~4.3.1]{horn2012matrix}
\begin{equation*}
    \lambda_k(\bP+\bQ) - \lambda_k(\bP)\in[\lambda_n(\bQ), \lambda_1(\bQ)].
\end{equation*}

\underline{Consider the largest eigenvalue of NTK matrix}. According to Weyl's inequality, we have
\begin{equation*}
\begin{aligned}
    & \beta \lambda_1\left(d\mathbf{1}\mathbf{1}^{\top}\right)+\alpha \lambda_n\left(\bX\bX^{\top}\right) +\gamma d \\
    \leq&\; \lambda_1(\bK) \\
    \leq&\; \beta \lambda_1\left(d\mathbf{1}\mathbf{1}^{\top}\right)+\alpha \lambda_1\left(\bX\bX^{\top}\right) +\gamma d,
\end{aligned}
\end{equation*}
and
\begin{equation*}
\begin{aligned}
    & \alpha \lambda_1\left(\frac{1}{d}(\bX\bX^{\top})\odot(\bX\bX^{\top})\right)+\beta \lambda_n\left(\bX\bX^{\top}\right) + \gamma d \\
    \leq&\; \lambda_1(\tilde{\bK}) \\
    \leq&\; \alpha \lambda_1\left(\frac{1}{d}(\bX\bX^{\top})\odot(\bX\bX^{\top})\right)+\beta \lambda_1\left(\bX\bX^{\top}\right) + \gamma d.
\end{aligned}
\end{equation*}
Recall that $\alpha, \beta, \gamma\sim O(m/d)$. Therefore,
\begin{equation*}
    \lambda_1(\bK)\sim O\left(\frac{mn}{d}\right), \lambda_1(\tilde{\bK})\sim O\left(\frac{mn}{d^2}\right).
\end{equation*}

\underline{Consider the smallest eigenvalue of NTK matrix}. On one hand, rank-1 update does not affect the minimum eigenvalue of Wishart matrix, because according to \citet[Corollary~4.3.9]{horn2012matrix},
\begin{equation*}
    \lambda_n(\alpha \bX\bX^{\top})\leq \lambda_n(\alpha\bX\bX^{\top}+\beta d\mathbf{1}\mathbf{1}^{\top})\leq \lambda_{n-1}(\alpha \bX\bX^{\top}),
\end{equation*}
and $\lambda_n(\alpha \bX\bX^{\top})=\lambda_{n-1}(\alpha\bX\bX^{\top})=0$ since $n>d+1$. Hence $\lambda_n(\bK)=\gamma d\sim O(m)$. On the other hand, by Weyl's inequality,
\begin{equation*}
\begin{aligned}
    \lambda_n(\tilde{\bK}) 
    \geq\; \lambda_n\left(\frac{\alpha}{d}(\bX\bX^{\top})\odot(\bX\bX^{\top})\right) +\gamma d
    \geq\; \lambda_n(\bK).
\end{aligned}
\end{equation*}
Therefore, both $\lambda_n(\bK)$ and $\lambda_n(\tilde{\bK})$ scale by $\sim O(m)$, and $\lambda_n(\tilde{\bK})>\lambda_n(\bK)$.

Since $\kappa=\frac{\lambda_1}{\lambda_n}$, we immediately obtain the final result.

\subsection{Experimental Verification}

To verify the correctness of our spectral approximations, we compare the estimated NTK condition numbers and extreme eigenvalues derived from Prop.\ref{prop:largest-eigenvalue} and Prop.\ref{prop:smallest-eigenvalue} to numerical results. 
 
Specifically, for numerical results, we construct the NTK matrices given in Prop.\ref{prop:ntk-approximation} for both ReLU and ReGLU models, and compute their condition numbers by numerically evaluating the extreme eigenvalues using a symmetric eigensolver. For theoretical estimation, we use the approximation expressions of largest eigenvalues (Prop.\ref{prop:largest-eigenvalue}) and smallest eigenvalues (Prop.\ref{prop:smallest-eigenvalue}) given in appendix. Note that for the largest eigenvalue of ReGLU model in Prop.\ref{prop:largest-eigenvalue}, we use the lower bound, which is itself a good approximation of the true value with negligible high order error terms.

As shown in Fig.\ref{fig:condition-number-correctness}, the theoretical estimates closely match the numerically computed condition numbers across input dimensions, validating the correctness of our approximation.

\begin{figure}[ht]
  \begin{center}
    \begin{subfigure}{0.98\columnwidth}
      \centering
      \includegraphics[width=\linewidth]{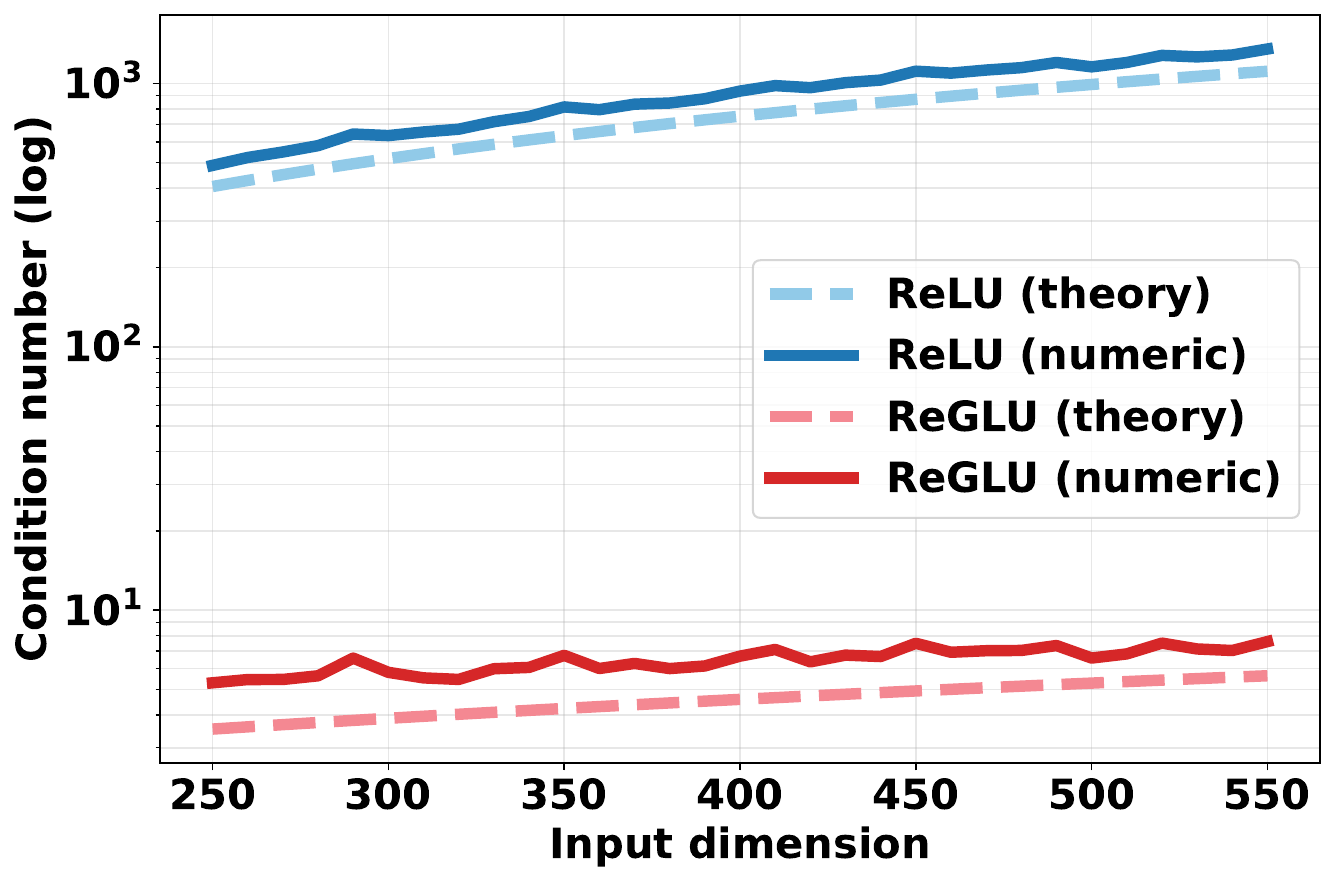}
      \vspace{-0.1in}
      \centerline{(a)}
    \end{subfigure}

    \vspace{0.1in}

    \begin{subfigure}{0.48\columnwidth}
      \centering
      \includegraphics[width=\linewidth]{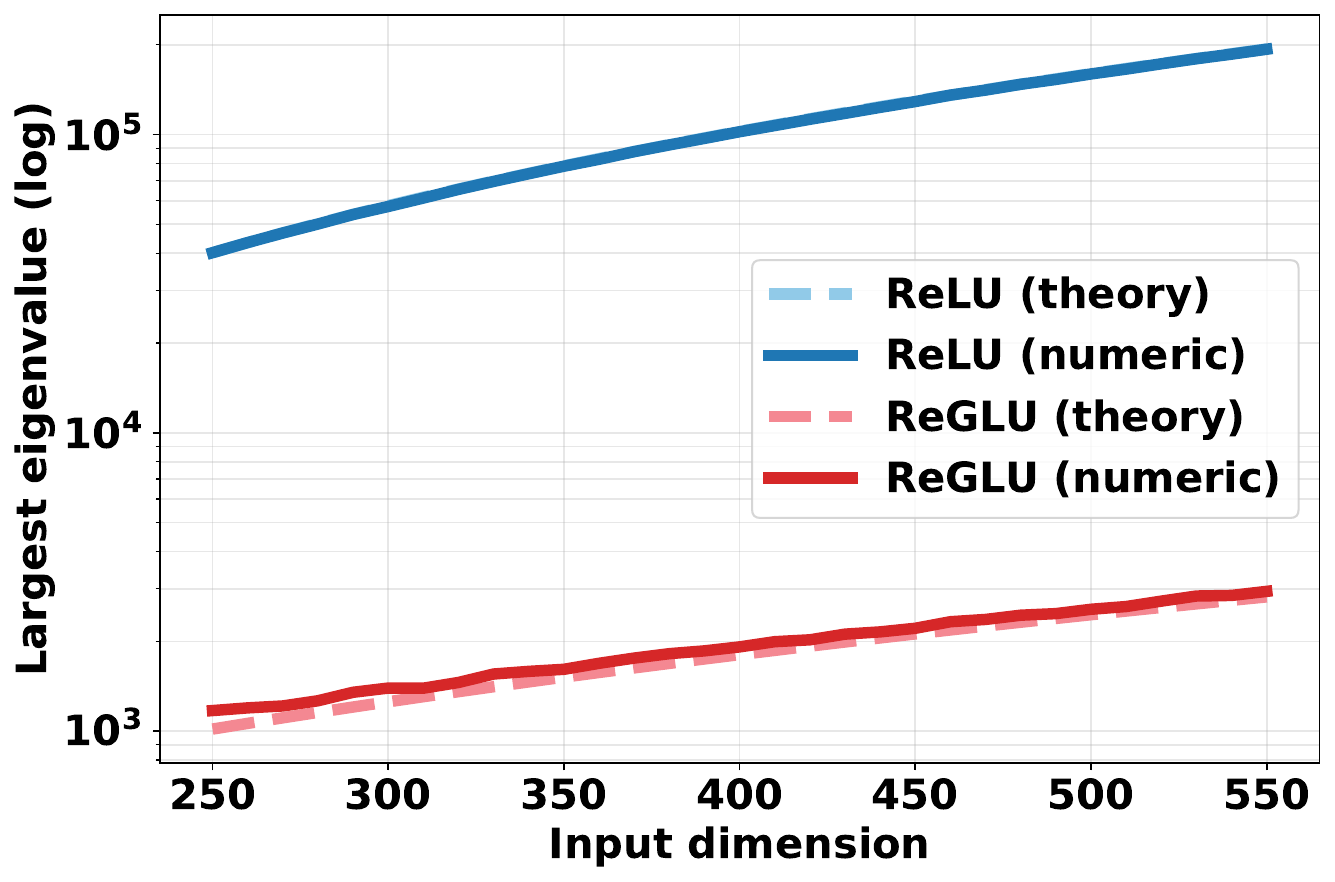}
      \vspace{-0.1in}
      \centerline{(b)}
    \end{subfigure}
    \begin{subfigure}{0.48\columnwidth}
      \centering
      \includegraphics[width=\linewidth]{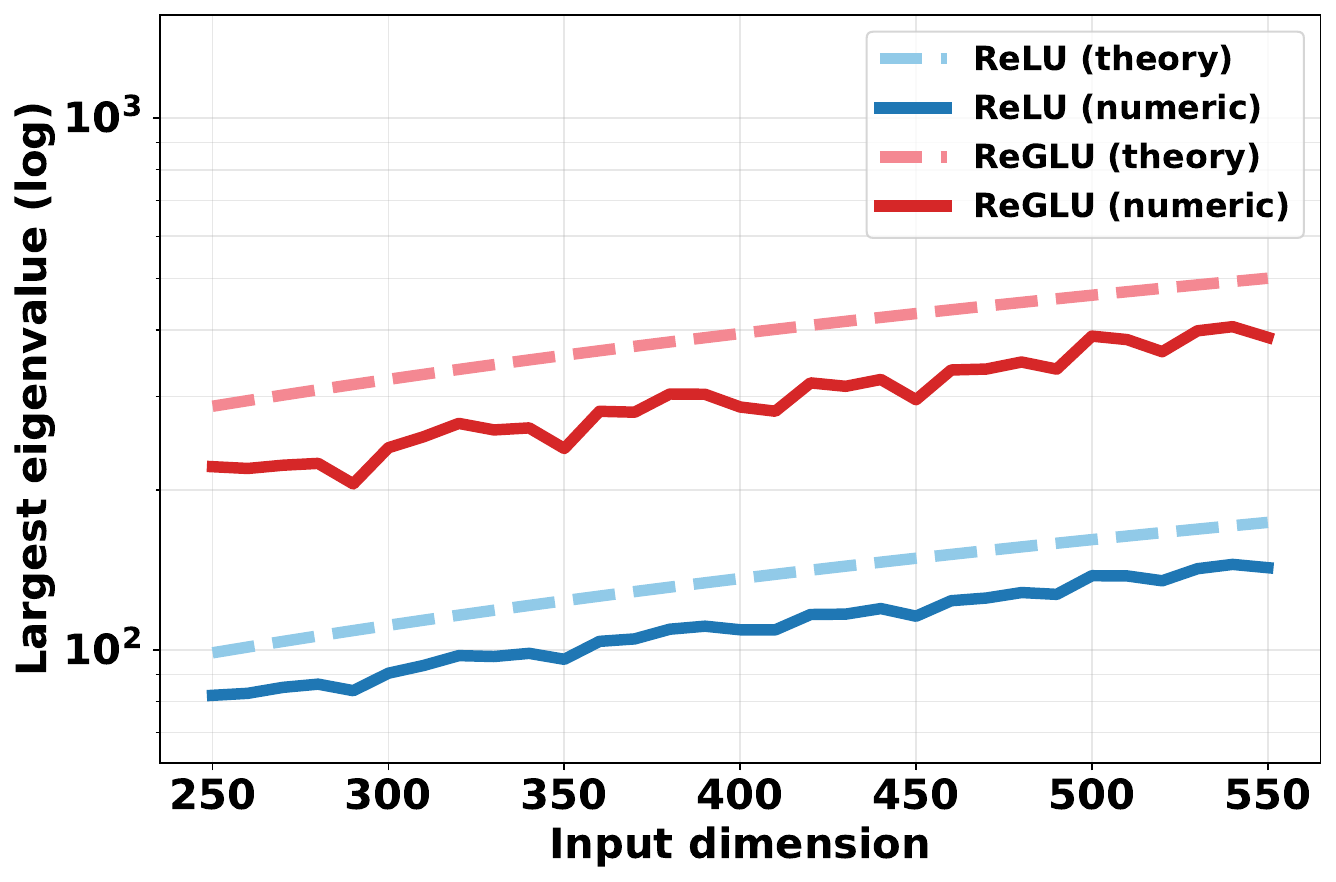}
      \vspace{-0.1in}
      \centerline{(c)}
    \end{subfigure}
    \caption{
      Comparison between theoretical and numerical NTK (a) condition numbers, (b) largest eigenvalues and (c) smallest eigenvalues for ReLU and ReGLU models. The theoretical predictions closely track the numerically computed ones.
    }
    \label{fig:condition-number-correctness}
  \end{center}
\end{figure}

Finally, we also introduce GLU in FFN block of ViT \cite{dosovitskiy2021image} and GPT-2 \cite{radford2019language}. As illustrated in Fig.\ref{fig:condition-number-more-models}, replacing the FFN blocks with GLU variants leads to a decrease in the condition number, providing strong empirical support for our theoretical analysis.

\begin{figure}[ht]
  \begin{center}
    \begin{subfigure}{0.48\columnwidth}
      \centering
      \includegraphics[width=\linewidth]{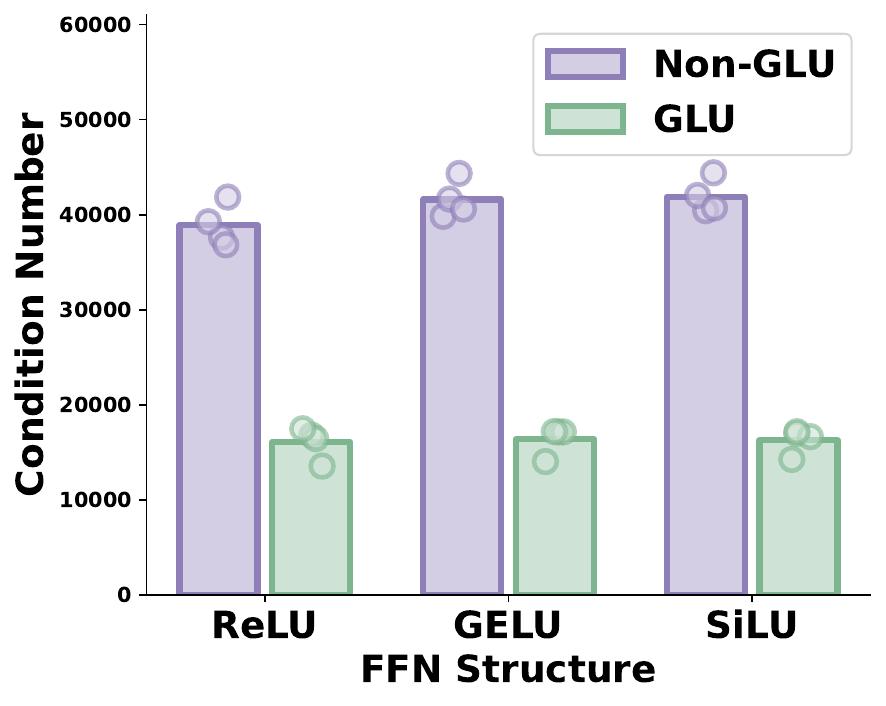}
      \vspace{-0.1in}
      \centerline{(a)}
    \end{subfigure}
    \begin{subfigure}{0.48\columnwidth}
      \centering
      \includegraphics[width=\linewidth]{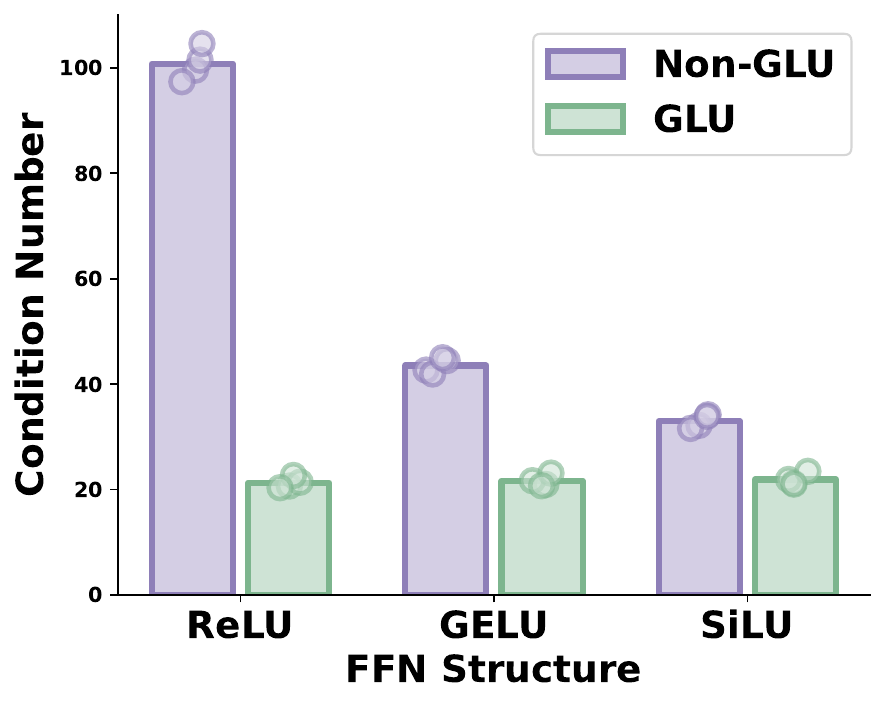}
      \vspace{-0.1in}
      \centerline{(b)}
    \end{subfigure}
    \caption{
       Condition number of (a) ViT and (b) GPT-2 under different activation choices. 
    }
    \label{fig:condition-number-more-models}
  \end{center}
\end{figure}

\subsection{An Intuitive View: GLU Induces Diagonal Dominance in the NTK}

To visualize the structural difference between the NTKs induced by GLU and non-GLU architectures, we plot heatmaps of the corresponding NTK matrices in Fig.\ref{fig:ntk-heatmap}.

\begin{figure}[ht]
  \vskip 0.2in
  \begin{center}
    \centerline{\includegraphics[width=\columnwidth]{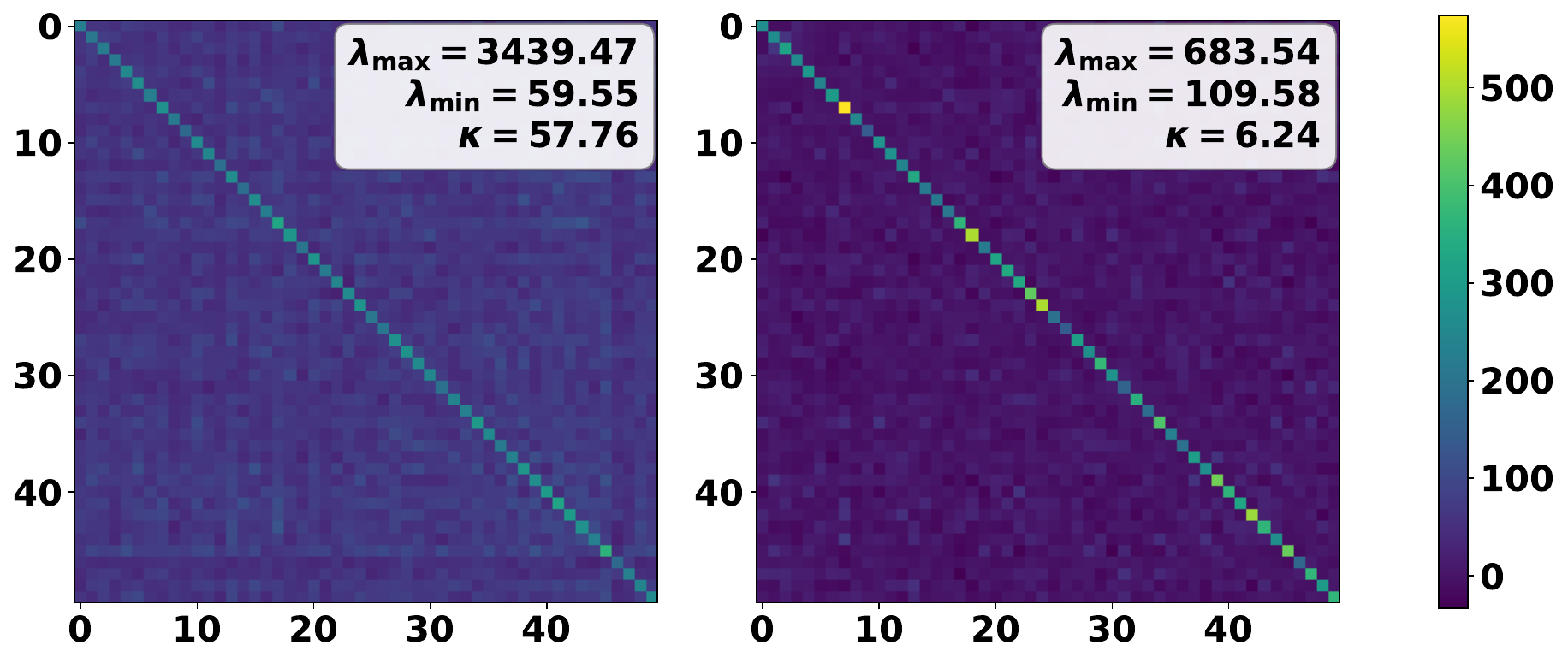}}
    \caption{
      Visualization of the NTK matrices for ReLU (left) and ReGLU (right) models. 
    }
    \label{fig:ntk-heatmap}
  \end{center}
\end{figure}

Eq.\eqref{eq:two-ntk-matrix-relation} reveals that the GLU NTK can be viewed as a Hadamard reweighting of the non-GLU NTK by the data Gram matrix $\bX\bX^\top/d$. As is shown in Fig.\ref{fig:ntk-heatmap}, \textbf{this gating mechanism significantly enhances the diagonal dominance of the kernel: diagonal entries become more pronounced, while off-diagonal entries are suppressed}. 

In fact, by definition, the off-diagonal entries of NTK matrix denotes the model gradient correlation between different samples: $K_{ij} = \langle \nabla_{\btheta}z(\bx_i), \nabla_{\btheta}z(\bx_j)\rangle$. Let $\phi$ denote the model gradient angle, we have $\cos\phi_{ij}=\frac{K_{ij}}{\sqrt{K_{ii}K_{jj}}}$. Then according to Eq.\eqref{eq:two-ntk-matrix-relation}, simple calculation gives:
\begin{equation}
    \cos \tilde{\phi}_{ij} = \cos \phi_{ij} \cdot \cos\alpha_{ij},
\end{equation}
where $\alpha_{ij}$ represents the angle between sample $\bx_i$ and $\bx_j$. Consider similar samples where $\phi \leq \frac{\pi}{2}$, we immediately have that $\tilde{\phi} \geq \phi$. This result suggests that GLU structure can better seperate model gradients of different samples. According to \citet{liu2025better}, better separated model gradients lead to faster convergence, which explains the reason why introducing GLU structure accelerates convergence.

%% file: Sections/5.training_dynamics.tex
\section{Training Dynamics in the Kernel Regime: From NTK Spectrum to Loss Crossing}
\label{sec:training_dynamics}

In previous section, we showed that GLU induces a more diagonally dominant NTK, leading to a smaller maximal eigenvalue and a larger minimal eigenvalue, and hence a more contracted spectrum. In this section, we show how this spectral reshaping leads to different convergence behaviors across training stages and explains the loss-crossing phenomenon in the kernel regime.

\subsection{Direction-wise Dynamics and Geometric Intuition}

We begin by recalling a basic property of gradient descent in the kernel regime. Let $\mathbf{e}_t = z_t(\bX) - \mathbf{Y}$ denote the prediction error vector at iteration $t$, and let $(\lambda_i, \mathbf{v}_i)$ be the eigenpairs of the NTK matrix $\bK$. For the mean squared error (MSE) loss, the error component along each eigendirection evolves as \cite{jacot2018neural,lee2019wide}
\begin{equation}\label{eq:ntk-component}
  \langle \mathbf{e}_{t+1}, \mathbf{v}_i \rangle = (1 - \eta \lambda_i)\langle \mathbf{e}_t, \mathbf{v}_i \rangle,
\end{equation}
where $\eta$ is the learning rate. 

Equation~\eqref{eq:ntk-component} implies that training proceeds in a direction-wise manner: \textbf{components aligned with larger eigenvalues decay rapidly, while components associated with smaller eigenvalues decay more slowly and dominate the late stages of optimization.} As a result, the overall convergence behavior is inherently stage-dependent, with different parts of the NTK spectrum governing different phases of training. 

We can visualize this direction-wise behavior with a two-sample toy model ($n=2$), where the NTK reduces to a $2\times2$ positive semidefinite matrix with two eigendirections. The training trajectory can be visualized together with an ellipse centered at the optimal value $\mathbf{y}$, whose principal axes align with the NTK eigenvectors and whose axis lengths are proportional to $1/\lambda_i$. 

The training tranjectory is plotted in Fig.\ref{fig:trajectory}. As is analyzed in previous section, the ReLU NTK typically exhibits a larger $\lambda_{\max}$, leading to faster convergence along the top eigendirection at early stages of training (Fig.\ref{fig:trajectory}(a, b)). But as training proceeds, convergence becomes increasingly constrained by the direction associated with the smallest eigenvalue. Since ReGLU exhibits a larger $\lambda_{\min}$, it decreases the remaining error more effectively in this late stage, and eventually overtakes ReLU (Fig.\ref{fig:trajectory}(c)). Finally, both models converge toward target, while ReGLU model takes a more ``direct'' trajectory (Fig.\ref{fig:trajectory}(d)).

\begin{figure}[ht]
  \vskip 0.2in
  \begin{center}
    \begin{subfigure}{0.48\columnwidth}
      \centering
      \includegraphics[width=\linewidth]{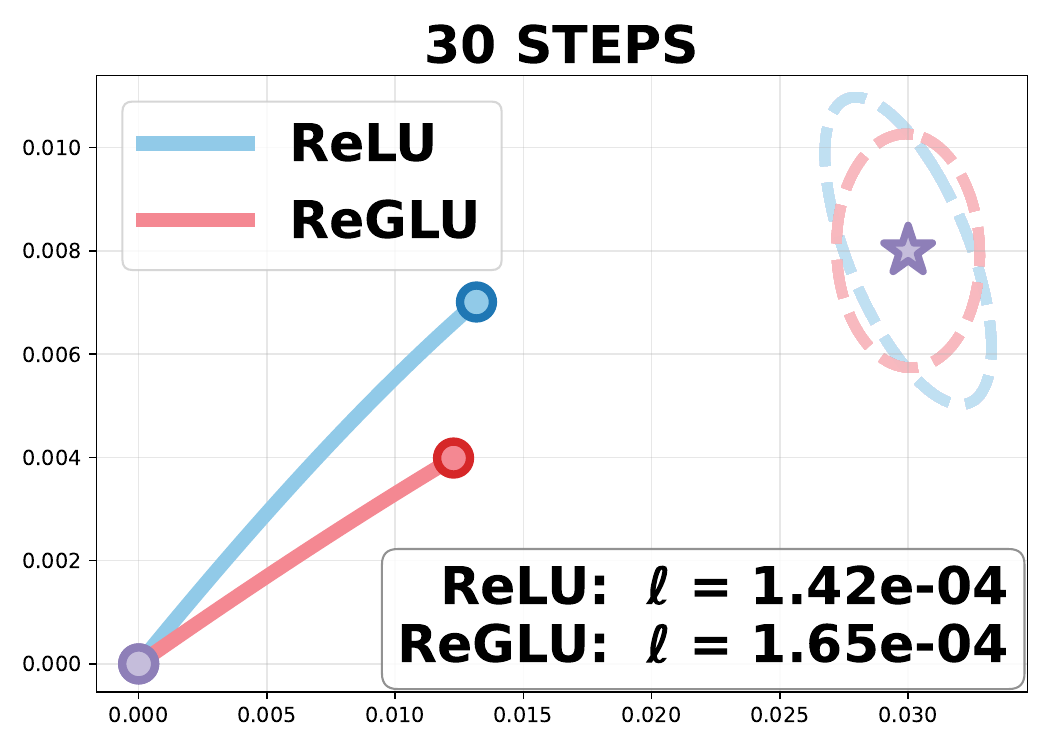}
      \vspace{-0.1in}
      \centerline{(a)}
    \end{subfigure}
    \begin{subfigure}{0.48\columnwidth}
      \centering
      \includegraphics[width=\linewidth]{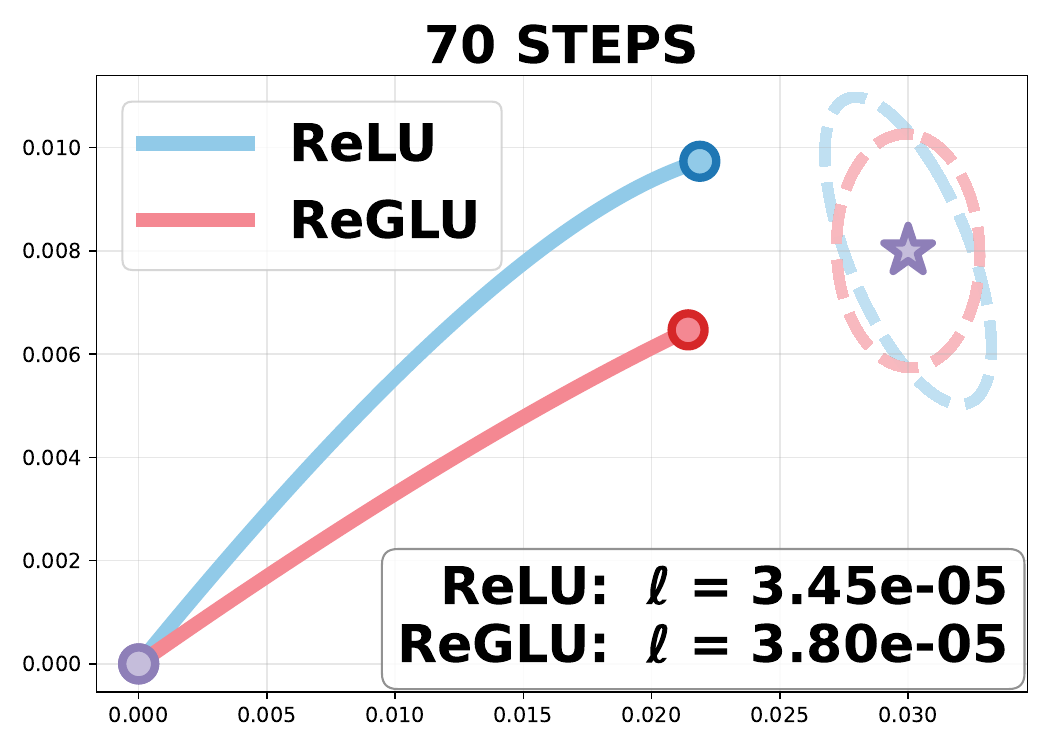}
      \vspace{-0.1in}
      \centerline{(b)}
    \end{subfigure}

    \vspace{0.1in}

    \begin{subfigure}{0.48\columnwidth}
      \centering
      \includegraphics[width=\linewidth]{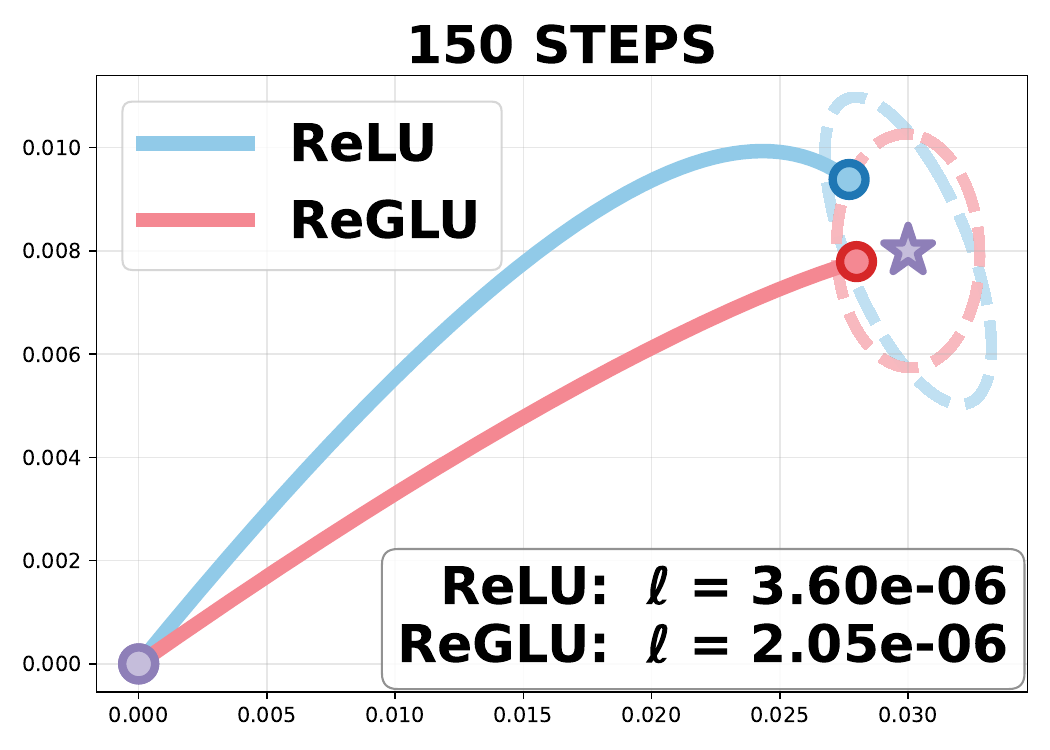}
      \vspace{-0.1in}
      \centerline{(c)}
    \end{subfigure}
    \begin{subfigure}{0.48\columnwidth}
      \centering
      \includegraphics[width=\linewidth]{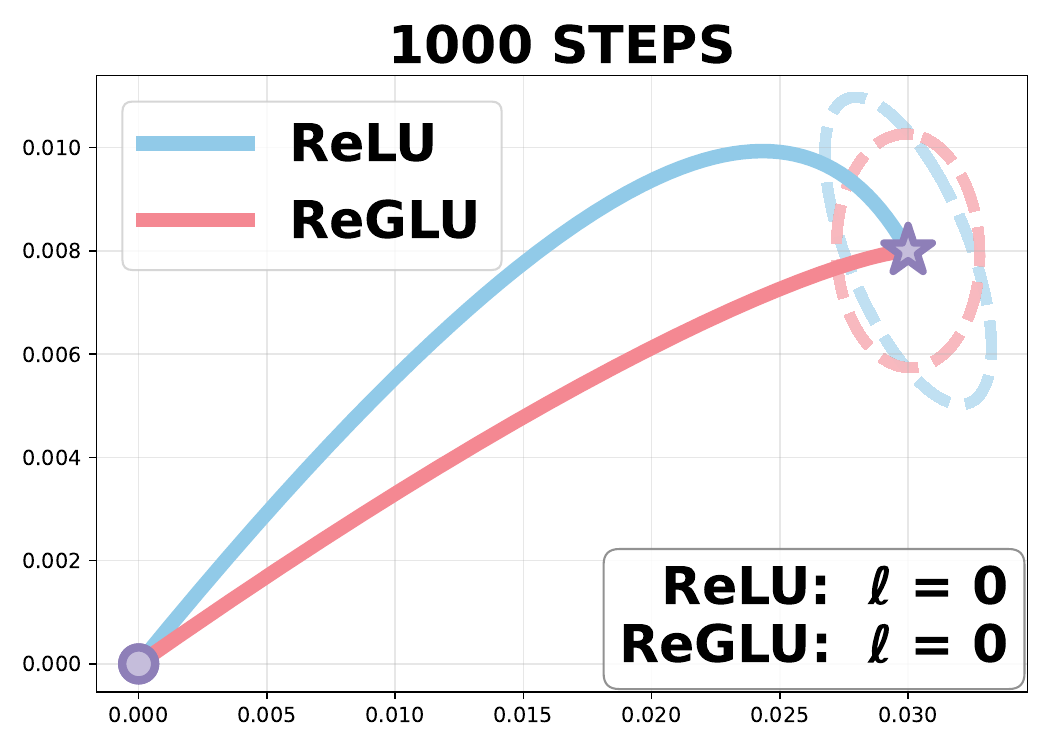}
      \vspace{-0.1in}
      \centerline{(d)}
    \end{subfigure}

    \vspace{0.1in}
    \caption{
      Training trajectories in a two-sample toy model for ReLU and ReGLU.
      (a) 30 steps. 
      (b) 70 steps. 
      (c) 150 steps. 
      (d) 1000 steps.
    }
    \label{fig:trajectory}
  \end{center}
\end{figure}

\subsection{Loss-Crossing Phenomenon}

We now formalize the above intuition and connect it to the loss-crossing phenomenon. Our analysis is in the infinite-width limit ($m\to\infty$), where the model operates in the kernel regime, and the expected training loss admits a closed-form expression.

\begin{proposition} \label{prop:loss-crossing}
    Consider the case when $m\to\infty$, i.e., kernel regime. Denote by $\bK$ the NTK matrix, then the expected loss of the two-layer MLP evolves as
    \begin{equation*}
      \mathbb{E}_{\btheta}[L_k] \propto \textup{Tr}[(\bI-\eta \bK)^{2k}\bK] + \bY^{\top}(\bI-\eta \bK)^{2k}\bY.
    \end{equation*}
\end{proposition}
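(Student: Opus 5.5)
In the kernel regime ($m\to\infty$) the network is linear in its parameters around initialization, so the plan is to reduce gradient descent to a linear recursion for the error and then average over the random initial output. The NTK stays constant during training, so the training outputs obey
\begin{equation*}
  z_{k+1}(\bX) = z_k(\bX) - \eta \bK\,(z_k(\bX)-\bY).
\end{equation*}
This is the same recursion that gives Eq.~\eqref{eq:ntk-component}. It is written for the MSE loss $L=\frac12\|z(\bX)-\bY\|^2$, with constant factors absorbed into the $\propto$.

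\textbf{Step 1: unroll the recursion.} Writing $\be_k=z_k(\bX)-\bY$, we get $\be_{k+1}=(\bI-\eta\bK)\be_k$, and hence $\be_k=(\bI-\eta\bK)^k\be_0$. Since $\bK$ is symmetric, $(\bI-\eta\bK)^k$ is symmetric and commutes with $\bK$. Therefore
\begin{equation*}
  L_k=\tfrac12\,\be_0^\top(\bI-\eta\bK)^{2k}\be_0 .
\end{equation*}

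\textbf{Step 2: take the expectation over initialization.} We have $\be_0=z_0(\bX)-\bY$, and $z_0$ has zero mean because $\bV$ is zero-mean and independent of $\bW,\bP$. The cross term therefore vanishes, leaving
\begin{equation*}
  \mathbb{E}_{\btheta}\big[\be_0^\top \bA\, \be_0\big]=\mathrm{Tr}\big(\bA\,\mathbb{E}[z_0z_0^\top]\big)+\bY^\top\bA\bY,
  \qquad \bA=(\bI-\eta\bK)^{2k}.
\end{equation*}

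\textbf{Step 3 (the main obstacle): identify $\mathbb{E}[z_0z_0^\top]$ with $\bK$, up to a constant.} The NNGP covariance is $\mathbb{E}[z_0(\bx_i)z_0(\bx_j)]=m\sigma_v^2\,\mathbb{E}_{\bw}[\phi(\bw^\top\bx_i)\phi(\bw^\top\bx_j)]$. This is exactly the $\bV$-gradient contribution to Eq.~\eqref{eq:lu-kernel}; for GLU the analogous $\bV$ term sits in Eq.~\eqref{eq:glu-kernel}. Under the paper's scaling this term is proportional to the first summand of $\bK$, but not to all of $\bK$, since the first-layer term carries the extra $\phi'$ factor.

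To close the gap I would use one of two routes:
\begin{itemize}
  \item \emph{ReLU homogeneity.} For ReLU the $\bW$-gradient term equals the NNGP term at the level of the arc-cosine kernels, because $\phi(u)=u\,\phi'(u)$; under the relevant parameterization this makes $\mathbb{E}[z_0z_0^\top]=c\bK$.
  \item \emph{Approximation, as in Eq.~\eqref{eq:relu-ntk}.} Use the leading-order form $\bK\approx\alpha\bX\bX^\top+\beta\br\br^\top+\gamma\bD$ and note that the covariance has the same structure up to a scalar.
\end{itemize}

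Neither route is exact in general, and for GLU each would need to be checked separately. I would therefore state the result as the NTK parameterization convention $\mathbb{E}[z_0z_0^\top]\propto\bK$, which matches the paper's use of ``$\propto$''.

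\textbf{Step 4: substitute.} Plugging $\mathbb{E}[z_0z_0^\top]\propto\bK$ into Step 2 gives
\begin{equation*}
  \mathbb{E}_{\btheta}[L_k]\propto\mathrm{Tr}\big[(\bI-\eta\bK)^{2k}\bK\big]+\bY^\top(\bI-\eta\bK)^{2k}\bY ,
\end{equation*}
which is the claimed expression.
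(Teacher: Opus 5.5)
Your Steps 1, 2 and 4 follow the paper's proof. Step 2 is even a little leaner, since it needs only $\mathbb{E}[z_0]=\mathbf{0}$ rather than the NNGP Gaussianity the paper invokes. The gap is Step 3. You correctly single it out as the crux, but you then close it by declaring $\mathbb{E}[z_0z_0^\top]\propto\bK$ a ``parameterization convention''. That assumes exactly what has to be proved, and it is not a convention you are free to adopt: in the NTK parameterization, for instance, both layers contribute at the same order and the two kernels genuinely differ.

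\textbf{Route (a) is false.} Homogeneity only gives $\mathbb{E}_{\bw}[\phi(\bw^\top\bx_i)\phi(\bw^\top\bx_j)]=\mathbb{E}_{\bw}[(\bw^\top\bx_i)(\bw^\top\bx_j)\phi'(\bw^\top\bx_i)\phi'(\bw^\top\bx_j)]$. This is not $\sigma_w^2(\bx_i^\top\bx_j)\,\mathbb{E}_{\bw}[\phi'(\bw^\top\bx_i)\phi'(\bw^\top\bx_j)]$: by the arc-cosine formulas the two differ by $\frac{\sigma_w^2}{2\pi}\|\bx_i\|\|\bx_j\|\sqrt{1-\rho_{ij}^2}$. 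So they agree only on the diagonal. That difference is precisely what generates the $\beta\br\br^\top$ component, which the first-layer term lacks entirely.

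\textbf{Route (b) fails as stated.} To first order, $\bSigma/\sigma_v^2=\frac{m}{4d}\bX\bX^\top+\frac{m}{2\pi d}\br\br^\top+\bigl(\frac{m}{4d}-\frac{m}{2\pi d}\bigr)\bD$. By contrast, $\bK$ also contains the first-layer piece $\frac14(\bX\bX^\top+\bD)$, so the two are not scalar multiples.

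The missing idea is that the hypothesis $m\to\infty$ under LeCun scaling is exactly what closes Step 3; this is the paper's argument. Write $k_{ij}:=\mathbb{E}_{\bw}[\phi(\bw^\top\bx_i)\phi(\bw^\top\bx_j)]$. Then Eq.~\eqref{eq:lu-kernel} reads $K_{ij}=m\,k_{ij}+m\sigma_v^2\,\mathbb{E}_{\bw}[\phi'(\bw^\top\bx_i)\phi'(\bw^\top\bx_j)]\,\bx_i^\top\bx_j$.
\begin{itemize}
\item The output-layer term is $\Theta(m)$.
\item The hidden-layer term does not grow with $m$, because $\sigma_v^2=1/m$. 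It is exactly the $\frac14$'s above, of size $O(d)$.
\end{itemize}
Hence $\sigma_v^2\bK=\bSigma+O(d/m)$ entrywise, with $\bSigma$ of order one. The NTK is dominated by the output-layer gradient (lazy training), and in the limit $\bSigma=\sigma_v^2\bK$.

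The same bookkeeping covers ReGLU, so no separate check is needed. In Eq.~\eqref{eq:glu-kernel}, the $\bV$-gradient part $m\sigma_p^2\,k_{ij}\,\bx_i^\top\bx_j$ is $\Theta(m)$. It equals $\tilde{\Sigma}_{ij}/\sigma_v^2$, where $\tilde{\Sigma}_{ij}=m\sigma_v^2\sigma_p^2\,k_{ij}\,\bx_i^\top\bx_j$ is the GLU NNGP covariance. The $\bP$- and $\bW$-gradient parts are only $O(d)$.

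One further caveat, which your Step 4 shares with the last line of the paper's proof: the constant $\sigma_v^2$ in $\bSigma\approx\sigma_v^2\bK$ multiplies only the trace term. It therefore cannot literally be absorbed into $\propto$. With your normalization, what one actually obtains is $\mathbb{E}_{\btheta}[L_k]=\frac12\bigl(\sigma_v^2\,\mathrm{Tr}[(\bI-\eta\bK)^{2k}\bK]+\bY^\top(\bI-\eta\bK)^{2k}\bY\bigr)$.
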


The above expression captures the expected loss evolution of our two-layer model in the kernel regime. As a corollary, we can provide a finer-grained analysis of training dynamics of the two-layer ReLU/ReGLU MLP models.

\begin{corollary} \label{cor:loss-crossing}
    Consider two-layer ReLU/ReGLU MLP models with gaussian inputs $\bx_i\sim \mathcal{N}(\mathbf{0}, \bI)\in\mathbb{R}^d$, we have that:

    1) At the early stage when $(\eta k)$ is small, as long as $\bY^{\top}(\bK-\tilde{\bK})\bY\geq 0, d\geq 5$ and $n\geq 300$, it holds that $\mathbb{E}_{\btheta}[L_k-\tilde{L}_k]<0$. That is, ReLU model converges faster than ReGLU model.

    2) At the late stage when the minimum eigenvalue $\lambda_{\min}$ dominates the training process, for sufficiently large $k$, it holds that $\mathbb{E}_{\btheta}[L_k-\tilde{L}_k]>0$. That is, ReGLU model takes over and converges faster than ReLU model.
\end{corollary}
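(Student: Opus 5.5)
The plan is to start from Prop.~\ref{prop:loss-crossing} applied to both models. Under the normalization in that proposition, the loss difference is
\begin{equation*}
\Delta_k := \mathbb{E}_{\btheta}[L_k-\tilde{L}_k] \propto \textup{Tr}\big[(\bI-\eta\bK)^{2k}\bK\big]-\textup{Tr}\big[(\bI-\eta\tilde{\bK})^{2k}\tilde{\bK}\big] + \bY^{\top}\big[(\bI-\eta\bK)^{2k}-(\bI-\eta\tilde{\bK})^{2k}\big]\bY .
\end{equation*}
I will assume $\eta$ is small enough that $0\le 1-\eta\lambda\le 1$ on both spectra, i.e.\ $\eta\le 1/\lambda_{\max}(\bK)$. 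Then I treat the early and late regimes separately. In both I use the spectral decompositions of $\bK$ and $\tilde{\bK}$ together with the spectral facts established for Thm.~\ref{thm:ntk-condition-number-informal}. These are the decomposition $\bK=\alpha\bX\bX^\top+\beta\br\br^\top+\gamma\bD$ in Eq.~\eqref{eq:relu-ntk}, its GLU counterpart Eq.~\eqref{eq:reglu-ntk}, and the eigenvalue bounds, in particular $\lambda_{\min}(\tilde{\bK})\ge\lambda_{\min}(\bK)$.

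\emph{Early stage.} I will expand to first order in $\eta k$, using $(\bI-\eta\bK)^{2k}=\bI-2\eta k\bK+O((\eta k)^2)$. This gives
\begin{equation*}
\Delta_k \approx \big[\textup{Tr}\bK-\textup{Tr}\tilde{\bK}\big]-2\eta k\big[\textup{Tr}(\bK^2)-\textup{Tr}(\tilde{\bK}^2)\big]-2\eta k\,\bY^\top(\bK-\tilde{\bK})\bY .
\end{equation*}
The last term is nonpositive by the hypothesis $\bY^\top(\bK-\tilde{\bK})\bY\ge0$.

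The traces come from the diagonals. Using the entrywise formulas, $K_{ii}\approx(\tfrac{m}{2d}+\tfrac12)\|\bx_i\|^2$, and by Eq.~\eqref{eq:two-ntk-matrix-relation} $\tilde K_{ii}\approx K_{ii}\|\bx_i\|^2/d$. With $\|\bx_i\|^2\approx d$, the zeroth-order terms nearly cancel, so the leading comparison is governed by the $\eta k$ term, i.e.\ by the Frobenius norms.

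For $\textup{Tr}(\bK^2)=\|\bK\|_F^2$ I compute the expectation over Gaussian inputs explicitly. The off-diagonal entries of $\bK$ contain the mean term $\beta\|\bx_i\|\|\bx_j\|\approx\beta d$, which contributes about $n^2\beta^2d^2$. In $\tilde{\bK}$ this term is multiplied by $\bx_i^\top\bx_j/d$, which has mean zero and variance $1/d$. So $\|\bK\|_F^2-\|\tilde{\bK}\|_F^2\gtrsim n^2\beta^2 d^2(1-c/d)-n(\cdots)$, which is positive once $n$ is large relative to the diagonal contributions. I also need to control the zeroth-order trace difference, including the exact constant $\tfrac12$ versus the $(\tfrac{m}{2d}+\tfrac12)\|\bx\|^4/d$ fluctuations. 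The explicit thresholds $d\ge5$ and $n\ge300$ should emerge from making all these constant-level comparisons strict, which gives $\Delta_k<0$ for small but positive $\eta k$. I expect this step to be the main obstacle. It requires careful fourth-moment computations ($\mathbb{E}\|\bx\|^4=d^2+2d$, $\mathbb{E}(\bx_i^\top\bx_j)^2=d$) and tracking of constants rather than orders. I would first try bounding each piece in expectation and checking the resulting polynomial inequality in $(n,d)$ numerically at the boundary $d=5$, $n=300$.

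\emph{Late stage.} Write $\lambda_n=\lambda_{\min}(\bK)$, $\tilde\lambda_n=\lambda_{\min}(\tilde{\bK})$, and let $\bv_n,\tilde\bv_n$ be the corresponding eigenvectors. As $k\to\infty$, each term is dominated by its smallest eigenvalue:
\begin{equation*}
\mathbb{E}[L_k]\sim(1-\eta\lambda_n)^{2k}\big(\lambda_n+\langle\bY,\bv_n\rangle^2\big),\qquad \mathbb{E}[\tilde L_k]\sim(1-\eta\tilde\lambda_n)^{2k}\big(\tilde\lambda_n+\langle\bY,\tilde\bv_n\rangle^2\big).
\end{equation*}
Thm.~\ref{thm:ntk-condition-number-informal} gives $\tilde\lambda_n\ge\lambda_n$, and this is strict in the setting considered there: $\lambda_n(\bK)=\gamma d$ while $\lambda_n(\tilde{\bK})\ge\gamma d+\alpha\lambda_n(\tfrac1d(\bX\bX^\top)^{\odot2})$, and the latter term is $O(\alpha d)>0$. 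Hence the ratio $\big((1-\eta\tilde\lambda_n)/(1-\eta\lambda_n)\big)^{2k}$ tends to $0$. Since the prefactor $\lambda_n+\langle\bY,\bv_n\rangle^2\ge\lambda_n>0$, the ReLU loss decays strictly more slowly. Therefore $\Delta_k>0$ for all sufficiently large $k$. The only degenerate cases to handle are eigenvalue multiplicity and a vanishing projection; multiplicity just sums projections over the eigenspace, and the trace term already guarantees a positive prefactor regardless of $\bY$.
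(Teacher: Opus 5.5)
Your overall route is the paper's: the same first-order expansion in $\eta k$ split into three terms, and an explicit Gaussian-moment computation of $\textup{Tr}(\bK^2)-\textup{Tr}(\tilde{\bK}^2)$. In that computation the $n(n-1)$ off-diagonal contribution must beat a negative diagonal contribution, which is exactly where the paper's thresholds $d\ge5$, $n\ge300$ come from. In the late stage you also follow the paper: dominance of the smallest eigenvalue together with $\lambda_n(\tilde{\bK})>\lambda_n(\bK)$. You handle that part somewhat more carefully than the paper does (step-size condition, multiplicity, positivity of the prefactor).

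The gap is your treatment of the zeroth-order term $\textup{Tr}\bK-\textup{Tr}\tilde{\bK}$. You say it nearly cancels, so the comparison is governed by the $\eta k$ term, and you expect the thresholds to come from making the first-order term beat it. This runs the wrong way. As $\eta k\to0$ it is the zeroth-order term that dominates, and at $k=0$ the $\bY$-terms vanish identically, so $\Delta_0\propto\textup{Tr}\bK-\textup{Tr}\tilde{\bK}$. The claim $\Delta_k<0$ for small $\eta k$ is therefore true only if this difference is strictly negative. If it had the other sign, no estimate on the Frobenius norms could rescue the statement. Forcing the $2\eta k$ term to dominate would require $\eta k$ to be bounded below, which is outside the early-stage regime.

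The missing step is to compute the sign, and you already have the moment you need. Writing $K_{ii}=c\|\bx_i\|^2$ and $\tilde K_{ii}=K_{ii}\|\bx_i\|^2/d$,
\begin{equation*}
\mathbb{E}\big[\textup{Tr}\bK-\textup{Tr}\tilde{\bK}\big]=cn\,\mathbb{E}\big[\|\bx\|^2-\|\bx\|^4/d\big]=cn\big(d-(d+2)\big)=-2cn<0.
\end{equation*}
So what survives the leading-order cancellation you noticed is $-cn\,\mathrm{Var}(\|\bx\|^2)/d$; the paper, keeping $c=m/(2d)$, gets $-nm/d$. With this, all three terms of the expansion are nonpositive and the first is strictly negative, so no balancing between orders is needed. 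The conditions $d\ge5$ and $n\ge300$ then enter only through $\textup{Tr}(\bK^2)>\textup{Tr}(\tilde{\bK}^2)$, exactly as in the paper.
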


The proofs of Prop.\ref{prop:loss-crossing} and Cor.\ref{cor:loss-crossing} are deferred to App.\ref{app:loss-crossing}. This corollary indicates a two phase convergence behavior, where ReLU model converges faster than ReGLU model at the initial stage, but is overtaken later by the ReGLU model. As shown in Fig.\ref{fig:loss-curves}, this theoretical finding is consistent with empirical observations. 

Furthermore, this behavior is consistently observed across different activation pairs (GELU/GEGLU and SiLU/SwiGLU), as well as on both synthetic Gaussian data and MNIST. We further find that the prominence of loss crossing depends on the learning rate: increasing the learning rate uniformly accelerates convergence across eigendirections, thereby diminishing the early-stage advantage of non-GLU models. This trend is consistent with the NTK dynamics in Eq.\eqref{eq:ntk-component} and supports the spectral interpretation.

\begin{figure}[ht]
  \begin{center}
    \begin{subfigure}{0.48\columnwidth}
      \centering
      \includegraphics[width=\linewidth]{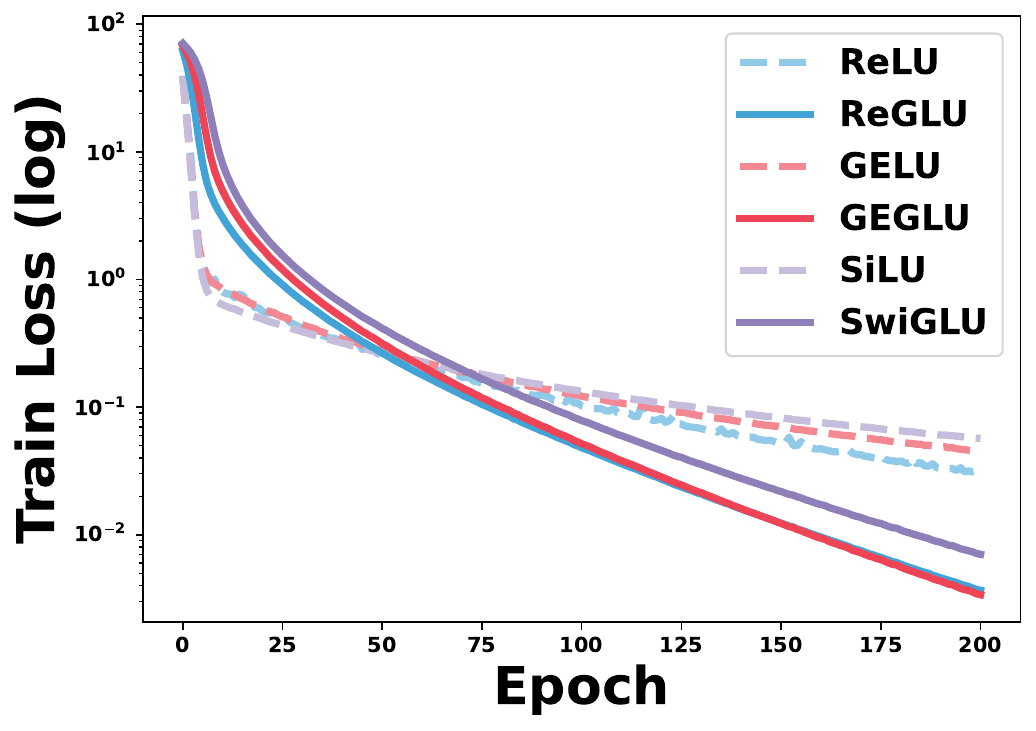}
      \vspace{-0.1in}
      \centerline{(a)}
    \end{subfigure}
    \begin{subfigure}{0.48\columnwidth}
      \centering
      \includegraphics[width=\linewidth]{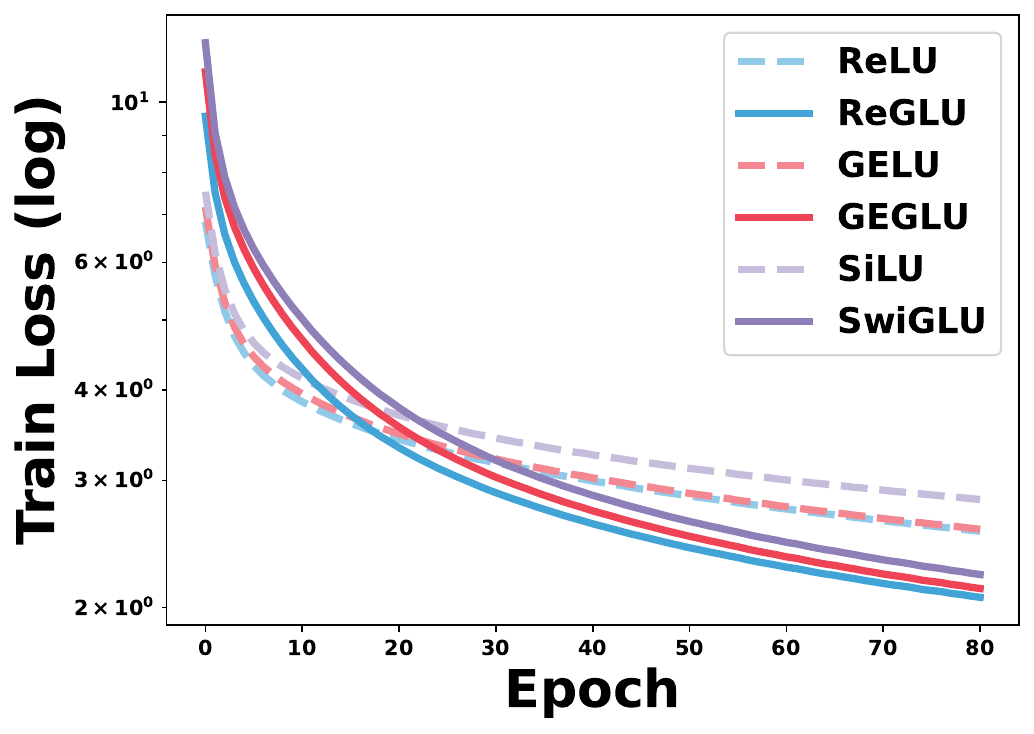}
      \vspace{-0.1in}
      \centerline{(b)}
    \end{subfigure}

    \vspace{0.1in}

    \begin{subfigure}{0.48\columnwidth}
      \centering
      \includegraphics[width=\linewidth]{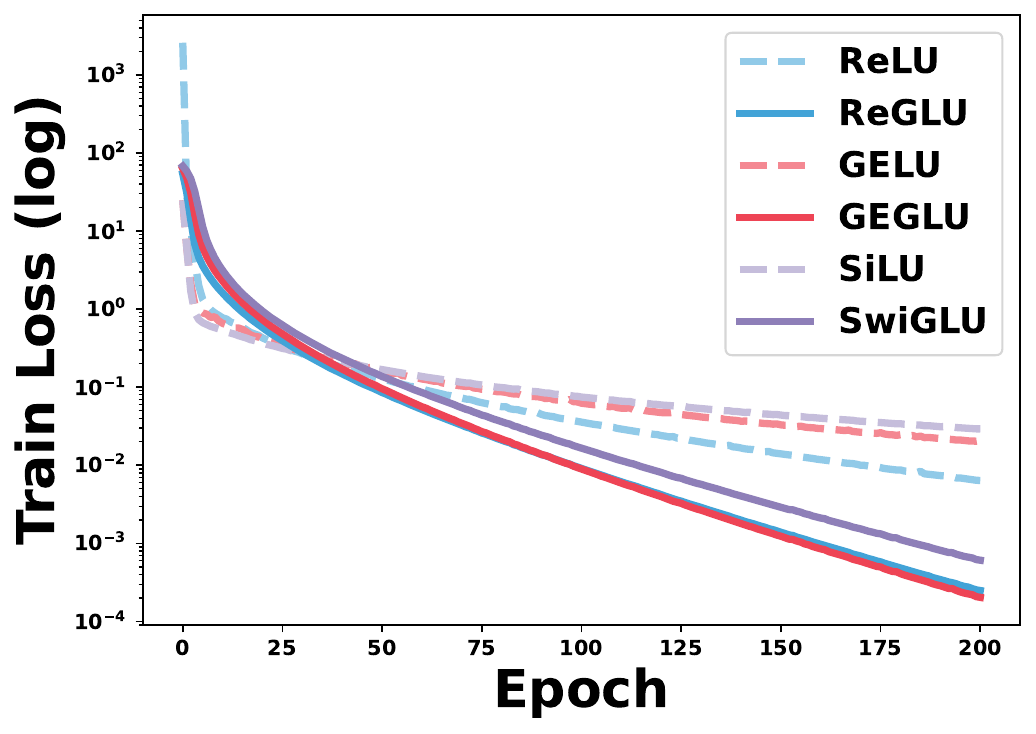}
      \vspace{-0.1in}
      \centerline{(c)}
    \end{subfigure}
    \begin{subfigure}{0.48\columnwidth}
      \centering
      \includegraphics[width=\linewidth]{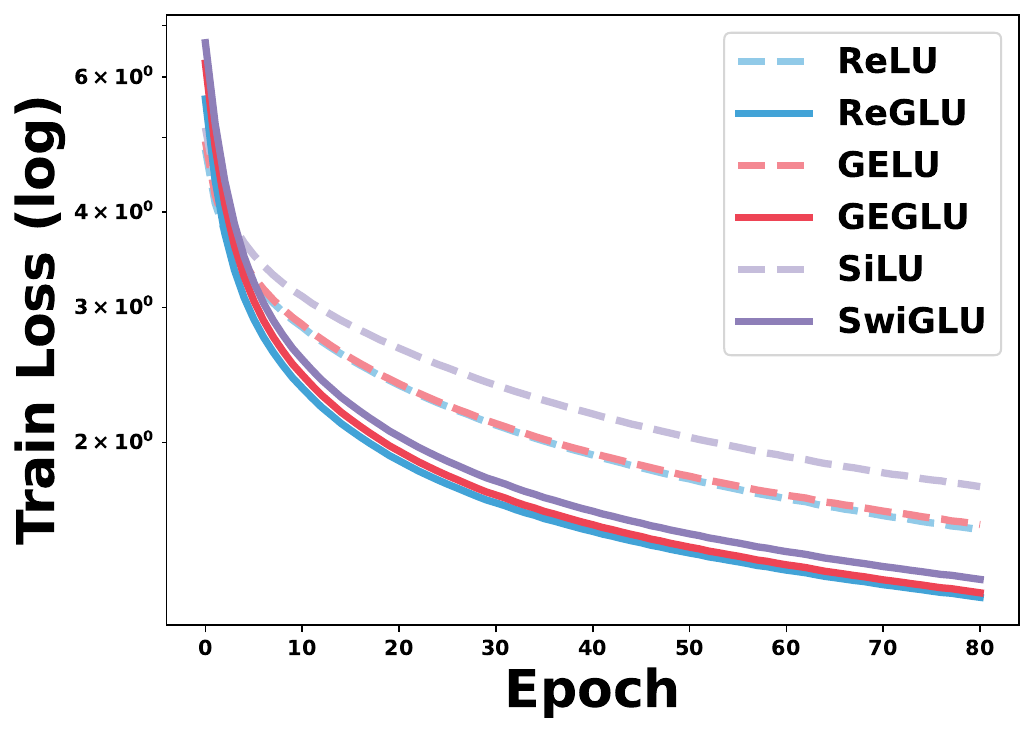}
      \vspace{-0.1in}
      \centerline{(d)}
    \end{subfigure}

    \vspace{0.1in}
    \caption{
      Training loss curves on two-layer MLP models.
      (a) Gaussian data with learning rate $0.005$.
      (b) MNIST with learning rate $1\times 10^{-5}$.
      (c) Gaussian data with learning rate $0.008$.
      (d) MNIST with learning rate $5\times 10^{-5}$.
    }
    \label{fig:loss-curves}
  \end{center}
\end{figure}

%% file: Sections/6.generalization_gap.tex
\section{Generalization Gap Analysis} \label{sec:gengap}

Finally, we examine whether GLU variants help reduce the generalization gap. Intuitively, the multiplicative gating in GLU introduces second-order nonlinearity, which may enhance expressiveness and reduce generalization gap. However, our empirical results suggest that this effect is limited. Fig.\ref{fig:gengap} plots the generalization gap $L_{\mathcal{D}} - L_{S}$ against the training loss $L_{S}$ for models with and without GLU. \textbf{Additional results are provided in App.\ref{app:gengap-experiments}.}

We observe that GLU-based models and non-GLU models exhibit highly overlapping distributions in the $(L_{S},\, L_{\mathcal{D}} - L_{S})$ plane. For a given training loss level, the generalization gap achieved by GLU and non-GLU models is nearly indistinguishable. This suggests that, \textbf{in contrast to its clear impact on optimization dynamics, GLU provides limited advantage in reducing the generalization gap}. In comparison, the choice of optimizer has a different impact on generalization gap. In Fig.\ref{fig:gengap}, we observe that switching from SGD to AdamW shifts the scatter downward, leading to a smaller generalization gap at comparable training losses for both ReLU and ReGLU models. This effect is significantly stronger than the differences induced by the presence or absence of GLU.

Furthermore, we also train GPT-2 on FineWeb-Edu dataset with SiLU and SwiGLU activation, since most modern open-source LLMs use SwiGLU as the FFN architecture. The results are shown in Fig.\ref{fig:gengap}(b). From the figure, we can also see that introducing GLU does not provide apparent advantage in reducing generalization gap.

To rigorously quantify this observation, we further perform a two-sample permutation test based on Energy Distance to compare the joint distributions of $(L_{S}, L_{\mathcal{D}} - L_{S})$ for models with and without GLU. The statistical test fails to find evidence of a significant shift in the generalization gap distribution ($p \ge 0.05$), supporting our observation that GLU's impact on this specific metric is marginal.

\begin{figure}[ht]
  \begin{center}
    \begin{subfigure}{0.475\columnwidth}
      \centering
      \includegraphics[width=\linewidth]{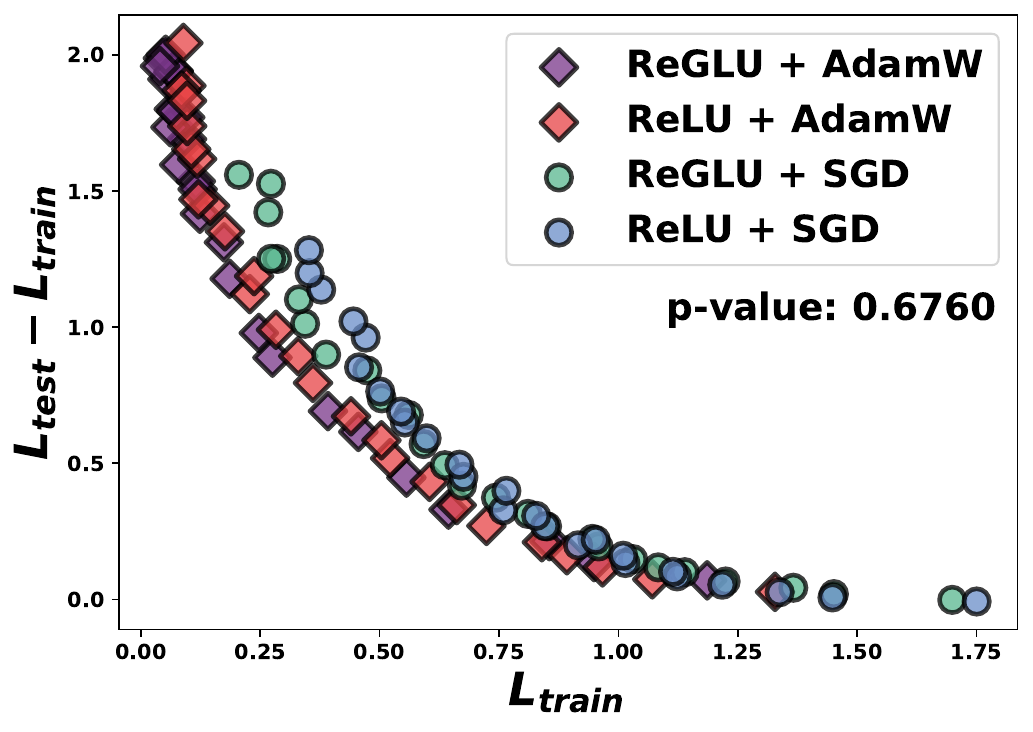}
      \vspace{-0.1in}
      \centerline{(a)}
    \end{subfigure}
    \begin{subfigure}{0.475\columnwidth}
      \centering
      \includegraphics[width=\linewidth]{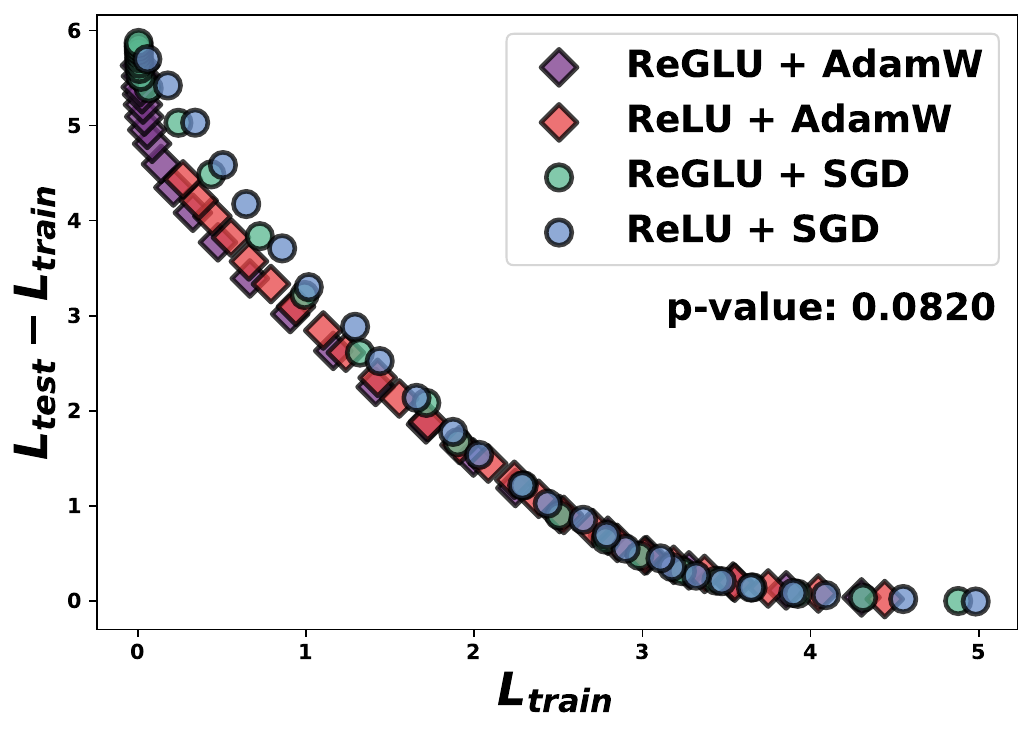}
      \vspace{-0.1in}
      \centerline{(b)}
    \end{subfigure}

    \vspace{0.1in}

    \begin{subfigure}{0.475\columnwidth}
      \centering
      \includegraphics[width=\linewidth]{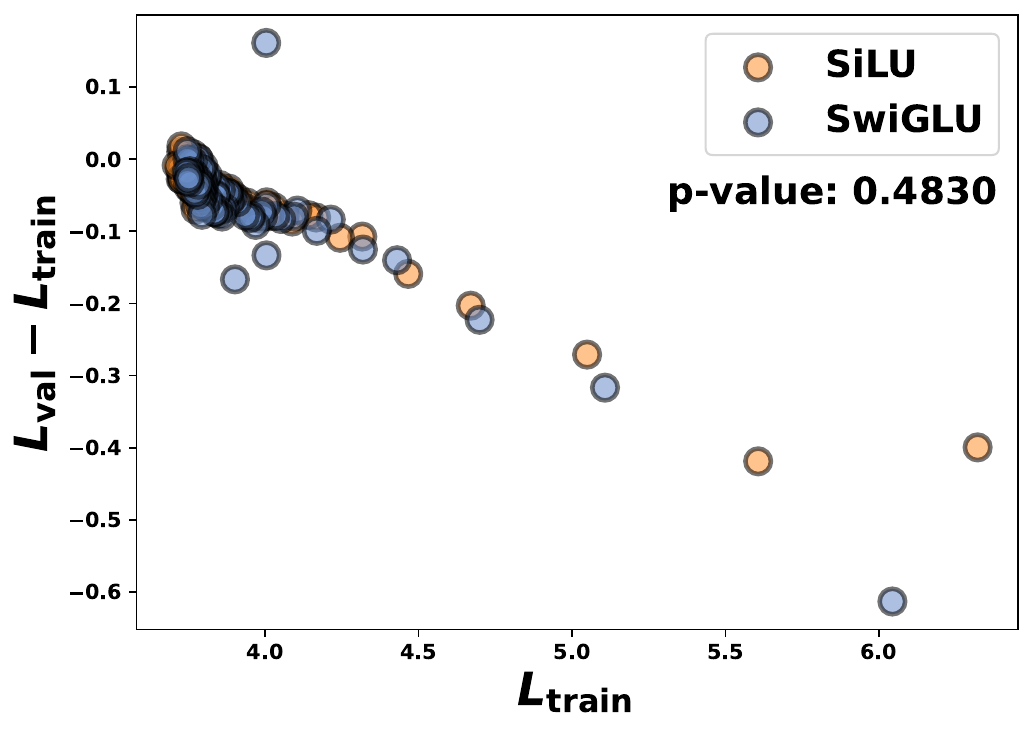}
      \vspace{-0.1in}
      \centerline{(c)}
    \end{subfigure}

    \caption{
      Generalization gap vs training loss. 
      (a) MLP Mixer on CIFAR-10. 
      (b) ViT on Tiny ImageNet.
      (c) GPT-2 on FineWeb-Edu.
    }
    \label{fig:gengap}
  \end{center}
\end{figure}

%% file: Sections/7.conclusion.tex
\section{Conclusion}

In this work, we provide a theoretical investigation of the advantage of GLU structure. By studying GLU in a two-layer model and analyzing their behavior in the kernel regime, we show that such multiplicative gating leads to an improved spectral conditioning of the NTK and yields faster optimization dynamics. Based on our theoretical result, we clarify how GLU reshapes convergence behavior during training. In contrast, our empirical results suggest that GLU has a limited impact on the generalization gap under the considered settings, indicating that its primary benefit lies in accelerating training convergence rather than reducing the generalization gap. Finally, we believe that further understanding whether and how multiplicative gating can influence training and learning behavior in broader regimes remains an interesting direction for future work.

%% file: Sections/2.related_works.tex
\section{Related Works} \label{sec:related_works}

This paper aims to explain the optimization advantages of GLU variants from the perspective of the kernel regime. Accordingly, we review three closely related lines of work. We first summarize prior studies on gated mechanisms and GLU variants. We then review relevant results on neural tangent kernels (NTK) that connect kernel spectra with optimization dynamics. Finally, we discuss works in random matrix theory, which provide the main analytical tools used in this paper.

\subsection{Gated Linear Unit (GLU) and its variants}

Gating mechanisms have long been employed in neural networks to improve optimization and model expressivity \cite{srivastava2015highway, jozefowicz2016exploring, dey2017gate}. Their early success can be traced back to LSTM architectures \cite{hochreiter1997long}. More recently, gating ideas have been extended to large-scale models, such as gated attention mechanism in large language models, which demonstrates consistent empirical improvements \cite{qiu2025gated}.

Gated Linear Units (GLU) can be viewed as a concrete instantiation of gating mechanisms within feed-forward networks (FFN). GLU were first introduced by \citet{dauphin2017language}, where a multiplicative gating structure was applied after convolutional layers to improve gradient propagation. Subsequently, \citet{shazeer2020glu} proposed several GLU variants, including ReGLU, GEGLU, and SwiGLU, and showed that replacing the standard FFN activation in Transformers with these gated variants leads to systematic performance gains, with SwiGLU being particularly effective. As a result, SwiGLU has become a standard architectural component in many modern open-source large language models, such as Llama~3 \cite{grattafiori2024llama}, DeepSeek-V2 \cite{liu2024deepseek}, Qwen~3 \cite{yang2025qwen3}, and Kimi~K2 \cite{team2025kimi}.

Despite their widespread empirical success, the theoretical reasons behind the effectiveness of GLU variants remain poorly understood. In their original work, \citet{shazeer2020glu} attributed the observed improvements to what they described as ``divine benevolence'', underscoring the absence of a principled explanation. More recently, \citet{zhong2025understanding} challenged the common intuition that gating in FFN acts as a simple forgetting mechanism, and emphasized the lack of theoretical understanding. In particular, how GLU and its variants influence optimization dynamics from a kernel perspective remains largely unexplored, which motivates the focus of this paper.

\subsection{Neural Tangent Kernel (NTK)}

Since the introduction of the neural tangent kernel (NTK) for infinitely wide neural networks \cite{jacot2018neural,chizat2019lazy}, its spectral properties have been extensively studied \cite{lee2019wide,fan2020spectra,chen2022neural}. Prior works have shown that the eigenvalue distribution of NTK exhibits strong similarities to classical kernels such as the Laplace kernel \cite{geifman2020similarity}. More recently, \citet{murray2023characterizing} characterized the NTK spectrum through power series expansions, deriving the scaling behavior of the smallest eigenvalue for two-layer ReLU networks in the infinite-width limit.

Beyond spectral characterization, similar to the Hessian whose spectrum directly implies the geometry of the loss landscape, the NTK spectrum has been shown to play a central role in training dynamics and convergence behavior of neural networks \cite{lyu2025sse,allen2019convergence,liu2020linearity}. In particular, \citet{de2024operator} demonstrated that better conditioning (i.e., smaller condition number) of the NTK leads to faster convergence of gradient-based training. \citet{liu2025better} further analyzed worst-case convergence rates and compared the conditioning properties of linear and ReLU-activated networks. 

Together, these results shed light on the theoretical connection between NTK spectral properties and optimization efficiency, forming an important foundation for the analysis in this paper.

\subsection{Random Matrix Theory}

Random matrix theory provides fundamental tools for analyzing the spectral properties of large random matrices \cite{feier2012methods}. Classical results characterize the limiting spectral behavior of sample covariance and Wigner-type matrices \cite{hsu1939distribution,wigner1958distribution}. These ideas have been applied to kernel methods through the study of kernel random matrices. A seminal contribution by \citet{karoui2010spectrum} established a linearization framework that enables the spectral analysis of kernel random matrices via approximated random matrix models. Subsequent works further investigated limiting spectral distributions \cite{do2013spectrum} and concentration properties of inner-product kernel matrices \cite{amini2021concentration}, as well as their implications for NTK and generalization performance \cite{mei2022generalization}. More recently, \citet{pandit2024universality} studied kernel random matrices in the quadratic regime and derived spectral results for self Hadamard products of Gram matrices.

In this work, we draw on these results as technical tools for analyzing the spectral properties of NTK matrices induced by GLU structure.

%% file: Appendices/1.theory.tex
\section{Approximating the NTK Condition Number}
\label{app:theory}

In this section, we present a rigorous proof of Thm.\ref{thm:ntk-condition-number-informal}. The proof is divided into three steps. First, we derive explicit expressions for the NTK matrices of two-layer ReLU and ReGLU models. Next, we establish analytic approximations for the largest eigenvalues. Finally, we derive corresponding approximations for the smallest eigenvalues.

Importantly, all results are obtained in explicit analytic form rather than relying on asymptotic arguments. Numerical experiments are further provided to verify the accuracy of these approximations, thereby supporting both their rigor and practical relevance.

\subsection{Derivation of the NTK Matrix}

In this section, we offer the rigorous derivation of the ReLU NTK matrix (Eq.(\ref{eq:relu-ntk})) and its ReGLU conterpart. The results are summarized in the following lemma.

\begin{proposition}\label{prop:ntk-approximation}
    Consider the two-layer ReLU and ReGLU models described in Sec.\ref{sec:optimization-smaller-cond-num}. Denote by $\bX\in\mathbb{R}^{n\times d}$ the sample matrix, $\br=[\|\bx_1\|,\dots, \|\bx_n\|]$, and $\bD=\textup{diag}\left\{r_1^2,\dots, r_n^2\right\}$. Then,

    1) For ReLU model, the $ij$-th element of its NTK matrix is approximately
    \begin{equation}
        \bK = \alpha \bX\bX^{\top} + \beta \br\br^{\top} + \gamma \bD.
    \end{equation}
    Here $\alpha = \frac{1}{4} + \frac{m}{4d}, \beta = \frac{m}{2\pi d}, \gamma = \frac{1}{4}+\frac{m}{4d}-\frac{m}{2\pi d}$.

    2) For ReGLU model, the $ij$-th element of its NTK matrix is approximately
    \begin{equation}
        \tilde{\bK} = \tilde{\alpha} (\bX\bX^{\top}) \odot (\bX\bX^{\top}) + \tilde{\beta} (\br\br^{\top})\odot (\bX\bX^{\top}) + \tilde{\gamma}\bD^2.
    \end{equation}
    Here $\tilde{\alpha} = \frac{m}{4d^2} + \frac{1}{2d}, \tilde{\beta} = \frac{1}{2\pi d}+\frac{m}{2\pi d^2}, \tilde{\gamma} = \frac{1}{2d}-\frac{1}{2\pi d}+\frac{m}{4d^2}-\frac{m}{2\pi d^2}$.
\end{proposition}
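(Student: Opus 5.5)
The plan is to compute both expected NTKs exactly in closed form using the order-$0$ and order-$1$ arc-cosine kernels of \citet{cho2010large}. I would then linearize each off-diagonal entry to first order in $\rho_{ij}$ and keep the diagonal exact. The diagonal coefficient $\gamma$ (resp.\ $\tilde\gamma$) is defined precisely as the correction that restores the exact diagonal. This makes the approximation exact on the diagonal and first-order accurate off it, which is what ``approximately'' in the statement means.

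\textbf{Step 1 (ReLU).} I would start from Eq.~\eqref{eq:lu-kernel} with $\bw\sim\mathcal N(\mathbf 0,\bI/d)$.

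The $\bV$-gradient term $m\,\mathbb E[\phi(\bw^\top\bx_i)\phi(\bw^\top\bx_j)]$ is the order-$1$ arc-cosine kernel:
\begin{equation*}
\frac{m\, r_ir_j}{2\pi d}\Bigl(\sqrt{1-\rho_{ij}^2}+(\pi-\arccos\rho_{ij})\rho_{ij}\Bigr).
\end{equation*}

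The $\bW$-gradient term $m\sigma_v^2\,\mathbb E[\phi'\phi']\,\bx_i^\top\bx_j$ uses $\mathbb E[\mathbf 1\{\bw^\top\bx_i>0\}\mathbf 1\{\bw^\top\bx_j>0\}]=(\pi-\arccos\rho_{ij})/(2\pi)$. With $\sigma_v^2=1/m$ it becomes $\bigl(\tfrac14+\tfrac{\arcsin\rho_{ij}}{2\pi}\bigr)\bx_i^\top\bx_j$.

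For $i\ne j$ I would Taylor expand using $\sqrt{1-\rho^2}=1+O(\rho^2)$, $(\pi-\arccos\rho)\rho=\tfrac\pi2\rho+O(\rho^2)$ and $\arcsin\rho=O(\rho)$. This gives
\begin{equation*}
K_{ij}\approx \tfrac{m}{2\pi d}r_ir_j+\bigl(\tfrac14+\tfrac{m}{4d}\bigr)\bx_i^\top\bx_j,
\end{equation*}
which identifies $\alpha$ and $\beta$.

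At $\rho=1$ the exact value is $K_{ii}=(\tfrac12+\tfrac m{2d})r_i^2$. Setting $\gamma=\tfrac12+\tfrac m{2d}-\alpha-\beta$ then reproduces the stated $\gamma$.

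\textbf{Step 2 (ReGLU).} I would differentiate $z$ with respect to $\bV$, $\bP$ and $\bW$ and use independence of $\bP$, $\bW$ and $\bV$.

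The $\bV$-term gives $m\sigma_p^2(\bx_i^\top\bx_j)\,\mathbb E[\phi\phi]$. The $\bP$-term gives $m\sigma_v^2(\bx_i^\top\bx_j)\,\mathbb E[\phi\phi]$. The $\bW$-term gives $m\sigma_v^2\sigma_p^2(\bx_i^\top\bx_j)^2\,\mathbb E[\phi'\phi']$. Together these reproduce Eq.~\eqref{eq:glu-kernel}.

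This lets me write $\tilde K_{ij}=(\tfrac1d+\tfrac1m)A_{ij}\,\bx_i^\top\bx_j+\tfrac1d B_{ij}\,\bx_i^\top\bx_j$. Here $A_{ij}$ is the $\bV$-part and $B_{ij}$ the $\bW$-part of the ReLU kernel from Step 1; the $\tfrac1m$ is kept, not dropped.

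Substituting the linearized $A_{ij}\approx\tfrac{m}{2\pi d}r_ir_j+\tfrac m{4d}\bx_i^\top\bx_j$ and $B_{ij}\approx\tfrac14\bx_i^\top\bx_j$ gives the two coefficients:
\begin{equation*}
\tilde\alpha=\tfrac m{4d}\bigl(\tfrac1d+\tfrac1m\bigr)+\tfrac1{4d},\qquad
\tilde\beta=\tfrac m{2\pi d}\bigl(\tfrac1d+\tfrac1m\bigr).
\end{equation*}
These match the statement.

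On the diagonal the exact value is
\begin{equation*}
\tilde K_{ii}=\Bigl[\bigl(\tfrac1d+\tfrac1m\bigr)\tfrac m{2d}+\tfrac1{2d}\Bigr]r_i^4 .
\end{equation*}
Subtracting $\tilde\alpha+\tilde\beta$ yields $\tilde\gamma$, and I would check that it equals the stated expression.

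\textbf{Main obstacle.} The step needing care is justifying the first-order truncation off the diagonal, i.e.\ controlling the neglected $O(\rho_{ij}^2)$ terms. For i.i.d.\ Gaussian inputs $\rho_{ij}=O_P(d^{-1/2})$ uniformly over pairs, up to $\sqrt{\log n}$ factors. Hence the dropped terms are $O(m\,r_ir_j/d\cdot \rho_{ij}^2)$, which is $O(m/d)\cdot O(1)$ per entry but without the coherent rank-one structure.

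I would state the proposition as exact up to these entrywise remainders and leave their spectral effect to the later eigenvalue propositions. If needed, I would bound the remainder matrix's operator norm by its Frobenius norm or via \citet{karoui2010spectrum}'s linearization.

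The remaining algebra, including verifying that the $\sigma_v^2+\sigma_p^2$ factor produces exactly the $\tfrac1{2d}$ and $\tfrac1{2\pi d}$ corrections, is routine bookkeeping.
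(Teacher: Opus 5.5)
Your proposal is correct and follows essentially the paper's route: the exact expected NTKs via the order-$0$/order-$1$ arc-cosine kernels, truncation in $\rho_{ij}$ off the diagonal, and matching the exact diagonal to fix $\gamma,\tilde\gamma$. Writing $\tilde K_{ij}$ as the linearized ReLU pieces $A_{ij},B_{ij}$ times $\bx_i^\top\bx_j$ is just tidier bookkeeping for the paper's second-order expansion, which keeps the $\rho_{ij}^2$ term for ReGLU and drops it for ReLU, and your coefficients $\alpha,\beta,\gamma,\tilde\alpha,\tilde\beta,\tilde\gamma$ all check out.

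One side remark about the dropped remainder is inaccurate, though it does not affect the proposition. The dropped ReLU remainder $(\tfrac{1}{2\pi}+\tfrac{m}{4\pi d})\rho_{ij}^2 r_ir_j$ is entrywise nonnegative with mean about $\tfrac{1}{2\pi}+\tfrac{m}{4\pi d}$. So it does carry a coherent rank-one part, with top eigenvalue roughly $n(\tfrac{1}{2\pi}+\tfrac{m}{4\pi d})$; this is still negligible next to $\beta\br\br^\top$. A Frobenius-norm bound on it is of order $mn/d$, which is far too crude to control it against $\lambda_n(\bK)=\Theta(m)$ when $n>d$.
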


\begin{proof} 
    
The proof is carried out in two main steps. First, we derive a general expression for the NTK matrix that is independent of the specific activation function. Then, by specializing to the arc-cosine kernel and applying a Taylor expansion, we obtain the final forms for the ReLU and ReGLU models.

\textbf{1) Obtaining the general form of NTK matrix.}

Consider ReLU model first:
\begin{equation*}
    z(\bx)=\bV\phi(\bW\bx).
\end{equation*}
Taking derivative w.r.t all parameters, we get:
\begin{equation*}
\begin{aligned}
    \frac{\partial z(\bx)}{\partial V_k} = \phi(\bW_k^{\top}\bx), \qquad
    \frac{\partial z(\bx)}{\partial W_{ks}} = V_k \phi'(\bW_k^{\top}\bx)x_s,
\end{aligned}
\end{equation*}
where $\bW_k$ stands for the $k$-th row of $\bW$.

Hence we have:
\begin{equation*}
\begin{aligned}
    \langle \nabla_{\bV}z(\bx_i), \nabla_{\bV}z(\bx_j)\rangle =&\; \sum_{k=1}^m \phi(\bW_k^{\top}\bx_i)\phi(\bW_k^{\top}\bx_j). \\
    \langle \nabla_{\bW}z(\bx_i), \nabla_{\bW}z(\bx_j)\rangle =&\; (\bx_i^{\top}\bx_j)\sum_{k=1}^m V_k^2\phi'(\bW_k^{\top}\bx_i)\phi'(\bW_k^{\top}\bx_j).
\end{aligned}
\end{equation*}

Taking expectation, we obtain the (expected) NTK matrix:
\begin{equation} \label{eq:relu-expected-ntk}
\begin{aligned}
    K_{ij} =&\; \mathbb{E}_{\btheta}\left[\langle \nabla_{\btheta}z(\bx_i), \nabla_{\btheta}z(\bx_j)\rangle\right] \\
    =&\; m\left(\mathbb{E}_{\bw}\left[\phi(\bw^{\top}\bx_i)\phi(\bw^{\top}\bx_j)\right] + \sigma_v^2\mathbb{E}_{\bw}\left[\phi'(\bw^{\top}\bx_i)\phi'(\bw^{\top}\bx_j)\right](\bx_i^{\top}\bx_j)\right).
\end{aligned}
\end{equation}
Here $\bw\sim \mathcal{N}(0, \bI)$ is the distribution of each row of $\bW$.

For ReGLU model, which is defined as
\begin{equation*}
    z(\bx) = \bV\left[(\bP\bx)\odot\phi(\bW\bx)\right].
\end{equation*}
Similarly, we have:
\begin{equation*}
\begin{aligned}
    \frac{\partial z(\bx)}{\partial V_k} = (\bP_k^{\top}\bx)\phi(\bW_k^{\top}\bx), \quad 
    \frac{\partial z(\bx)}{\partial P_{ks}} = V_k\phi(\bW_k^{\top}\bx)x_s, \quad
    \frac{\partial z(\bx)}{\partial W_{ks}} = V_k(\bP_k^{\top}\bx)\phi'(\bW_k^{\top}\bx)x_s. \quad
\end{aligned}
\end{equation*}
Hence
\begin{equation*}
\begin{aligned}
    \langle \nabla_{\bV}z(\bx_i), \nabla_{\bV}z(\bx_j)\rangle =&\; \sum_{k=1}^m (\bP_k^{\top}\bx_i)(\bP_k^{\top}\bx_j)\phi(\bW_k^{\top}\bx_i)\phi(\bW_k^{\top}\bx_j), \\
    \langle \nabla_{\bP}z(\bx_i), \nabla_{\bP}z(\bx_j)\rangle =&\; (\bx_i^{\top}\bx_j)\sum_{k=1}^m V_k^2\phi(\bW_k^{\top}\bx_i)\phi(\bW_k^{\top}\bx_j), \\
    \langle \nabla_{\bW}z(\bx_i), \nabla_{\bW}z(\bx_j)\rangle =&\; (\bx_i^{\top}\bx_j)\sum_{k=1}^m V_k^2(\bP_k^{\top}\bx_i)(\bP_k^{\top}\bx_j)\phi'(\bW_k^{\top}\bx_i)\phi'(\bW_k^{\top}\bx_j), \\
\end{aligned}
\end{equation*}
And eventually,
\begin{equation} \label{eq:reglu-expected-ntk}
\begin{aligned}
    \tilde{K}_{ij} =&\; \mathbb{E}_{\btheta}[\langle \nabla_{\btheta}z(\bx_i), \nabla_{\btheta}z(\bx_j)\rangle] \\
    =&\; m\left((\sigma_v^2 + \sigma_p^2)\mathbb{E}_{\bw}[\phi(\bw^{\top}\bx_i)\phi(\bw^{\top}\bx_j)](\bx_i^{\top}\bx_j) + \sigma_v^2\sigma_p^2\mathbb{E}_{\bw}[\phi'(\bw^{\top}\bx_i)\phi'(\bw^{\top}\bx_j)](\bx_i^{\top}\bx_j)^2\right).
\end{aligned}
\end{equation}

\textbf{2) Using arc-cosine kernel and Taylor approximation.}

Since we are considering ReLU activated models, we can use arc-cosine kernel to get rid of the expectation factors. Specifically, we have \cite{cho2010large},
\begin{equation*}
\begin{aligned}
    \mathbb{E}_{\bw}[\phi(\bw^{\top}\bx_i)\phi(\bw^{\top}\bx_j)] =&\; \frac{\sigma_w^2\|\bx_i\| \|\bx_j\|}{2\pi} \left(\sqrt{1-\rho_{ij}^2} + \big(\pi-\arccos \rho_{ij}\big)\rho_{ij}\right), \\
    \mathbb{E}_{\bw}[\phi'(\bw^{\top}\bx_i)\phi'(\bw^{\top}\bx_j)] =&\; \frac{1}{2\pi} \big(\pi-\arccos \rho_{ij}\big).
\end{aligned}
\end{equation*}
Here $\rho_{ij} := \frac{\bx_i^{\top}\bx_j}{\|\bx_i\|\|\bx_j\|}$ is the cosine similarity between samples $\bx_i$ and $\bx_j$. 

Note that when $i\neq j$, for high dimensional inputs, $\rho_{ij}\to 0$. Hence we use Taylor expansion to approximate this term. We have:
\begin{equation*}
    \arcsin(z)=z+O(z^3), \quad \arccos(z)=\frac{\pi}{2}-\arcsin(z) = \frac{\pi}{2} - z + O(z^3), \quad \sqrt{1-z^2}=1-\frac{z^2}{2}+O(z^3).
\end{equation*}
Substituting them back to (\ref{eq:relu-expected-ntk}) and (\ref{eq:reglu-expected-ntk}), we obtain that:

1) For ReLU model, the $ij$-th element of its NTK matrix is approximately
\begin{equation}
K_{ij}=\left\{ \label{eq:relu-kernel-approx}
\begin{aligned}
    & \left(\frac{m}{2d}+\frac{1}{2}\right)\|\bx_i\|^2, & i=j; \\
    & \left[\frac{m}{2\pi d}+\left(\frac{1}{4}+\frac{m}{4d}\right)\rho_{ij} + \left(\frac{1}{2\pi}+\frac{m}{4\pi d}\right)\rho_{ij}^2\right] \|\bx_i\|\|\bx_j\|, & i\neq j.
\end{aligned}
\right.
\end{equation}

2) For ReGLU model, the $ij$-th element of its NTK matrix is approximately
\begin{equation}
\tilde{K}_{ij}=\left\{ \label{eq:reglu-kernel-approx}
\begin{aligned}
    & \left(\frac{m}{2d^2}+\frac{1}{d}\right)\|\bx_i\|^4, & i=j; \\
    & \left[\left(\frac{1}{2\pi d}+\frac{m}{2\pi d^2}\right)\rho_{ij} + \left(\frac{1}{2d}+\frac{m}{4d^2}\right)\rho_{ij}^2\right] \|\bx_i\|^2\|\bx_j\|^2, & i\neq j.
\end{aligned}
\right.
\end{equation} 

Finally, for both models, we only keep the terms where $\rho_{ij}$ can be absorbed into $\bx_i^{\top}\bx_j$. That is, we drop the $\rho_{ij}^2$ term in ReLU model but keep it in ReGLU model. Then we obtain the final result.
    
\end{proof}

\subsection{Estimating the Largest Eigenvalue}

In this section, we seek to estimate the largest eigenvalue of the two-layer ReLU and ReGLU models. We begin by recalling the classical Weyl inequality \citep[Theorem~4.3.1]{horn2012matrix}.

\begin{theorem}[Weyl's inequality]
    Let $\mathbf{A}, \mathbf{B}$ be Hermitian matrices, with spectrum ordered in descending order $\lambda_1\geq \dots \geq \lambda_n$. Then,
    \begin{equation*}
        \lambda_{i+j-1}(\mathbf{A}+\mathbf{B})\leq \lambda_i(\mathbf{A}) + \lambda_j(\mathbf{B})\leq \lambda_{i+j-n}(\mathbf{A}+\mathbf{B}).
    \end{equation*}
\end{theorem}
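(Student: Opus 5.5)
The plan is the standard variational argument via the Courant--Fischer min--max characterization, treating the two inequalities separately. Each inequality is understood to hold whenever its indices make sense. The left one needs $i+j-1\le n$ and the right one needs $i+j-n\ge 1$. For a Hermitian $\mathbf{M}$ with eigenvalues in descending order, we will use
\begin{equation*}
    \lambda_k(\mathbf{M}) = \min_{\dim \mathcal{S} = n-k+1}\; \max_{\bx\in\mathcal{S},\, \bx\neq 0} \frac{\bx^{*}\mathbf{M}\bx}{\bx^{*}\bx}.
\end{equation*}
Consequently, $\lambda_k(\mathbf{M})$ is at most the maximal Rayleigh quotient over any subspace of dimension at least $n-k+1$.

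\textbf{Left inequality.} Let $\mathcal{U}$ be the span of the eigenvectors of $\mathbf{A}$ associated with $\lambda_i(\mathbf{A}),\dots,\lambda_n(\mathbf{A})$, so $\dim\mathcal{U}=n-i+1$. Let $\mathcal{V}$ be the analogous span for $\mathbf{B}$ with $\lambda_j(\mathbf{B}),\dots,\lambda_n(\mathbf{B})$, so $\dim\mathcal{V}=n-j+1$.

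By the dimension formula $\dim(\mathcal{U}\cap\mathcal{V})\ge \dim\mathcal{U}+\dim\mathcal{V}-n$, we get
\begin{equation*}
    \dim(\mathcal{U}\cap\mathcal{V}) \ge n-i-j+2 = n-(i+j-1)+1 \ge 1.
\end{equation*}
For any nonzero $\bx\in\mathcal{U}\cap\mathcal{V}$ we have $\bx^{*}\mathbf{A}\bx\le\lambda_i(\mathbf{A})\|\bx\|^2$ and $\bx^{*}\mathbf{B}\bx\le\lambda_j(\mathbf{B})\|\bx\|^2$. Adding these bounds the Rayleigh quotient of $\mathbf{A}+\mathbf{B}$ on this subspace by $\lambda_i(\mathbf{A})+\lambda_j(\mathbf{B})$. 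Applying the min--max bound with $k=i+j-1$, using any $(n-k+1)$-dimensional subspace of $\mathcal{U}\cap\mathcal{V}$, gives $\lambda_{i+j-1}(\mathbf{A}+\mathbf{B})\le\lambda_i(\mathbf{A})+\lambda_j(\mathbf{B})$.

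\textbf{Right inequality.} I will not repeat the argument with max--min. Instead, apply the left inequality to $-\mathbf{A}$ and $-\mathbf{B}$, using the identity $\lambda_k(-\mathbf{M})=-\lambda_{n-k+1}(\mathbf{M})$.

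Take indices $i'=n-i+1$ and $j'=n-j+1$. The left inequality, which requires $i'+j'-1\le n$, i.e.\ $i+j-n\ge 1$, gives
\begin{equation*}
    \lambda_{i'+j'-1}(-\mathbf{A}-\mathbf{B}) \le \lambda_{i'}(-\mathbf{A})+\lambda_{j'}(-\mathbf{B}).
\end{equation*}
The right-hand side equals $-\lambda_i(\mathbf{A})-\lambda_j(\mathbf{B})$. On the left, $n-(i'+j'-1)+1 = i+j-n$, so the left-hand side equals $-\lambda_{i+j-n}(\mathbf{A}+\mathbf{B})$. Multiplying by $-1$ yields $\lambda_i(\mathbf{A})+\lambda_j(\mathbf{B})\le\lambda_{i+j-n}(\mathbf{A}+\mathbf{B})$.

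\textbf{Main obstacle.} The only delicate point is the bookkeeping. One must check that the subspace intersection has exactly the dimension Courant--Fischer demands for index $i+j-1$, and that the index reflection $k\mapsto n-k+1$ is carried out correctly in the second part. Both reduce to the elementary dimension count above, so I expect no genuine difficulty beyond keeping the descending-order conventions consistent.
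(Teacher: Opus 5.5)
Your proof is correct. The dimension count $\dim(\mathcal{U}\cap\mathcal{V})\ge n-(i+j-1)+1$ together with Courant--Fischer gives the left inequality, and the reflection $\lambda_k(-\mathbf{M})=-\lambda_{n-k+1}(\mathbf{M})$ with $i'=n-i+1$, $j'=n-j+1$ correctly yields the right inequality under $i+j-n\ge 1$. The paper gives no proof of its own and simply cites Horn and Johnson's Theorem~4.3.1. That reference uses essentially the same eigenvector-subspace-intersection and Rayleigh-quotient argument, and also obtains the dual inequality by passing to $-\mathbf{A},-\mathbf{B}$.
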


The following Corollary of Weyl's inequality is used throughout our analysis.

\begin{corollary}[Spectral stability]\label{cor:weyls}
    Let $\mathbf{A}, \mathbf{B}$ be Hermitian matrices, with spectrum ordered in descending order $\lambda_1\geq \dots \geq \lambda_n$. Then,
    \begin{equation*}
    \begin{aligned}
        & \lambda_k(\mathbf{A}+\mathbf{B})-\lambda_k(\mathbf{A}) \in [\lambda_n(\mathbf{B}), \lambda_1(\mathbf{B})], \\
        &\lvert\lambda_k(\mathbf{A}+\mathbf{B})-\lambda_k(\mathbf{A})\rvert\leq \|\mathbf{B}\|_{op}. 
    \end{aligned}
    \end{equation*}
\end{corollary}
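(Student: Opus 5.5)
The statement to prove is Corollary~\ref{cor:weyls}, which follows directly from Weyl's inequality just stated. I will obtain each of its two claims by specializing the indices $i,j$ in that inequality.

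\textbf{Upper bound.} Take $i=k$ and $j=1$ in the left inequality $\lambda_{i+j-1}(\mathbf{A}+\mathbf{B})\le \lambda_i(\mathbf{A})+\lambda_j(\mathbf{B})$. This gives
\[
\lambda_k(\mathbf{A}+\mathbf{B})\le \lambda_k(\mathbf{A})+\lambda_1(\mathbf{B}).
\]

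\textbf{Lower bound.} Take $i=k$ and $j=n$ in the right inequality $\lambda_i(\mathbf{A})+\lambda_j(\mathbf{B})\le \lambda_{i+j-n}(\mathbf{A}+\mathbf{B})$. Since $i+j-n=k$, this gives
\[
\lambda_k(\mathbf{A})+\lambda_n(\mathbf{B})\le \lambda_k(\mathbf{A}+\mathbf{B}).
\]
Both index choices are admissible: $i+j-1=k\le n$ and $i+j-n=k\ge 1$. Combining the two displays yields the first claim,
\[
\lambda_k(\mathbf{A}+\mathbf{B})-\lambda_k(\mathbf{A})\in[\lambda_n(\mathbf{B}),\lambda_1(\mathbf{B})].
\]

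\textbf{Operator-norm bound.} For Hermitian $\mathbf{B}$, the operator norm equals the spectral radius, so
\[
\|\mathbf{B}\|_{op}=\max\{|\lambda_1(\mathbf{B})|,|\lambda_n(\mathbf{B})|\}.
\]
Hence $-\|\mathbf{B}\|_{op}\le \lambda_n(\mathbf{B})$ and $\lambda_1(\mathbf{B})\le \|\mathbf{B}\|_{op}$. The interval $[\lambda_n(\mathbf{B}),\lambda_1(\mathbf{B})]$ is therefore contained in $[-\|\mathbf{B}\|_{op},\|\mathbf{B}\|_{op}]$, and the absolute-value bound $|\lambda_k(\mathbf{A}+\mathbf{B})-\lambda_k(\mathbf{A})|\le\|\mathbf{B}\|_{op}$ follows.

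The only step needing care is checking that the index substitutions respect the admissible ranges in Weyl's inequality, which I verified above.
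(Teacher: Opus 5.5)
Your proposal is correct and matches the paper's approach. The paper gives no separate proof and presents the statement as an immediate consequence of Weyl's inequality. You do exactly that: take $(i,j)=(k,1)$ in the left inequality and $(i,j)=(k,n)$ in the right one, then use $\|\mathbf{B}\|_{op}=\max\{|\lambda_1(\mathbf{B})|,|\lambda_n(\mathbf{B})|\}$ for Hermitian $\mathbf{B}$. You were right to apply each half of Weyl's inequality separately within its own admissible index range, since the paper's single-chain statement, read literally, would force $i+j=n+1$.
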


The following lemma is also useful, which uses the maximum and minimum of row-sums to bound the largest eigenvalue of any non-negative symmetric matrix.
\begin{lemma} [Row-sum bound for non-negative symmetric matrix]\label{lem:row-sum}
    Suppose $\bA\in \mathbb{R}^{n\times n}$ is a non-negative real symmetric matrix, i.e., $A_{ij}\geq 0$. Then, 
    \begin{equation*}
        \min_i\sum_{j=1}^n A_{ij}\leq \lambda_1(\mathbf{A})\leq \max_i \sum_{j=1}^nA_{ij}.
    \end{equation*}
\end{lemma}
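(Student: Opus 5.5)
The statement to prove is Lemma~\ref{lem:row-sum}: for a symmetric matrix with non-negative entries, the largest eigenvalue lies between the smallest and largest row sums. The plan is to prove the two inequalities separately with elementary tools, using only symmetry and non-negativity; Perron--Frobenius is not needed.

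\textbf{Upper bound.} Let $\bA\bv=\lambda\bv$ for an arbitrary eigenpair. Choose an index $i$ with $|v_i|=\max_j|v_j|>0$. Then
\begin{equation*}
|\lambda||v_i| = \Bigl|\sum_j A_{ij}v_j\Bigr| \leq \sum_j A_{ij}|v_j| \leq |v_i|\sum_j A_{ij},
\end{equation*}
where the last step uses $A_{ij}\geq 0$. Hence $|\lambda|\leq \max_i\sum_j A_{ij}$ for every eigenvalue. In particular $\lambda_1(\bA)\leq \max_i\sum_j A_{ij}$. Equivalently, $\lambda_1(\bA)\leq\|\bA\|_{\infty}$, the maximum absolute row sum, which equals the maximum row sum because the entries are non-negative.

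\textbf{Lower bound.} Because $\bA$ is symmetric, the Rayleigh--Ritz characterization gives $\lambda_1(\bA)=\max_{\bx\neq 0}\bx^\top\bA\bx/\bx^\top\bx$. Take $\bx=\mathbf{1}$:
\begin{equation*}
\lambda_1(\bA)\geq \frac{\mathbf{1}^\top\bA\mathbf{1}}{n}=\frac{1}{n}\sum_{i}\sum_j A_{ij}\geq \min_i\sum_j A_{ij},
\end{equation*}
since an average is at least its minimum term. This direction actually needs only symmetry, not non-negativity.

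Neither step is a real obstacle. The one point to handle carefully is in the upper bound: the maximum-modulus coordinate argument uses non-negativity to drop the absolute values onto $A_{ij}$, and symmetry is what makes $\lambda_1$ real so that the comparison is meaningful. Combining the two displays gives the claimed sandwich.
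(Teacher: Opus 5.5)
Your proof is correct and follows the paper's: the paper obtains the upper bound by citing the Gershgorin-type bound $\rho(\mathbf{A})\le\max_i\sum_j|A_{ij}|$ from Horn--Johnson, which your maximum-modulus coordinate argument proves directly. The lower bound is the same Rayleigh-quotient evaluation at $\mathbf{1}$ in both.
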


\begin{proof}
    The upper bound is from Gershgorin disc theorem. We have \citep[Corollary 6.1.5]{horn2012matrix}:
    \begin{equation*}
        \rho(\bA)\leq \min\left\{ \max_i\sum_{j=1}^n\lvert A_{ij}\rvert, \max_j\sum_{i=1}^n\lvert A_{ij}\rvert\right\}.
    \end{equation*}
    Here $\rho(\mathbf{A})$ is the spectral radius of $\mathbf{A}$, i.e., $\rho(\bA) = \max\left\{\lvert \lambda_1(\bA)\rvert,\dots, \lvert \lambda_n(\bA)\rvert\right\}$. Hence $\lambda_1(\bA)\leq \rho(\bA)\leq \max_i\sum_{j=1}^nA_{ij}$.

    Now consider the lower bound. By property of Rayleigh quotient \citep[Theorem 4.2.2]{horn2012matrix}, we have:
    \begin{equation*}
        \lambda_1(\bA)= \max_{\bx\neq \mathbf{0}} \frac{\bx^{\top}\bA\bx}{\bx^{\top}\bx}\geq \frac{1}{n}\mathbf{1}^{\top}\bA\mathbf{1}=\frac{1}{n}\sum_{i=1}^n\left(\sum_{j=1}^n A_{ij}\right) \geq \min_i\sum_{j=1}^nA_{ij}.
    \end{equation*}
\end{proof}

Finally, we will also use the BBP phase transition \cite{baik2005phase,benaych2011eigenvalues}:

\begin{theorem}[Theorem 2.1, \cite{benaych2011eigenvalues}, restated]\label{thm:rankone-bbp}
Let $\bX_n$ be an $n\times n$ symmetric random matrix with largest eigenvalue $\lambda_1(\bX_n)$. Assume that the empirical spectral distribution $\mu_{\bX_n} := \frac{1}{n}\sum_{j=1}^n \delta_{\lambda_j(\bX_n)}$ converges almost surely, in the weak sense, to a non-random probability measure $\mu_{\bX}$ supported by $a, b$, and that $\lambda_n(\bX_n)\stackrel{\textup{a.s.}}{\rightarrow}a, \lambda_1(\bX_n)\stackrel{\textup{a.s.}}{\rightarrow}b$. Define $\tilde{\bX}_n=\bX_n+\bP_n$, where $\bP=\theta \mathbf{u}\mathbf{u}^{\top} (\theta>0)$ is a rank-1 signal update to $\bX$. Let $G_{\mu_{\bX}}$ denote the Cauchy transform of $\mu_{\bX}$:
\begin{equation*}
    G_{\mu_{\bX}}(z) := \int \frac{1}{z-t}\, d\mu_{\bX}(t),
    \qquad 
    z \notin \operatorname{supp}(\mu_{\bX}),
\end{equation*}
and set
\begin{equation*}
    G_{\mu_{\bX}}(b^+) := \lim_{z\downarrow b} G_{\mu_{\bX}}(z) \in (0,+\infty].
\end{equation*}

Then the largest eigenvalue of $\tilde{\bX}_n$ satisfies, 
\begin{equation*}
    \lambda_1(\tilde{\bX}_n) \xrightarrow{\mathrm{a.s.}}
    \begin{cases}
    G_{\mu_{\bX}}^{-1}\!\left(\dfrac{1}{\theta}\right),
    & \text{if } \dfrac{1}{\theta} < G_{\mu_X}(b^+), \\
    b,
    & \text{if } \dfrac{1}{\theta} \geq G_{\mu_X}(b^+),
    \end{cases}
\end{equation*}
where $G_{\mu_{\bX}}^{-1}$ denotes the inverse of $G_{\mu_{\bX}}$.

\end{theorem}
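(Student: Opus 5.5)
The plan is to reduce the location of the top eigenvalue of $\tilde{\bX}_n=\bX_n+\theta\mathbf{u}\mathbf{u}^{\top}$ to a scalar secular equation, and then pass to the limit in that equation. The restatement leaves out one hypothesis of \citet{benaych2011eigenvalues} that the argument needs: $\|\mathbf{u}\|=1$, and $\mathbf{u}$ is either independent of $\bX_n$ and uniformly distributed on the sphere, or $\bX_n$ is orthogonally invariant. I will assume this throughout, since without some delocalization of $\mathbf{u}$ relative to the eigenvectors of $\bX_n$ the statement fails.

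\textbf{Step 1 (secular equation).} For $z\notin\operatorname{spec}(\bX_n)$, the matrix determinant lemma gives
\begin{equation*}
\det(z\bI-\tilde{\bX}_n)=\det(z\bI-\bX_n)\left(1-\theta\,\mathbf{u}^{\top}(z\bI-\bX_n)^{-1}\mathbf{u}\right).
\end{equation*}
Hence $z>\lambda_1(\bX_n)$ is an eigenvalue of $\tilde{\bX}_n$ if and only if $g_n(z):=\mathbf{u}^{\top}(z\bI-\bX_n)^{-1}\mathbf{u}=1/\theta$.

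On $(\lambda_1(\bX_n),\infty)$ the function $g_n$ is continuous and strictly decreasing, from $+\infty$ (generically, when $\mathbf{u}$ has a nonzero component on the top eigenvector) down to $0$. So there is exactly one such root, and it equals $\lambda_1(\tilde{\bX}_n)$. Interlacing (Weyl, Cor.~\ref{cor:weyls}, with a rank-one PSD perturbation) also gives
\begin{equation*}
\lambda_1(\bX_n)\le\lambda_1(\tilde{\bX}_n)\quad\text{and}\quad\lambda_2(\tilde{\bX}_n)\le\lambda_1(\bX_n)\to b.
\end{equation*}
Therefore only the top eigenvalue can leave the bulk.

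\textbf{Step 2 (convergence of the quadratic form).} Write $\bX_n=\mathbf{O}\bm{\Lambda}\mathbf{O}^{\top}$ and $\mathbf{v}=\mathbf{O}^{\top}\mathbf{u}$. Then
\begin{equation*}
g_n(z)=\sum_j \frac{v_j^2}{z-\lambda_j(\bX_n)}.
\end{equation*}
By the isotropy assumption, $\mathbf{v}$ is uniform on the sphere and independent of $\bm{\Lambda}$, so $v_j^2\approx g_j^2/n$ for i.i.d.\ Gaussians $g_j$. A concentration bound for weighted sums of $g_j^2$, with weights bounded by $1/\operatorname{dist}(z,[a,\lambda_1(\bX_n)])$, together with Borel--Cantelli and weak convergence of $\mu_{\bX_n}$, gives $g_n(z)\to G_{\mu_{\bX}}(z)$ almost surely for each fixed $z>b$. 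Since all the functions involved are monotone and equicontinuous on $[b+\varepsilon,\infty)$, this upgrades to uniform convergence there, using a countable dense set of $z$ and $\lambda_1(\bX_n)\to b$.

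\textbf{Step 3 (locating the limit).} The function $G_{\mu_{\bX}}$ is strictly decreasing on $(b,\infty)$, from $G_{\mu_{\bX}}(b^+)$ down to $0$.
\begin{itemize}
\item If $1/\theta<G_{\mu_{\bX}}(b^+)$, there is a unique $\rho=G_{\mu_{\bX}}^{-1}(1/\theta)>b$. Take a small interval $[\rho-\delta,\rho+\delta]\subset(b,\infty)$. Strict monotonicity plus uniform convergence make $g_n-1/\theta$ change sign on this interval for large $n$, so the unique root $\lambda_1(\tilde{\bX}_n)$ lies in it. Hence $\lambda_1(\tilde{\bX}_n)\to\rho$.
\item If $1/\theta\ge G_{\mu_{\bX}}(b^+)$, then for every $\varepsilon>0$ we have $g_n<1/\theta$ on $[b+\varepsilon,\infty)$ eventually. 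So there is no root beyond $b+\varepsilon$, and since $\lambda_1(\tilde{\bX}_n)\ge\lambda_1(\bX_n)\to b$, we get $\lambda_1(\tilde{\bX}_n)\to b$.
\end{itemize}

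\textbf{Main obstacle.} The hard part is Step 2 near the edge. Uniform convergence on $[b+\varepsilon,\infty)$ is routine, but the critical case $1/\theta=G_{\mu_{\bX}}(b^+)$ with $G_{\mu_{\bX}}(b^+)<\infty$ needs care. To handle it, I would first establish both cases for all $\varepsilon$. Then I would observe that the subcritical conclusion only requires showing that no root exceeds $b+\varepsilon$, which follows from the strict inequality $G_{\mu_{\bX}}(b+\varepsilon)<G_{\mu_{\bX}}(b^+)\le 1/\theta$, so the edge itself never has to be approached.
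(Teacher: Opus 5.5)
The paper gives no proof of this statement. It imports Theorem~2.1 of \citet{benaych2011eigenvalues} as a black box, so the relevant comparison is with that source. Your proposal is correct and is essentially the original argument specialized to rank one:
\begin{enumerate}
\item the matrix determinant lemma turns eigenvalues above $\lambda_1(\bX_n)$ into roots of $g_n(z)=\mathbf{u}^{\top}(z\bI-\bX_n)^{-1}\mathbf{u}=1/\theta$;
\item isotropy of $\mathbf{u}$ relative to the eigenbasis gives $g_n\to G_{\mu_{\bX}}$ almost surely, uniformly on $[b+\varepsilon,\infty)$;
\item the root is then located.
\end{enumerate}
Rank one buys you strict monotonicity of $g_n$ and $G_{\mu_{\bX}}$. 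This makes the root unique and replaces the zero-counting argument the original needs for general rank. Your treatment of $1/\theta\ge G_{\mu_{\bX}}(b^+)$, critical case included, is right. It uses only $g_n(b+\varepsilon)\to G_{\mu_{\bX}}(b+\varepsilon)<1/\theta$ together with $\lambda_1(\tilde{\bX}_n)\ge\lambda_1(\bX_n)\to b$, so the edge never has to be approached.

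You are also right that the restatement drops a necessary hypothesis. If $\mathbf{u}$ is a unit top eigenvector of $\bX_n$, then $\lambda_1(\tilde{\bX}_n)=\lambda_1(\bX_n)+\theta\to b+\theta$. This exceeds the claimed limit in either case unless $\mu_{\bX}$ is a point mass, because $G_{\mu_{\bX}}(b+\theta)<1/\theta$ otherwise. In the paper's use ($\mathbf{u}=\mathbf{1}/\sqrt{n}$ with a Gaussian Wishart matrix), your orthogonal-invariance alternative applies. Note, however, that your proof, like the original theorem, treats $\theta$ as fixed in $n$. The paper later applies the result with $\theta=\beta n/\alpha$ growing with $n$, which lies outside what either argument establishes.

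One small correction: $\lambda_2(\tilde{\bX}_n)\le\lambda_1(\bX_n)$ does not follow from Corollary~\ref{cor:weyls} as stated, since that only yields $\lambda_1(\tilde{\bX}_n)\ge\lambda_1(\bX_n)$. It follows from the general Weyl inequality with $i=1$, $j=2$, or from Cauchy interlacing.
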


We are now prepared to prove the following lemma, which gives estimation of the largest eigenvalue of the NTK matrix.

\begin{proposition} [Largest eigenvalue] \label{prop:largest-eigenvalue}
    Consider the two-layer ReLU and ReGLU models described in Sec.\ref{sec:optimization-smaller-cond-num}. 

    1) For ReLU model, the largest eigenvalue of its NTK matrix is given by
    \begin{equation*}
        \lambda_1(\bK) \approx \frac{m}{2\pi}\cdot n + \frac{d}{2} + \frac{(\pi-1)m}{2\pi}.
    \end{equation*}

    2) For ReGLU model, the largest eigenvalue of its NTK matrix is given by
    \begin{equation*}
        \left(\frac{m}{4 d} + \frac{1}{2}\right)n + \frac{m}{2} - \frac{m}{2\pi} + d - \frac{d}{2\pi} \;\lesssim\; \lambda_1(\tilde{\bK}) \;\lesssim\; \left(\frac{m}{4d} + \frac{m}{2\pi d} + \frac{1}{2} + \frac{1}{2\pi}\right)n + \frac{m}{2} + d.
    \end{equation*}

    Therefore, 
    \begin{equation*}
        \lambda_1(\bK) = \Theta(mn), \qquad \lambda_1(\tilde{\bK}) = \Theta(mn/d).
    \end{equation*}
\end{proposition}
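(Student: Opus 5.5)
The plan is to start from the explicit forms in Prop.~\ref{prop:ntk-approximation} and substitute the Gaussian-input concentration facts $r_i^2=\|\bx_i\|^2\approx d$ and $(\bx_i^\top\bx_j)^2\approx d$ for $i\neq j$. I would then treat each NTK as a large low-rank or nonnegative "signal" plus a Wishart-type perturbation, and control the top eigenvalue with the Rayleigh quotient, Weyl's inequality (Cor.~\ref{cor:weyls}), the row-sum bound (Lemma~\ref{lem:row-sum}) and the BBP result (Thm.~\ref{thm:rankone-bbp}).

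\textbf{ReLU.} Replacing $\br\br^\top$ by $d\mathbf{1}\mathbf{1}^\top$ and $\bD$ by $d\bI$ gives
\begin{equation*}
\bK\approx \alpha\bX\bX^\top+\beta d\,\mathbf{1}\mathbf{1}^\top+\gamma d\bI .
\end{equation*}
The shift $\gamma d$ simply adds to every eigenvalue. The rank-one term has strength $\theta=\beta d n=mn/(2\pi)$. This is far larger than the bulk edge of $\alpha\bX\bX^\top$, which is of order $\alpha(\sqrt d+\sqrt n)^2$, so Thm.~\ref{thm:rankone-bbp} applies in its supercritical branch. I will solve $G_\mu(z)=1/\theta$ asymptotically, where $\mu$ is the (scaled Marchenko--Pastur) spectral law of $\alpha\bX\bX^\top$. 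Using
\begin{equation*}
G_\mu(z)=\frac1z+\frac{\int t\,d\mu}{z^2}+O(z^{-3}),
\end{equation*}
the outlier is $\lambda\approx\theta+\int t\,d\mu(t)=\theta+\alpha\,\mathrm{Tr}(\bX\bX^\top)/n\approx \theta+\alpha d$. Equivalently, first-order perturbation gives the same value: the Rayleigh quotient at $\mathbf{1}/\sqrt n$ is a lower bound, and the correction is $O(\|\alpha\bX\bX^\top\|^2/\theta)$. Adding the three pieces,
\begin{equation*}
\frac{mn}{2\pi}+\Big(\frac d4+\frac m4\Big)+\Big(\frac d4+\frac m4-\frac m{2\pi}\Big),
\end{equation*}
which is exactly the claimed expression.

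\textbf{ReGLU.} I would split $\tilde\bK=\bA+\bB$ with
\begin{equation*}
\bA=\tilde\alpha(\bX\bX^\top)\odot(\bX\bX^\top)+\tilde\gamma\bD^2,\qquad \bB=\tilde\beta(\br\br^\top)\odot(\bX\bX^\top).
\end{equation*}
The matrix $\bA$ is entrywise nonnegative. The matrix $\bB$ is a Schur product of PSD matrices, so it is PSD by the Schur product theorem, and $\bB\approx\tilde\beta d\,\bX\bX^\top$.

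For $\bA$, both bounds come from its row sums. The diagonal entry is $(\tilde\alpha+\tilde\gamma)d^2=\frac m2-\frac m{2\pi}+d-\frac d{2\pi}$. The off-diagonal part of a row sums to $\tilde\alpha\sum_{j\neq i}(\bx_i^\top\bx_j)^2\approx\tilde\alpha(n-1)d\approx(\frac m{4d}+\frac12)n$. Lemma~\ref{lem:row-sum} then pins $\lambda_1(\bA)$ between the minimum and maximum row sums, and these coincide to leading order.

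For the full matrix, Weyl gives $\lambda_1(\bA)+\lambda_n(\bB)\le\lambda_1(\tilde\bK)\le\lambda_1(\bA)+\lambda_1(\bB)$. Here $\lambda_n(\bB)\ge0$. By Marchenko--Pastur, $\lambda_1(\bB)\approx\tilde\beta d(\sqrt n+\sqrt d)^2\approx(\frac1{2\pi}+\frac m{2\pi d})n$ plus lower-order terms. These lower-order terms, together with the diagonal constants, are absorbed into the slightly cruder constant $\frac m2+d$ of the stated upper bound. This yields the sandwich in the statement, and hence $\lambda_1(\tilde\bK)=\Theta(mn/d)$ and $\lambda_1(\bK)=\Theta(mn)$ for $m\gg d$.

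\textbf{Main obstacle.} The hardest step is justifying the concentration replacements uniformly over $i$. I need $\max_i|r_i^2-d|=O(\sqrt{d\log n})$, and I need every row sum $\sum_{j\ne i}(\bx_i^\top\bx_j)^2$ to be within $(n-1)d\,(1+o(1))$ simultaneously for all $i$. Only then is the row-sum sandwich tight, and only then is swapping $\br\br^\top$ for $d\mathbf{1}\mathbf{1}^\top$ harmless at the scale of the spike. I would handle both by Gaussian chi-square and sub-exponential tail bounds with a union bound over $i$. The resulting errors must be tracked as lower-order terms relative to $mn$ and $mn/d$ respectively; this is the reason the statement uses "$\approx$" and "$\lesssim$" rather than exact equalities.
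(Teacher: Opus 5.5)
Your proposal follows the paper's proof essentially step for step. It uses the same concentration substitutions ($\br\br^\top\to d\mathbf{1}\mathbf{1}^\top$, $\bD\to d\bI$, $(\br\br^\top)\odot(\bX\bX^\top)\approx d\,\bX\bX^\top$). For ReLU it uses the BBP outlier for the spike. For ReGLU it combines Weyl's inequality, the Schur product theorem, the row-sum lemma and the Marchenko--Pastur edge. The differences are cosmetic. The paper inverts the Marchenko--Pastur Cauchy transform in closed form, whereas you expand $G_\mu$ at large $z$; both give $\beta dn+(\alpha+\gamma)d$ to the order that matters. The paper also replaces $\tilde\gamma\bD^2$ by the shift $\tilde\gamma d^2\bI$ and applies the row-sum bound to the Hadamard square alone, whereas you fold $\tilde\gamma\bD^2$ into the nonnegative matrix $\bA$.

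One justification in your ReGLU upper bound is wrong. The constant $\frac m2+d$ is not ``slightly cruder'' than what you need, so there is no slack in it:
\begin{itemize}
\item $\frac m2+d$ equals $(\tilde\alpha+\tilde\beta+\tilde\gamma)d^2$ exactly.
\item That is precisely your diagonal constant $(\tilde\alpha+\tilde\gamma)d^2$ plus the $\tilde\beta d\cdot d$ part of $\lambda_1(\mathbf{B})$.
\item The remaining cross term from $(\sqrt n+\sqrt d)^2$ is $2\tilde\beta d\sqrt{nd}=\frac{m+d}{\pi}\sqrt{n/d}$. It already exceeds $m/\pi$ once $n\ge d$, so it is not absorbed; it is simply discarded as $o(mn/d)$ when $n\gg d$.
\end{itemize}
The paper does exactly the same thing: it writes $\lambda_1(\bX\bX^\top)\to n+d+2\sqrt{nd}$ and then drops the $2\sqrt{nd}$. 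So your argument proves what the paper proves. You should say that the upper bound holds only to leading order in $n$, which is how the $\lesssim$ must be read, rather than claim the error fits under the constant.

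Your side remark in the ReLU case has the same issue. Bounding the perturbative correction by $O(\|\alpha\bX\bX^\top\|^2/\theta)$ is too crude to certify the $O(m)$ constants when $n\gtrsim d^2$, because that bound is of order $mn/d^2$. The actual second-order term is $\|(\bI-\mathbf{u}\mathbf{u}^\top)\alpha\bX\bX^\top\mathbf{u}\|^2/\theta\approx\alpha^2/\beta$, which is what your BBP expansion gives. Rely on the expansion there, not on that bound.
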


\begin{proof} 

\textbf{1) ReLU model.}

For ReLU model, we note that by law of large numbers, the expression can be approximately written as:
\begin{equation*}
    \bK\approx \alpha \bX\bX^{\top} + \beta d \mathbf{1}\mathbf{1}^{\top} + \gamma d \bI.
\end{equation*}

This form of expression satisfies the condition of Thm.\ref{thm:rankone-bbp}. To see this, we define $\bW = \frac{1}{d}\bX\bX^{\top}$ and $\mathbf{M} = \frac{\bK}{d\alpha } - \frac{\gamma}{\alpha}\bI = \bW + \frac{\beta n}{\alpha} \mathbf{u}\mathbf{u}^{\top}$, where $\mathbf{u} = \frac{1}{\sqrt{n}}\mathbf{1}$ is a unit vector. Note that $\bW$ is the Wishart matrix, the Cauchy tranformation of which has the following form:
\begin{equation*}
    G_{\mu_{\bW}}(z) = \frac{-1+c+z-\sqrt{(z-a)(z-b)}}{2cz},
\end{equation*}
and its inverse function:
\begin{equation*}
    G_{\mu_{\bW}}^{-1}(y) = \frac{(c-1)y-1}{y(cy-1)}.
\end{equation*}
where $c = \frac{n}{d}, a=(1-\sqrt{c})^2, b=(1+\sqrt{c})^2$. Hence 

\begin{equation*}
    G_{\mu_{\bW}}(b^+) = \lim_{z\downarrow b}G_{\mu_{\bW}}(z) = (c+\sqrt{c})^{-1}
\end{equation*}    
and
\begin{equation*}
    G_{\mu_{\bW}}^{-1}\left(\frac{\alpha}{\beta n}\right) = \frac{\beta n}{\alpha} + 1 + \frac{\alpha}{\beta \sqrt{nd}-\alpha}.
\end{equation*}

Since $\bK = \alpha d\mathbf{M} + \gamma d \bI$, by Thm.\ref{thm:rankone-bbp}, we have that:
\begin{equation*}
    \lambda_1(\bK) \approx \beta d n + (\alpha + \gamma)d = \frac{m}{2\pi}\cdot n + \frac{d}{2} + \frac{(\pi-1)m}{2\pi}
\end{equation*}
when $\frac{\beta n}{\alpha}>c+\sqrt{c}$, or $\left(\frac{\beta\sqrt{d}}{\alpha}-\frac{1}{\sqrt{d}}\right)\sqrt{n}>1$, which is a quite minor condition since LHS is of order $\sqrt{nd}$.

\textbf{2) ReGLU model.}

For ReGLU model, similarly, we have:
\begin{equation*}
    \tilde{\bK} \approx \tilde{\alpha} (\bX\bX^{\top})\odot(\bX\bX^{\top}) + \tilde{\beta} d \bX\bX^{\top}+\tilde{\gamma}d^2 \bI.
\end{equation*}

We know that $\bX\bX^{\top}$ is positive semi-definite. Therefore, by Schur product theorem \citep[Theorem 7.5.3]{horn2012matrix}, $(\bX\bX^{\top})\odot (\bX\bX^{\top})$ is also positive semi-definite. Hence, by Weyl's inequality (Thm.\ref{cor:weyls}), we have:
\begin{equation}\label{eq:reglu-kernel-lmax-bound-intermediate}
    \max\left\{\lambda_1(\tilde{\alpha}(\bX\bX^{\top})\odot (\bX\bX^{\top})), \lambda_1(\tilde{\beta}d \bX\bX^{\top})\right\} \leq \lambda_1(\tilde{\bK}) -\tilde{\gamma}d^2 \leq \lambda_1(\tilde{\alpha}(\bX\bX^{\top})\odot (\bX\bX^{\top})) + \lambda_1(\tilde{\beta} d \bX\bX^{\top}).
\end{equation}

For $\bX\bX^{\top}$, since Wishart matrix $\bW=\frac{1}{d}\bX\bX^{\top}$ follows Marchenko Pastur distribution \cite{marchenko1967distribution}, it follows that
\begin{equation*}
    \lambda_1(\bX\bX^{\top}) \to d(1+\sqrt{n/d})^2 = n+d+2\sqrt{nd}.
\end{equation*}

For $\bX\bX^{\top}\odot (\bX\bX^{\top})$, we use our row-sum bound. The sum of $i$-th row of $(\bX\bX^{\top})\odot (\bX\bX^{\top})$ is:
\begin{equation*}
    \sum_{j=1}^n (\bx_i^{\top}\bx_j)^2 = \|\bx_i\|^4 + \bx_i^{\top}\left(\sum_{j\neq i}\bx_j\bx_j^{\top}\right)\bx_i = \|\bx_i\|^4 + \bx_i^{\top}\mathbf{S}\bx_i.
\end{equation*}
Here $\mathbf{S}\in \mathbb{R}^{d\times d}$ is independent of $\bx_i$. The row sum follows the same distribution, hence all concentrating around:
\begin{equation*}
\begin{aligned}
    \mathbb{E}_{\bx, \mathbf{S}}\left[ \|\bx\|^4 + \bx^{\top}\mathbf{S}\bx\right] = d^2 + 2d + \mathbb{E}_{\mathbf{S}}\text{Tr}(\mathbf{S}) = dn + d^2 + d.
\end{aligned}
\end{equation*}
Therefore,
\begin{equation*}
    \lambda_1((\bX\bX^{\top})\odot (\bX\bX^{\top})) \approx dn+d^2+d.
\end{equation*}

Substituting back to (\ref{eq:reglu-kernel-lmax-bound-intermediate}), we obtain that:
\begin{equation*}
    \tilde{\alpha} dn + (\tilde{\alpha} + \tilde{\gamma}) d^2 + \tilde{\alpha} d \;\lesssim\; \lambda_1(\tilde{\bK}) \;\lesssim\; (\tilde{\alpha}+\tilde{\beta})dn + (\tilde{\alpha} + \tilde{\beta} + \tilde{\gamma}) d^2.
\end{equation*}
Equivalently,
\begin{equation*}
    \left(\frac{m}{4 d} + \frac{1}{2}\right)n + \frac{m}{2} - \frac{m}{2\pi} + d - \frac{d}{2\pi} \;\lesssim\; \lambda_1(\tilde{\bK}) \;\lesssim\; \left(\frac{m}{4d} + \frac{m}{2\pi d} + \frac{1}{2} + \frac{1}{2\pi}\right)n + \frac{m}{2} + d.
\end{equation*}

\end{proof}

\subsection{Estimating the Smallest Eigenvalue}

In this section, we proceed to estimate the smallest NTK eigenvalue of the two-layer ReLU and ReGLU models. We begin by the canonical Schur product theorem \citep[Theorem 7.5.3]{horn2012matrix}.

\begin{theorem}[Schur product theorem]\label{thm:schur-product}
    If both matrices $\mathbf{A}, \mathbf{B}\in \mathbb{R}^{n\times m}$ are positive semidefinite, then $\mathbf{A}\odot \mathbf{B}$ is also positive semidefinite.
\end{theorem}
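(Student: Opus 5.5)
The plan is to reduce the statement to rank-one factors of one of the matrices and then use a congruence argument. First, note that positive semidefiniteness only makes sense for square symmetric matrices, so I read the statement with $\mathbf{A},\mathbf{B}\in\mathbb{R}^{n\times n}$ symmetric. Symmetry of $\mathbf{A}\odot\mathbf{B}$ is immediate, since $(\mathbf{A}\odot\mathbf{B})_{ji}=A_{ji}B_{ji}=A_{ij}B_{ij}$. It therefore remains to show that $\bx^{\top}(\mathbf{A}\odot\mathbf{B})\bx\geq 0$ for every $\bx\in\mathbb{R}^n$.

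\textbf{Step 1: decompose $\mathbf{B}$.} By the spectral theorem, $\mathbf{B}=\sum_{k=1}^n \mu_k\mathbf{u}_k\mathbf{u}_k^{\top}$ with all $\mu_k\geq 0$, because $\mathbf{B}$ is PSD. Setting $\mathbf{b}_k=\sqrt{\mu_k}\,\mathbf{u}_k$ gives $\mathbf{B}=\sum_k \mathbf{b}_k\mathbf{b}_k^{\top}$. The Hadamard product is bilinear, so
\begin{equation*}
\mathbf{A}\odot\mathbf{B}=\sum_{k=1}^n \mathbf{A}\odot(\mathbf{b}_k\mathbf{b}_k^{\top}).
\end{equation*}
Since a sum of PSD matrices is PSD, it suffices to prove that each rank-one term $\mathbf{A}\odot(\mathbf{b}\mathbf{b}^{\top})$ is PSD.

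\textbf{Step 2: rank-one case via congruence.} The key observation is the entrywise identity
\begin{equation*}
\bigl(\mathbf{A}\odot(\mathbf{b}\mathbf{b}^{\top})\bigr)_{ij}=b_iA_{ij}b_j,
\end{equation*}
which says that $\mathbf{A}\odot(\mathbf{b}\mathbf{b}^{\top})=\mathrm{diag}(\mathbf{b})\,\mathbf{A}\,\mathrm{diag}(\mathbf{b})$. A matrix of the form $\mathbf{C}^{\top}\mathbf{A}\mathbf{C}$ is PSD whenever $\mathbf{A}$ is. Concretely, with $\mathbf{y}=\mathrm{diag}(\mathbf{b})\bx$ we get $\bx^{\top}\mathrm{diag}(\mathbf{b})\mathbf{A}\,\mathrm{diag}(\mathbf{b})\bx=\mathbf{y}^{\top}\mathbf{A}\mathbf{y}\geq 0$. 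Summing over $k$ then gives
\begin{equation*}
\bx^{\top}(\mathbf{A}\odot\mathbf{B})\bx=\sum_{k=1}^n(\mathbf{b}_k\odot\bx)^{\top}\mathbf{A}(\mathbf{b}_k\odot\bx)\geq 0,
\end{equation*}
which completes the argument.

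There is no substantial obstacle here. The only step that needs care is verifying the entrywise identity in Step 2 and thereby recognising the Hadamard product with a rank-one PSD matrix as a diagonal congruence; everything else is linearity. An equivalent route, if preferred, is the trace identity $\bx^{\top}(\mathbf{A}\odot\mathbf{B})\bx=\mathrm{Tr}\bigl(\mathrm{diag}(\bx)\mathbf{A}\,\mathrm{diag}(\bx)\mathbf{B}\bigr)$. This is nonnegative because the trace of a product of two PSD matrices is nonnegative: writing $\mathbf{B}=\mathbf{B}^{1/2}\mathbf{B}^{1/2}$ turns it into the trace of a PSD matrix.
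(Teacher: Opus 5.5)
Your proof is correct, and you were right to read the paper's $\mathbb{R}^{n\times m}$ as square symmetric $n\times n$ matrices, since positive semidefiniteness is not defined otherwise. The paper does not prove this statement: it quotes it from Horn and Johnson (Theorem 7.5.3) and uses it as a black box. Your decomposition $\mathbf{B}=\sum_k\mathbf{b}_k\mathbf{b}_k^{\top}$, followed by the diagonal congruence $\mathbf{A}\odot(\mathbf{b}\mathbf{b}^{\top})=\mathrm{diag}(\mathbf{b})\,\mathbf{A}\,\mathrm{diag}(\mathbf{b})$, supplies a standard self-contained proof of that cited result. Incidentally, your Step 2 alone already covers one of the paper's two uses of the theorem, since $(\br\br^{\top})\odot(\bX\bX^{\top})=\mathrm{diag}(\br)\,\bX\bX^{\top}\,\mathrm{diag}(\br)$.
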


Furthermore, according to \citet[Lemma 32]{pandit2024universality}, the limiting spectral distribution of $(\bX\bX^{\top})\odot (\bX\bX^{\top})$ is as follows.
\begin{lemma}[Simplified]\label{lem:hadamard-lsd}
    Assume that $X_{ij}\stackrel{\text{i.i.d}}{\sim} \mathcal{N}(0,1)$, $n,d\to \infty$ and $\frac{2n}{d^2}\to \lambda$. Then the limiting spectral distribution of $\frac{1}{d^2}(\bX\bX^{\top})\odot(\bX\bX^{\top})$ follows Marchenko-Pastur distribution,
    \begin{equation*}
    \mu_{\lambda}=\left\{
    \begin{aligned}
        & (1-\lambda^{-1})\delta_0+\nu_{\lambda}, & \lambda>1, \\
        & \nu_{\lambda}, & 0<\lambda\leq 1.
    \end{aligned}
    \right.
    \end{equation*}
    Here $\nu_{\lambda}$ is Marchenko-Pastur distribution with shape parameter $\lambda$.
\end{lemma}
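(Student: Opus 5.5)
The plan is to view $(\bX\bX^{\top})\odot(\bX\bX^{\top})$ as a Gram matrix of lifted features and then apply a Marchenko--Pastur theorem for samples with isotropic covariance but dependent coordinates. The starting identity is
\begin{equation*}
(\bx_i^{\top}\bx_j)^2=\langle \bx_i\bx_i^{\top},\bx_j\bx_j^{\top}\rangle_F .
\end{equation*}
So the Hadamard square is the Gram matrix of the vectors $\bx_i\bx_i^{\top}$. These live in the space of symmetric $d\times d$ matrices, of dimension $D=d(d+1)/2\sim d^2/2$. This is where the ratio $n/D\to 2n/d^2=\lambda$ comes from, and hence the shape parameter $\lambda$ in the statement.

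\textbf{Step 1 (centering).} Write $\bx_i\bx_i^{\top}=\bI+\bA_i$ with $\bA_i=\bx_i\bx_i^{\top}-\bI$. Then
\begin{equation*}
(\bx_i^{\top}\bx_j)^2=d+\mathrm{Tr}(\bA_i)+\mathrm{Tr}(\bA_j)+\langle\bA_i,\bA_j\rangle_F .
\end{equation*}
In matrix form, $(\bX\bX^{\top})\odot(\bX\bX^{\top})$ equals the Gram matrix $\bigl[\langle\bA_i,\bA_j\rangle_F\bigr]$ plus a perturbation of rank at most $2$, namely $d\mathbf{1}\mathbf{1}^{\top}+\mathbf{t}\mathbf{1}^{\top}+\mathbf{1}\mathbf{t}^{\top}$ with $t_i=\mathrm{Tr}(\bA_i)$. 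By the rank inequality for empirical spectral distributions, a rank-$k$ perturbation moves the Kolmogorov distance by at most $k/n\to0$. So it suffices to find the limiting spectral distribution of $\frac{1}{d^2}\bigl[\langle\bA_i,\bA_j\rangle_F\bigr]$.

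\textbf{Step 2 (isotropy of the lifted features).} Express $\bA_i$ in an orthonormal basis of symmetric matrices, giving $\bm{\psi}_i\in\mathbb{R}^D$ with coordinates
\begin{equation*}
x_a^2-1 \quad (a\le d), \qquad \sqrt{2}\,x_ax_b \quad (a<b).
\end{equation*}
A direct moment computation shows these coordinates are uncorrelated, each with variance $2$. Hence $\mathrm{Cov}(\bm{\psi}_i)=2\bI_D$ and $\frac{1}{d^2}\langle\bA_i,\bA_j\rangle_F=\frac{2D}{d^2}\cdot\frac{1}{2D}\bm{\psi}_i^{\top}\bm{\psi}_j$, where the prefactor $2D/d^2\to1$. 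The $\bm{\psi}_i$ are i.i.d. across $i$, so the matrix is, up to normalization, an $n\times n$ companion of a sample covariance matrix with identity population covariance. I will then invoke the Bai--Zhou theorem, which gives the Marchenko--Pastur law with ratio $n/D\to\lambda$ for such matrices. When $\lambda>1$ the $n\times n$ version carries an atom $(1-\lambda^{-1})\delta_0$, since its rank is at most $D$. This produces exactly $\mu_\lambda$.

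\textbf{Step 3 (main obstacle).} The hypothesis of Bai--Zhou is a concentration condition: for every deterministic $\mathbf{B}$ with $\|\mathbf{B}\|_{op}\le1$,
\begin{equation*}
\frac{1}{D^2}\mathrm{Var}\bigl(\bm{\psi}^{\top}\mathbf{B}\bm{\psi}\bigr)\to0 .
\end{equation*}
This is the genuinely new step, because the coordinates of $\bm{\psi}$ are dependent. Here $\bm{\psi}^{\top}\mathbf{B}\bm{\psi}$ is a degree-$4$ polynomial in the Gaussian vector $\bx$. I would first expand it in the Wiener chaos and compute the variance with Wick's formula, aiming for the bound $O(d^2\|\mathbf{B}\|_F^2/D+\|\mathbf{B}\|_{op}^2d^2)=O(d^3)=o(d^4)$. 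If the bookkeeping becomes unwieldy, the fallback is the Gaussian Poincaré inequality, which bounds the variance by $\mathbb{E}\|\nabla_{\bx}(\bm{\psi}^{\top}\mathbf{B}\bm{\psi})\|^2\lesssim\|\mathbf{B}\|_{op}^2\,\mathbb{E}\|\bx\|^6=O(d^3)$. That also suffices, and would complete the proof.
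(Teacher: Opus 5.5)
Your proposal is correct. It takes a genuinely different route from the paper, which gives no argument of its own here: it imports the statement from \citet[Lemma 32]{pandit2024universality}, where it comes out of a general analysis of inner-product kernel random matrices in the quadratic regime $n\asymp d^2$.

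You instead exploit the fact that the kernel $f(x)=x^2$ is exactly quadratic. The identity $(\bx_i^{\top}\bx_j)^2=d+t_i+t_j+\langle\bA_i,\bA_j\rangle_F$ is exact, so reducing to a Gram matrix costs only a rank-$2$ perturbation, with no approximation error to control. The lifted features $\bm{\psi}_i$ are i.i.d.\ with covariance exactly $2\bI_D$, where $D=d(d+1)/2$. Bai--Zhou then needs only the quadratic-form concentration, and the Gaussian Poincar\'e inequality supplies it with room to spare: $O(d^3)\|\mathbf{B}\|_{op}^2$ against the required $o(D^2)=o(d^4)$. Passing from the $D\times D$ sample covariance (ratio $1/\lambda$) to the $n\times n$ companion and absorbing $2D/d^2\to 1$ gives exactly $\mu_\lambda$, atom included.

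What your route buys is a short, self-contained proof that also explains the shape parameter: $2n/d^2$ is just $n$ divided by the dimension of the space of symmetric $d\times d$ matrices. What the citation buys is generality over other kernels, which this lemma does not need.

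Neither route controls $\lambda_n$ individually. Weak convergence of the spectral distribution does not imply that the smallest eigenvalue of $\frac{1}{d^2}(\bX\bX^{\top})\odot(\bX\bX^{\top})$ converges to the left edge $\max\{0,1-\sqrt{\lambda}\}^2$. That is precisely how the paper later uses this lemma in Prop.~\ref{prop:smallest-eigenvalue}.

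One slip to fix in Step 3: the Wick bound you wrote, $O(d^2\|\mathbf{B}\|_F^2/D+\|\mathbf{B}\|_{op}^2d^2)$, is actually $O(d^2)$, since $\|\mathbf{B}\|_F^2\le D\|\mathbf{B}\|_{op}^2$, and it is false. For $\mathbf{B}=\bI$ you have $\bm{\psi}^{\top}\bm{\psi}=\|\bx\|^4-2\|\bx\|^2+d$, whose variance is $8d(d+1)(d+2)+8d\asymp d^3$. So the Poincar\'e estimate is sharp in order and should be your main argument rather than the fallback. This does not affect the conclusion, since $d^3=o(D^2)$.
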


We now give our estimation of the smallest NTK eigenvalue.

\begin{proposition} [Smallest eigenvalue]\label{prop:smallest-eigenvalue}
    Consider the two-layer ReLU and ReGLU models described in Sec.\ref{sec:optimization-smaller-cond-num}.

    1) For ReLU model, the smallest eigenvalue of its NTK matrix is given by
    \begin{equation*}
        \lambda_{n}(\bK) \approx \frac{m+d}{4}(s^2+1) - \frac{m}{2\pi},
    \end{equation*}
    where $s:=\max\left\{0, 1-\sqrt{n/d}\right\}$.

    2) For ReGLU model, the smallest eigenvalue of its NTK matrix is given by
    \begin{equation*}
        \lambda_n(\tilde{\bK})\approx \frac{m+2d}{4}(\tilde{s}^2+1) + \frac{m+d}{2\pi}(s^2-1),
    \end{equation*}
    where $\tilde{s} = \max\left\{0, 1-\sqrt{2n}/d\right\}$.

    Therefore, both $\lambda_n(\bK)$ and $\lambda_n(\tilde{\bK})$ is of order $\Theta(m+d)$. When $n>d$, we have that 
    \begin{equation*}
        \lambda_n(\tilde{\bK})>\lambda_{n}(\bK).
    \end{equation*}
\end{proposition}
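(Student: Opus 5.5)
The plan is to work with the law-of-large-numbers forms of the two NTK matrices already used in Prop.~\ref{prop:largest-eigenvalue}:
\begin{equation*}
    \bK\approx \alpha \bX\bX^{\top} + \beta d \mathbf{1}\mathbf{1}^{\top} + \gamma d \bI, \qquad \tilde{\bK} \approx \tilde{\alpha} (\bX\bX^{\top})\odot(\bX\bX^{\top}) + \tilde{\beta} d \bX\bX^{\top}+\tilde{\gamma}d^2 \bI.
\end{equation*}
Here the replacements $\bD\to d\bI$ and $\br\br^{\top}\to d\mathbf{1}\mathbf{1}^{\top}$ are justified by the concentration $\|\bx_i\|^2\approx d$. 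In both cases the identity term shifts every eigenvalue by the same amount, so it suffices to estimate the bottom edge of the remaining positive semidefinite part.

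\textbf{ReLU model.} I would apply the rank-one interlacing from \citet[Corollary~4.3.9]{horn2012matrix}, exactly as in the proof sketch:
\begin{equation*}
    \lambda_n(\alpha\bX\bX^{\top})\le\lambda_n(\alpha\bX\bX^{\top}+\beta d\mathbf{1}\mathbf{1}^{\top})\le\lambda_{n-1}(\alpha\bX\bX^{\top}).
\end{equation*}
By the Marchenko--Pastur law, both $\lambda_n$ and $\lambda_{n-1}$ of $\bX\bX^{\top}$ converge to the lower edge $d(1-\sqrt{n/d})^2$ when $n\le d$, and to $0$ when $n>d$. In both cases this limit equals $ds^2$. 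Hence
\begin{equation*}
    \lambda_n(\bK)\approx \alpha d s^2+\gamma d,
\end{equation*}
and substituting the values of $\alpha$ and $\gamma$ gives $\frac{m+d}{4}(s^2+1)-\frac{m}{2\pi}$.

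\textbf{ReGLU model.} Both $(\bX\bX^{\top})\odot(\bX\bX^{\top})$ and $\bX\bX^{\top}$ are positive semidefinite, by Thm.~\ref{thm:schur-product}. The lower Weyl bound (Cor.~\ref{cor:weyls}) then gives
\begin{equation*}
    \lambda_n(\tilde{\bK})\gtrsim \tilde{\alpha}\,\lambda_n\big((\bX\bX^{\top})^{\odot 2}\big)+\tilde{\beta} d\,\lambda_n(\bX\bX^{\top})+\tilde{\gamma}d^2.
\end{equation*}
I would use the right-hand side as the approximation itself, which amounts to assuming that the bottom eigenspaces of the two PSD pieces nearly align. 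For the Hadamard square I would use Lemma~\ref{lem:hadamard-lsd} with shape parameter $2n/d^2$ to read off its spectral edges. The target formula requires the bottom eigenvalue to equal $d^2(\tilde{s}^2+1)$. My reading is that the $d^2\bI$ contribution coming from the diagonal entries $\|\bx_i\|^4\approx d^2$ sits on top of a Marchenko--Pastur bulk whose lower edge is $d^2(1-\sqrt{2n}/d)^2 = d^2\tilde{s}^2$.

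\textbf{Main obstacle.} This identification is the step I expect to be hardest. It requires carefully separating the diagonal and off-diagonal parts of $(\bX\bX^{\top})^{\odot 2}$, in the spirit of the El Karoui linearization, and matching them to the normalization in \citet{pandit2024universality}. If the lemma's normalization turns out to be different, I would instead subtract the diagonal explicitly and apply the lemma only to the centered off-diagonal part.

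Granting this, the Wishart term contributes $\tilde{\beta}d\cdot ds^2=\frac{m+d}{2\pi}s^2$. Since $\tilde{\alpha}d^2=\frac{m+2d}{4}$ and $\tilde{\gamma}d^2=\frac{m+2d}{4}-\frac{m+d}{2\pi}$, collecting terms gives the stated formula $\frac{m+2d}{4}(\tilde{s}^2+1)+\frac{m+d}{2\pi}(s^2-1)$.

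\textbf{Conclusion.} Both expressions are $\Theta(m+d)$, because $s,\tilde{s}\in[0,1]$ and $\frac14>\frac1{2\pi}$. For the comparison, $n>d$ forces $s=0$, so
\begin{equation*}
    \lambda_n(\tilde{\bK})-\lambda_n(\bK)\gtrsim \frac{m+2d}{4}-\frac{m+d}{2\pi}-\frac{m+d}{4}+\frac{m}{2\pi}=d\Big(\frac14-\frac1{2\pi}\Big)>0,
\end{equation*}
where $\tilde{s}^2\ge0$ has been used to drop the $\tilde{s}$ term.
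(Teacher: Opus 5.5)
Your route is the paper's: the same law-of-large-numbers reduction, rank-one interlacing for ReLU, and Schur product plus the lower Weyl bound plus Lemma~\ref{lem:hadamard-lsd} for ReGLU. The ReLU part and the final comparison are fine. The ReGLU bookkeeping, however, does not produce the stated formula.

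The target does not require $\lambda_n((\bX\bX^{\top})\odot(\bX\bX^{\top}))\approx d^2(\tilde s^2+1)$. Subtract $\tilde\beta d^2 s^2+\tilde\gamma d^2=\frac{m+d}{2\pi}s^2+\frac{m+2d}{4}-\frac{m+d}{2\pi}$ from it, and what remains is exactly $\frac{m+2d}{4}\tilde s^2=\tilde\alpha d^2\tilde s^2$. So the target requires $\lambda_n\approx d^2\tilde s^2$. The ``$+1$'' inside $(\tilde s^2+1)$ comes from the $\tilde\gamma\bD^2\approx\tilde\gamma d^2\bI$ term, which you already add separately. Under your premise, collecting terms actually gives $\frac{m+2d}{4}(\tilde s^2+2)+\frac{m+d}{2\pi}(s^2-1)$. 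Your claim that it yields the stated formula is therefore an arithmetic slip that counts $\frac{m+2d}{4}$ twice.

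The premise is also false on its own terms. Lemma~\ref{lem:hadamard-lsd} describes the full matrix $\frac{1}{d^2}(\bX\bX^{\top})\odot(\bX\bX^{\top})$, diagonal included. A Marchenko--Pastur law with edges $(1\pm\sqrt{2n}/d)^2$ has mean $1$, which is precisely the normalized trace $\frac{1}{nd^2}\sum_i\|\bx_i\|^4\approx 1$. If the diagonal $d^2\bI$ sat on top of such a bulk, the mean eigenvalue would be about $2$. Likewise, for $n\ll d^2$ the law collapses to $\delta_1$. This matches $\frac{1}{d^2}(\bX\bX^{\top})\odot(\bX\bX^{\top})\approx\frac1d\mathbf{1}\mathbf{1}^{\top}+\bI$ from the main text, where that $\bI$ \emph{is} the diagonal; your reading would put the bottom edge at $2$ instead of $1$.

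So the obstacle you anticipate is not there. Your fallback (removing the diagonal and applying the lemma to the off-diagonal part) would misapply the lemma and reproduce the same double count. Instead, read off $\lambda_n(\tilde\alpha(\bX\bX^{\top})\odot(\bX\bX^{\top}))\to\tilde\alpha d^2\tilde s^2$ directly from the lemma, as the paper does. Adding $\tilde\beta d^2 s^2$ and $\tilde\gamma d^2$ then gives the stated formula.
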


\begin{proof}

\textbf{1) ReLU model.} 

For ReLU mdoel, we recall that its NTK matrix can be written as
\begin{equation*}
    \bK = \alpha \bX\bX^{\top} + \beta \br\br^{\top} + \gamma \bD.
\end{equation*}

First, rank-1 update $\br\br^{\top}$ has negligible effect on the smallest eigenvalue of Wishart matrix. According to \citet[Corollary 4.3.9]{horn2012matrix}, we have
\begin{equation*}
    \lambda_n(\alpha \bX\bX^{\top})\leq \lambda_n(\alpha \bX\bX^{\top}+\beta \br\br^{\top}) \leq \lambda_{n-1}(\alpha \bX\bX^{\top}).
\end{equation*}
When $n>d+1$, by the property of Wishart matrix, $\lambda_n(\alpha \bX\bX^{\top}) = \lambda_{n-1}(\alpha \bX\bX^{\top})=0$, hence $\lambda_n(\alpha \bX\bX^{\top}+\beta \br\br^{\top}) = \lambda_n(\alpha \bX\bX^{\top})=0$. 

When $n=d+1$, we know that $\lambda_n(\alpha \bX\bX^{\top}) = 0, \lambda_{n-1}(\alpha \bX\bX^{\top})=\alpha d\left(1-\sqrt{\frac{d}{n}}\right)^2 \approx \frac{\alpha}{4n}$, hence $\lambda_n(\alpha \bX\bX^{\top}+\beta \br\br^{\top})\lesssim \frac{\alpha}{4n}$ which is also negligible. 

When $n<d+1$, we consider the Wishart matrix. We define the classical location of Marchenko-Pastur distribution, $\gamma_i$, as follows,
\begin{equation*}
    1-\frac{i}{n} = \int_{0}^{\gamma_i} \rho_{\text{MP}}(x)\ dx,
\end{equation*}
where $\rho_{\text{MP}}(x) = \frac{\sqrt{(x-a)(b-x)}}{2\pi cx}$ is the Marchenko-Pastur distribution. Here $c=n/d$ and the eigenvalues are supported on $[a,b]$, where $a=(1-\sqrt{c})^2, b=(1+\sqrt{c})^2$. Since we care about eigenvalues close to $a$, we may approximate the distribution as $\rho_{\text{MP}}\approx C_a\sqrt{x-a}$, where $C_a=(\pi c^{3/4} a)^{-1}$. It follows from calculation that:
\begin{equation*}
    \gamma_{n-1}-\gamma_{n} \approx \left(\frac{3}{2C_a n}\right)^{\frac{2}{3}}\left(2^{\frac{2}{3}}-1\right) = O(n^{-\frac{2}{3}}).
\end{equation*}

\citet[Theorem 3]{kafetzopoulos2023local} give a rigidity bound concerning the eigenvalues of Wishart matrix, which states that $\lvert \lambda_{n}-\gamma_{n}\rvert, \lvert \lambda_{n-1}-\gamma_{n-1}\rvert\sim O\left(\frac{\log n}{n^2}\right)$. Therefore,
\begin{equation*}
    \lambda_{n-1}-\lambda_{n}\leq \gamma_{n-1}-\gamma_n + \lvert \lambda_{n}-\gamma_{n}\rvert + \lvert \lambda_{n-1}-\gamma_{n-1}\rvert \lesssim O(n^{\frac{2}{3}}).
\end{equation*}
This is also negligible.

Second, consider matrix $\alpha \bX\bX^{\top}$. Define $s=\max\left\{0, 1-\sqrt{n/d}\right\}$, by property of Wishart matrix, we have that
\begin{equation*}
    \lambda_{n}(\alpha \bX\bX^{\top}) \to\alpha ds^2.
\end{equation*}

Finally, since the diagonal elements of $\bD$ are sampled from the same distribution, we have that $D_{ii}\approx d$. Therefore,
\begin{equation*}
    \lambda_{n}(\bK) \approx \frac{m+d}{4}(s^2+1) - \frac{m}{2\pi}.
\end{equation*}

\textbf{2) ReGLU model.}

For ReGLU model, recall that
\begin{equation}
    \tilde{\bK} = \tilde{\alpha} (\bX\bX^{\top}) \odot (\bX\bX^{\top}) + \tilde{\beta} (\br\br^{\top})\odot (\bX\bX^{\top}) + \tilde{\gamma}\bD^2.
\end{equation}

First, we notice that $\bX\bX^{\top}$ is positive semidefinite. To see this, consider the Rayleigh quotient: 
\begin{equation*}
\begin{aligned}
    \lambda_{n}(\bX\bX^{\top}) = \min_{\|\mathbf{v}\|=1}\mathbf{v}^{\top}\bX\bX^{\top}\mathbf{v} = \|\bX^{\top}\mathbf{v}\|^2\geq 0.
\end{aligned}
\end{equation*}
Similarly, $\br\br^{\top}$ is also positive semidefinite. By Thm.\ref{thm:schur-product}, $\br\br^{\top}\odot\bX\bX^{\top}$ is also positive semidefinite. Hence $\lambda_{n}(\br\br^{\top}\odot \bX\bX^{\top})\geq 0$. By Weyl's inequality, we then have:
\begin{equation*}
\begin{aligned}
    & \lambda_{n}(\tilde{\alpha} (\bX\bX^{\top}) \odot (\bX\bX^{\top}) + \tilde{\beta} (\br\br^{\top})\odot (\bX\bX^{\top})) \\
    \geq & \lambda_{n}(\tilde{\alpha} (\bX\bX^{\top}) \odot (\bX\bX^{\top})) + \lambda_n(\tilde{\beta} (\br\br^{\top})\odot (\bX\bX^{\top})) \\
    \approx & \lambda_{n}(\tilde{\alpha} (\bX\bX^{\top}) \odot (\bX\bX^{\top})) + \lambda_n(\tilde{\beta} d(\bX\bX^{\top})).
\end{aligned}
\end{equation*}

Second, we use the result proved by \citet{pandit2024universality} (Lem.\ref{lem:hadamard-lsd}). Define $\tilde{s} = \max\left\{0, 1-\sqrt{2n}/d\right\}$, we have
\begin{equation*}
    \lambda_n\left(\tilde{\alpha} (\bX\bX^{\top}) \odot (\bX\bX^{\top})\right)\to \tilde{\alpha} d^2\tilde{s}^2.
\end{equation*}

Finally, since the diagonal elements of $\bD$ are sampled from the same distribution, we have that $D_{ii}\approx d$. Therefore,
\begin{equation*}
    \lambda_n(\tilde{\bK})\approx \frac{m+2d}{4}(\tilde{s}^2+1) + \frac{m+d}{2\pi}(s^2-1).
\end{equation*}

\textbf{3) Comparing two eigenvalues.}

Finally, we compare the two smallest eigenvalues. Subtracting one from another, we have
\begin{equation*}
    \lambda_n(\tilde{\bK})-\lambda_n(\bK) \approx \frac{m+2d}{4}\tilde{s}^2 + \left(\frac{1}{4}-\frac{1}{2\pi}\right)[d-(m+d)s^2].
\end{equation*}

Because $n>d$, $s=0$. Therefore,
\begin{equation*}
    \lambda_n(\tilde{\bK})-\lambda_n(\bK) \approx \frac{m+2d}{4}\tilde{s}^2 + \left(\frac{1}{4}-\frac{1}{2\pi}\right)d > 0.
\end{equation*}

\end{proof}

%% file: Appendices/2.loss_crossing.tex
\section{Proof of Loss Crossing Results}
\label{app:loss-crossing}

\subsection{Proof of Prop.\ref{prop:loss-crossing}}

Before proving Prop.\ref{prop:loss-crossing}, we first prove the following lemma.

\begin{lemma} \label{lem:expected-quadratic-form}
    For random vector $\mathbf{v}\in\mathbb{R}^n$ with mean $\bm{\mu}$ and coraviance matrix $\bSigma$, for any fixed matrix $\bA\in\mathbb{R}^{n\times n}$:
    \begin{equation*}
        \mathbb{E}_{\mathbf{v}}[\mathbf{v}^{\top}\bA\mathbf{v}] = \textup{Tr}[\bA\bSigma] + \bm{\mu}^{\top} \bA\bm{\mu}.
    \end{equation*}
\end{lemma}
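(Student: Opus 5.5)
The plan is to expand the quadratic form entrywise and use linearity of expectation; no earlier result of the paper is needed. First write $\mathbf{v}^{\top}\bA\mathbf{v}$ as a scalar and replace it by its own trace, $\mathbf{v}^{\top}\bA\mathbf{v} = \textup{Tr}[\mathbf{v}^{\top}\bA\mathbf{v}] = \textup{Tr}[\bA\mathbf{v}\mathbf{v}^{\top}]$, using the cyclic property of the trace. Since $\bA$ is fixed and both trace and expectation are linear, they commute, which gives
\begin{equation*}
    \mathbb{E}_{\mathbf{v}}[\mathbf{v}^{\top}\bA\mathbf{v}] = \textup{Tr}\left[\bA\,\mathbb{E}_{\mathbf{v}}[\mathbf{v}\mathbf{v}^{\top}]\right].
\end{equation*}

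Next, express the second-moment matrix through the mean and covariance. By definition $\bSigma = \mathbb{E}[(\mathbf{v}-\bm{\mu})(\mathbf{v}-\bm{\mu})^{\top}]$, and expanding gives $\mathbb{E}[\mathbf{v}\mathbf{v}^{\top}] = \bSigma + \bm{\mu}\bm{\mu}^{\top}$. Substituting yields $\textup{Tr}[\bA\bSigma] + \textup{Tr}[\bA\bm{\mu}\bm{\mu}^{\top}]$. Applying the cyclic property again, $\textup{Tr}[\bA\bm{\mu}\bm{\mu}^{\top}] = \bm{\mu}^{\top}\bA\bm{\mu}$, which is the claim.

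There is no real obstacle here. The only points needing care are the existence of second moments, which is implicit in assuming the covariance $\bSigma$ exists, and the fact that $\bA$ need not be symmetric. The latter does not affect the argument, because the identity $\mathbf{v}^{\top}\bA\mathbf{v} = \textup{Tr}[\bA\mathbf{v}\mathbf{v}^{\top}]$ holds for any square matrix $\bA$.

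As an alternative, one can argue coordinatewise. Write $\mathbb{E}[\sum_{i,j}A_{ij}v_iv_j] = \sum_{i,j}A_{ij}(\Sigma_{ij}+\mu_i\mu_j)$ and recognize the two resulting sums as $\textup{Tr}[\bA\bSigma^{\top}] = \textup{Tr}[\bA\bSigma]$ (using that $\bSigma$ is symmetric) and $\bm{\mu}^{\top}\bA\bm{\mu}$.
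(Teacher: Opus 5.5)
Your proof is correct and follows essentially the same route as the paper: rewrite $\mathbf{v}^{\top}\bA\mathbf{v}$ as $\textup{Tr}[\bA\mathbf{v}\mathbf{v}^{\top}]$, interchange trace and expectation, and use $\mathbb{E}[\mathbf{v}\mathbf{v}^{\top}] = \bSigma + \bm{\mu}\bm{\mu}^{\top}$. The paper obtains that decomposition by explicitly expanding $\mathbf{v}\mathbf{v}^{\top} = (\mathbf{v}-\bm{\mu})(\mathbf{v}-\bm{\mu})^{\top} + \mathbf{v}\bm{\mu}^{\top} + \bm{\mu}\mathbf{v}^{\top} - \bm{\mu}\bm{\mu}^{\top}$ inside the expectation, which is the same step written out in full.
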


\begin{proof}

Using the properties of trace, we have
\begin{equation*}
\begin{aligned}
    \mathbb{E}_{\mathbf{v}}[\mathbf{v}^{\top}\bA\mathbf{v}] = \mathbb{E}_{\mathbf{v}}[\text{Tr}(\mathbf{v}^{\top}\bA\mathbf{v})] 
    = \mathbb{E}_{\mathbf{v}}[\text{Tr}(\bA\mathbf{v}\mathbf{v}^{\top})].
\end{aligned}
\end{equation*}

Since the trace is interchangeable with the expectation, we have that
\begin{equation*}
\begin{aligned}
    \mathbb{E}_{\mathbf{v}}[\mathbf{v}^{\top}\bA\mathbf{v}] =& \text{Tr}[\mathbb{E}_{\mathbf{v}}[\bA\mathbf{v}\mathbf{v}^{\top}]] \\
    =& \text{Tr}[\bA\mathbb{E}_{\mathbf{v}}[(\mathbf{v}-\bm{\mu})(\mathbf{v}-\bm{\mu})^{\top}+\mathbf{v}\bm{\mu}^{\top} + \bm{\mu}\mathbf{v}^{\top} - \bm{\mu}\bm{\mu}^{\top}]] \\
    =& \text{Tr}(\bA\bSigma) + \text{Tr}(\bA\bm{\mu}\bm{\mu}^{\top}) \\
    =& \text{Tr}(\bA\bSigma) + \bm{\mu}^{\top}\bA\bm{\mu}.
\end{aligned}
\end{equation*}

\end{proof}

\begin{proposition}
    Consider the case when $m\to\infty$, i.e., kernel regime. Denote by $\bK$ the NTK matrix, then the expected loss of the two-layer MLP evolves as
    \begin{equation*}
      \mathbb{E}_{\btheta}[L_k] \propto \textup{Tr}[(\bI-\eta \bK)^{2k}\bK] + \bY^{\top}(\bI-\eta \bK)^{2k}\bY.
    \end{equation*}
\end{proposition}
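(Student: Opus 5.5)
In the kernel regime ($m\to\infty$) the network stays linear in its parameters around initialization, and the NTK is frozen at its deterministic expected value $\bK$ from Sec.~\ref{sec:optimization-smaller-cond-num}. The proof therefore reduces to linear algebra for kernel gradient descent on the MSE loss, plus one expectation over the random initial outputs, which Lemma~\ref{lem:expected-quadratic-form} handles.

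\textbf{Step 1: unroll the error recursion.} Take $L_k=\frac{1}{2n}\|\be_k\|^2$ with $\be_k=z_k(\bX)-\bY$. A gradient step on this loss gives $\btheta_{k+1}=\btheta_k-\frac{\eta}{n}\nabla_{\btheta}z_k(\bX)^{\top}\be_k$. Linearizing $z$ around $\btheta_k$ and using the constant kernel yields
\[
\be_{k+1}=(\bI-\eta\bK)\be_k .
\]
The factor $1/n$ is absorbed into $\eta$, which is the source of the ``$\propto$'' in the statement. Eq.~\eqref{eq:ntk-component} is exactly this recursion written in the eigenbasis. Iterating gives $\be_k=(\bI-\eta\bK)^k\be_0$. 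Since $\bK$ is symmetric,
\[
L_k\propto \be_0^{\top}(\bI-\eta\bK)^{2k}\be_0 .
\]

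\textbf{Step 2: the law of $\be_0$.} We have $\be_0=z_0(\bX)-\bY$. Because $\bV$ is zero-mean and independent of the other parameters, $\mathbb{E}_{\btheta}z_0(\bX)=\mathbf{0}$, and the same holds for the GLU model. So $\be_0$ has mean $-\bY$. Its covariance is $\bSigma_{ij}=\mathbb{E}[z_0(\bx_i)z_0(\bx_j)]=m\sigma_v^2\,\mathbb{E}_{\bw}[\phi(\bw^{\top}\bx_i)\phi(\bw^{\top}\bx_j)]$, the NNGP kernel. Under the paper's approximations this is the $\bV$-gradient block of the NTK in Eq.~\eqref{eq:lu-kernel}.

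\textbf{Step 3: take the expectation.} Apply Lemma~\ref{lem:expected-quadratic-form} with $\bA=(\bI-\eta\bK)^{2k}$, mean $-\bY$ and covariance $\bSigma$:
\[
\mathbb{E}_{\btheta}[L_k]\propto \mathrm{Tr}[(\bI-\eta\bK)^{2k}\bSigma]+\bY^{\top}(\bI-\eta\bK)^{2k}\bY .
\]

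\textbf{Main obstacle: replacing $\bSigma$ by $\bK$.} This is the step I expect to be hardest and the most approximation-dependent. Strictly, $\bSigma$ is only one summand of $\bK$. I would justify the replacement in one of two ways.

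\begin{itemize}
\item \emph{Generic case.} Absorb the mismatch into the proportionality or treat it as a scaling convention, noting that $\bSigma$ and $\bK$ share the same dominant arc-cosine structure.
\item \emph{Concrete check.} In Eq.~\eqref{eq:lu-kernel}, $\bSigma$ is the first term. Show that, to the first-order expansion in $\rho_{ij}$ used in Prop.~\ref{prop:ntk-approximation}, it has the form $c_1\bK+c_2\bI$ after adjusting coefficients. Then the identity term contributes only a term depending on the spectrum of $\bK$, which is compatible with the claimed form up to constants.
\end{itemize}

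I would also state the standing assumptions explicitly:
\begin{itemize}
\item the linearization error vanishes as $m\to\infty$;
\item $\bK$ is frozen during training;
\item expectations over $\btheta$ commute with the linear dynamics.
\end{itemize}
The last holds because $\bK$ is deterministic in the limit.
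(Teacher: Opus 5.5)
Your Steps 1--3 match the paper's proof. Using only the first two moments of $\be_0$ in Lemma~\ref{lem:expected-quadratic-form} is, if anything, cleaner than the paper's appeal to Gaussianity. The gap is the step you flag yourself, and neither justification you offer works.

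A ``$\propto$'' can only absorb a scalar, and $\bSigma$ is not a scalar multiple of $\bK$ at finite $m$. The phrase ``same dominant structure'' points the right way, but it never says what is dominated, or why.

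Your concrete check is false as stated. With $\sigma_v^2=1/m$ and the first-order expansion of Prop.~\ref{prop:ntk-approximation}, you get $m\bSigma\approx\frac{m}{4d}\bX\bX^{\top}+\frac{m}{2\pi d}\br\br^{\top}+\big(\frac{m}{4d}-\frac{m}{2\pi d}\big)\bD$, while $\bK-m\bSigma\approx\frac14(\bX\bX^{\top}+\bD)$. In $\bSigma=c_1\bK+c_2\bI$, matching the $\br\br^{\top}$ coefficients forces $c_1=1/m$, and then the $\bX\bX^{\top}$ coefficients differ by $\frac14$. Even if such a decomposition held with $c_2\neq 0$, the extra term $c_2\,\mathrm{Tr}[(\bI-\eta\bK)^{2k}]=c_2\sum_i(1-\eta\lambda_i)^{2k}$ weights the modes by $1$ rather than $\lambda_i$. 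So it is not of the claimed form ``up to constants.''

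The missing idea is the one that actually uses $m\to\infty$: under LeCun initialization the two summands in Eq.~\eqref{eq:lu-kernel} have different orders. The $\bV$-block $m\,\mathbb{E}_{\bw}[\phi(\bw^{\top}\bx_i)\phi(\bw^{\top}\bx_j)]$ has entries of order $m$. Note that $\bSigma$ is $\sigma_v^2$ times this block, not the block itself. The $\bW$-block $m\sigma_v^2\,\mathbb{E}_{\bw}[\phi'\phi']\,\bx_i^{\top}\bx_j$ has entries of order at most $d$, independent of $m$; this is the $\frac14$ versus $\frac{m}{4d}$ above.

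So as $m\to\infty$ the output-layer block dominates (lazy training), and $\bK\approx\bSigma/\sigma_v^2$. Hence $\mathrm{Tr}[(\bI-\eta\bK)^{2k}\bSigma]\approx\sigma_v^2\,\mathrm{Tr}[(\bI-\eta\bK)^{2k}\bK]$. This is the paper's argument, and your concrete check becomes correct exactly in this limit, with $c_1=\sigma_v^2$ and $c_2=0$.

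The same holds for the GLU kernel \eqref{eq:glu-kernel}. There $\tilde{\bSigma}$ is $\sigma_v^2$ times the $\bV$-block, whose entries are of order $m$, while the $\bP$- and $\bW$-blocks have entries of order $d$.

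Finally, the leftover $\sigma_v^2$ multiplies only the trace term, so the statement's ``$\propto$'' absorbs it only loosely. The paper glosses over this too.
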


\begin{proof}

When $m\to\infty$, model operates in kernel regime. Hence, gradient descent gives:
\begin{equation*}
    \btheta_{k+1} = \btheta_{k} - \eta \nabla_{\btheta} L.
\end{equation*}
Consider MSE loss. In kernel regime, by \cite{lee2019wide}, we have that
\begin{equation*}
    \be_{k+1} = (\bI-\eta \bK)\be_k.
\end{equation*}
Here $\be_k = z_k(\bX)-\bY$ is the residual at step $k$. Substituting this result to MSE loss, we have
\begin{equation*}
\begin{aligned}
    L_k =& \frac{1}{2n} \|\be_k\|^2 \\
    =& \frac{1}{2n} \be_{k-1}^{\top}(\bI-\eta\bK)^2\be_{k-1} \\
    =& ...\\
    =& \frac{1}{2n}\be_0^{\top}(\bI-\eta\bK)^{2k}\be_0.
\end{aligned}
\end{equation*}

We know that infinite width limit yields gaussian process named NNGP \cite{lee2018deep}. We have that:
\begin{equation*}
    \bSigma = \mathbb{E}_{\btheta}[z_0(\bX)z_0(\bX)^{\top}].
\end{equation*}
Our analysis applies to both GLU and non-GLU structures. Here we consider two-layer non-GLU structure. In our two-layer model setting, we have that:
\begin{equation*}
\begin{aligned}
    \Sigma_{ij} =& \mathbb{E}_{\btheta}[z_0(\bx_i)z_0(\bx_j)] \\
    =& \sum_{k=1}^m\mathbb{E}_{\btheta}[\bV_k^2\phi(\bW_k^{\top}\bx_i)\phi(\bW_k^{\top}\bx_j)] \\
    =& m\sigma_v^2\mathbb{E}_{\bw}[\phi(\bw^{\top}\bx_i)\phi(\bw^{\top}\bx_j)].
\end{aligned}
\end{equation*}

For non-GLU NTK matrix expression \eqref{eq:relu-expected-ntk}, when $m\to\infty$, under LeCun initialization, the first term becomes dominant:
\begin{equation*}
    K_{ij} \approx m\mathbb{E}_{\bw}[\phi(\bw^{\top}\bx_i)\phi(\bw^{\top}\bx_j)] = \Sigma_{ij} / \sigma_v^2.
\end{equation*}
Note that this result echoes ``lazy training'', where most training occurs at the output layer.

Now we can apply Lem.\ref{lem:expected-quadratic-form}. Since $z_0(\bX)\sim \mathcal{N}(\mathbf{0}, \bSigma)$, $\mathbf{e}_0\sim \mathcal{N}(-\bY, \bSigma)$. Hence,
\begin{equation*}
    \mathbb{E}_{\btheta}[L_k] \propto \text{Tr}[(\bI-\eta \bK)^{2k}\bK] + \bY^{\top}(\bI-\eta \bK)^{2k}\bY.
\end{equation*}

Moreover, define loss update $\Delta L_k := L_{k+1}-L_k <0$, we have that for small learning rate $\eta$:
\begin{equation*}
    \mathbb{E}_{\btheta}[-\Delta L_k] \propto \eta[\text{Tr}[(\bI-\eta \bK)^{2k}\bK^2] + \bY^{\top}(\bI-\eta \bK)^{2k}\bK\bY] + O(\eta^2).
\end{equation*}
Omitting $O(\eta^2)$ term, we have
\begin{equation*}
    \mathbb{E}_{\btheta}[-\Delta L_k] \propto \text{Tr}[(\bI-\eta \bK)^{2k}\bK^2] + \bY^{\top}(\bI-\eta \bK)^{2k}\bK\bY.
\end{equation*}

\end{proof}

\subsection{Proof of Cor.\ref{cor:loss-crossing}}

As a corollary of Prop.\ref{prop:loss-crossing}, we can provide a finer-grained analysis of training dynamics of the two-layer ReLU/ReGLU MLP models.

\begin{corollary}
    Consider two-layer ReLU/ReGLU MLP models with gaussian inputs $\bx_i\sim \mathcal{N}(\mathbf{0}, \bI)\in\mathbb{R}^d$, we have that:

    1) At early stage when $(\eta k)$ is small, as long as $\bY^{\top}(\bK-\tilde{\bK})\bY\geq 0, d\geq 5$ and $n\geq 300$, it holds that $\mathbb{E}_{\btheta}[L_k-\tilde{L}_k]<0$. That is, ReLU model converges faster than ReGLU model.

    2) At later stage when the minimum eigenvalue $\lambda_{\min}$ dominates the training process, for sufficiently large $k$, it holds that $\mathbb{E}_{\btheta}[L_k-\tilde{L}_k]>0$. That is, ReGLU model takes over and converges faster than ReGLU model.
\end{corollary}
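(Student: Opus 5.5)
Both parts start from Prop.~\ref{prop:loss-crossing}. Writing $\bK=\sum_i\lambda_i\mathbf{v}_i\mathbf{v}_i^{\top}$, we have
\begin{equation*}
\mathbb{E}_{\btheta}[L_k]\propto \sum_i (1-\eta\lambda_i)^{2k}\big(\lambda_i+\langle \bY,\mathbf{v}_i\rangle^2\big),
\end{equation*}
and the analogous expression holds for $\tilde{\bK}$ with the same proportionality constant $1/(2n)$. The plan is to compare the two sums in two regimes of $\eta k$. I would use the explicit approximations of Prop.~\ref{prop:ntk-approximation}, Prop.~\ref{prop:largest-eigenvalue} and Prop.~\ref{prop:smallest-eigenvalue} for the spectra of $\bK$ and $\tilde{\bK}$. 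Throughout I assume $\eta\lambda_1<1$ for both kernels, so every factor $1-\eta\lambda_i$ lies in $(0,1)$.

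\textbf{Early stage.} Expand $(\bI-\eta\bK)^{2k}=\bI-2k\eta\bK+O((\eta k)^2)$ and keep first order in $\eta k$:
\begin{equation*}
\mathbb{E}_{\btheta}[L_k-\tilde L_k]\propto \big[\textup{Tr}(\bK)-\textup{Tr}(\tilde{\bK})\big]-2\eta k\big[\textup{Tr}(\bK^2)-\textup{Tr}(\tilde{\bK}^2)\big]-2\eta k\,\bY^{\top}(\bK-\tilde{\bK})\bY .
\end{equation*}
The last term is nonpositive by the hypothesis $\bY^{\top}(\bK-\tilde{\bK})\bY\geq 0$. The zeroth-order trace term must be handled separately, since the claim concerns the loss \emph{decrease}. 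I would either (i) work with $\mathbb{E}[-\Delta L_k]$ from the end of the proof of Prop.~\ref{prop:loss-crossing} and compare $\textup{Tr}(\bK^2)$ with $\textup{Tr}(\tilde{\bK}^2)$, or (ii) normalize the initial losses. The paper's lazy-training approximation $\bK\approx\bSigma/\sigma_v^2$ suggests the initial-noise term is common. The core inequality is then $\textup{Tr}(\bK^2)>\textup{Tr}(\tilde{\bK}^2)$, i.e.\ $\|\bK\|_F>\|\tilde{\bK}\|_F$.

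To prove it, I would compute both Frobenius norms entrywise from Eqs.~\eqref{eq:relu-kernel-approx} and \eqref{eq:reglu-kernel-approx}, taking Gaussian moments $\mathbb{E}\|\bx\|^2=d$, $\mathbb{E}\|\bx\|^4=d^2+2d$, and $\mathbb{E}\rho_{ij}^2=1/d$. For ReLU the off-diagonal entries are dominated by the constant $\beta d=m/(2\pi)$, so $\|\bK\|_F^2\approx n(n-1)(m/2\pi)^2+n(m/2+d/2)^2$. For ReGLU the off-diagonal entries have mean zero in $\rho_{ij}$ at leading order, so $\|\tilde{\bK}\|_F^2\approx n(n-1)\,O(m^2/d)+n\,O(m^2)$. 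Comparing these is where the explicit thresholds $d\geq5$ and $n\geq300$ come from: the condition that the $n^2m^2/(4\pi^2)$ term beat the ReGLU terms with all constants tracked. I expect this step to be the main obstacle. The constants must be carried carefully, including the $\rho_{ij}^2$ contributions, to recover exactly the stated numerical thresholds, and the higher-order remainder in $\eta k$ must also be controlled.

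\textbf{Late stage.} As $k\to\infty$ each sum is dominated by its smallest-eigenvalue term:
\begin{equation*}
\mathbb{E}[L_k]\sim (1-\eta\lambda_n)^{2k}\big(\lambda_n+\langle\bY,\mathbf{v}_n\rangle^2\big),
\end{equation*}
up to multiplicity. By Prop.~\ref{prop:smallest-eigenvalue}, $\lambda_n(\tilde{\bK})>\lambda_n(\bK)$ when $n>d$, so $(1-\eta\tilde\lambda_n)/(1-\eta\lambda_n)<1$. The ratio $\mathbb{E}[\tilde L_k]/\mathbb{E}[L_k]$ therefore decays geometrically, provided the ReLU prefactor $\lambda_n+\langle\bY,\mathbf{v}_n\rangle^2$ is positive; it is, since $\lambda_n(\bK)\approx\gamma d>0$. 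Hence $\mathbb{E}[L_k-\tilde L_k]>0$ for all sufficiently large $k$. The only care needed is a uniform bound on the remaining terms, e.g.\ $\sum_{i}(1-\eta\tilde\lambda_i)^{2k}(\cdot)\leq(1-\eta\tilde\lambda_n)^{2k}\big(\textup{Tr}\tilde{\bK}+\|\bY\|^2\big)$.
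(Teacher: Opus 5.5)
Your late-stage argument is the paper's, and your tail bound $\sum_i(1-\eta\tilde\lambda_i)^{2k}(\cdot)\le(1-\eta\tilde\lambda_n)^{2k}(\textup{Tr}\,\tilde{\bK}+\|\bY\|^2)$ makes it more rigorous. Your entrywise Gaussian-moment computation of $\textup{Tr}(\bK^2)-\textup{Tr}(\tilde{\bK}^2)$ is also what the paper does; it writes the difference as $\sum_{i,j}K_{ij}^2\bigl(1-(\bx_i^{\top}\bx_j)^2/d^2\bigr)$ via $\tilde{\bK}\approx\bK\odot(\bX\bX^{\top}/d)$.

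The gap is the zeroth-order term. The corollary asserts $\mathbb{E}_{\btheta}[L_k-\tilde L_k]<0$, which is a statement about loss \emph{values}, not decrements. For small $\eta k$ its leading term is $\textup{Tr}(\bK)-\textup{Tr}(\tilde{\bK})$, since the $\bY^{\top}\bY$ terms cancel. At $k=0$ the claim is exactly $\textup{Tr}(\bK)<\textup{Tr}(\tilde{\bK})$. If that term were positive, the inequality would fail for all sufficiently small $\eta k$, whatever the first-order terms do, so its sign cannot be set aside. Your options (i) and (ii) prove different statements: a comparison of per-step decreases, or of normalized losses. Your remark that the initial-noise term is common is also incorrect. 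The identity $\bSigma\approx\sigma_v^2\bK$ holds for each model with its own kernel, and the two traces differ.

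The missing step is short. From $K_{ii}\approx\frac{m}{2d}\|\bx_i\|^2$, $\tilde K_{ii}\approx\frac{m}{2d^2}\|\bx_i\|^4$ and $\mathbb{E}\|\bx\|^4=d(d+2)$ you get $\textup{Tr}(\bK-\tilde{\bK})\approx\frac{nm}{2d}\bigl(d-(d+2)\bigr)=-\frac{nm}{d}<0$. So all three terms of your first-order expansion have the favorable sign, which is exactly how the paper concludes.

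Separately, do not aim to reproduce $n\ge300$ exactly. Carrying the moments through, the off-diagonal part of $\textup{Tr}(\bK^2-\tilde{\bK}^2)$ equals $\frac{n(n-1)m^2}{4d^2}\bigl(\frac{d^2-3d-6}{4d}+\frac{d^3-d^2-4d-4}{\pi^2 d}\bigr)$. The paper's display has $4d^3$ in that prefactor, and its threshold comes from that stricter inequality. A correct computation gives a weaker requirement, which $d\ge5$, $n\ge300$ still implies, so it suffices for the statement as given.
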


\begin{proof}

\textbf{1) Early stage.} 

We first consider the early stage when $(\eta k)$ is relatively small. Expanding the expression in Prop.\ref{prop:loss-crossing}, we obtain
\begin{equation*}
\begin{aligned}
    \mathbb{E}_{\btheta}[L_k] \propto&\; \textup{Tr}[(\bI-\eta \bK)^{2k}\bK] + \bY^{\top}(\bI-\eta \bK)^{2k}\bY \\
    \propto&\; \text{Tr}(\bK) + \bY^{\top}\bY -2\eta k[\text{Tr}(\bK^2)+\bY^{\top}\bK\bY] + O(\eta^2k^2).
\end{aligned}
\end{equation*}

Similarly, for two-layer ReGLU model, we have
\begin{equation*}
\begin{aligned}
    \mathbb{E}_{\btheta}[\tilde{L}_k] \propto&\; \textup{Tr}[(\bI-\eta \tilde{\bK})^{2k}\tilde{\bK}] + \bY^{\top}(\bI-\eta \tilde{\bK})^{2k}\bY \\
    \propto&\; \text{Tr}(\tilde{\bK}) + \bY^{\top}\bY -2\eta k[\text{Tr}(\tilde{\bK}^2)+\bY^{\top}\tilde{\bK}\bY] + O(\eta^2k^2).
\end{aligned}
\end{equation*}

Therefore, omitting high order terms, we have
\begin{equation}\label{eq:tmp-1}
\begin{aligned}
    \mathbb{E}_{\btheta}[L_k-\tilde{L}_k] \propto&\; \text{Tr}(\bK-\tilde{\bK}) - 2\eta k [\text{Tr}(\bK^2-\tilde{\bK}^2) + \bY^{\top}(\bK-\tilde{\bK})\bY].
\end{aligned}
\end{equation}

There are three terms in above expression. By assumption, we have that $\bY^{\top}(\bK-\tilde{\bK})\bY\geq 0$. Consider $\text{Tr}(\bK-\tilde{\bK})$. According to Eq.\eqref{eq:relu-kernel-approx} and Eq.\eqref{eq:reglu-kernel-approx}, at the infinite width limit we have that
\begin{equation*}
    K_{ii} = \frac{m}{2d}\|\bx_i\|^2, \qquad \tilde{K}_{ii} = \frac{m}{2d^2}\|\bx_i\|^4.
\end{equation*}
Therefore, according to law of large numbers,
\begin{equation*}
\begin{aligned}
    \text{Tr}(\bK-\tilde{\bK}) =&\; \sum_{i=1}^n(K_{ii}-\tilde{K}_{ii}) \\
    =&\; \frac{nm}{2d}\mathbb{E}_{\bx}\left[\|\bx\|^2 - \frac{\|\bx\|^4}{d}\right] \\
    =&\; \frac{nm}{2d}\left(d - \frac{d(d+2)}{d}\right) \\
    =&\; -\frac{nm}{d} <0.
\end{aligned}
\end{equation*}
Here we use the fact that, for standard gaussian vector $\bx\sim \mathcal{N}(\mathbf{0}, \bI)$, 
\begin{equation*}
    \mathbb{E}_{\bx}[\|\bx\|^k] = 2^{k/2}\frac{\Gamma\left(\frac{d+k}{2}\right)}{\Gamma\left(\frac{d}{2}\right)},
\end{equation*}
where $\Gamma$ is gamma function.

Finally, consider the second term, we have that
\begin{equation*}
\begin{aligned}
    \text{Tr}(\bK^2-\tilde{\bK}^2) = \sum_{i=1}^n\sum_{j=1}^n K_{ij}^2\left(1-\frac{(\bx_i^{\top}\bx_j)^2}{d^2}\right).
\end{aligned}
\end{equation*}
When $i=j$, we have that
\begin{equation*}
\begin{aligned}
    &\sum_{i=1}^n K_{ii}^2\left(1-\frac{\|\bx_i\|^4}{d^2}\right) \\
    =& \frac{nm^2}{4d^2}\mathbb{E}_{\bx}\left[\|\bx\|^4 - \frac{\|\bx\|^8}{d^2}\right] \\
    =& -\frac{nm^2}{2d^3}(d+2)(5d+12).
\end{aligned}
\end{equation*}
When $i\neq j$, 
\begin{equation*}
\begin{aligned}
    &\sum_{1\leq i\neq j\leq n}K_{ij}^2\left(1-\frac{(\bx_i^{\top}\bx_j)^2}{d^2}\right) \\
    =& \frac{n(n-1)m^2}{4d^2}\mathbb{E}_{\bx, \bx'}\left[\left(\frac{(\bx^{\top}\bx')}{2}+\frac{\|\bx\|\|\bx'\|}{\pi}\right)^2\left(1-\frac{(\bx^{\top}\bx')^2}{d^2}\right)\right] \\
    =& \frac{n(n-1)m^2}{4d^3}\left(\frac{d^2-3d-6}{4d} + \frac{d^3-d^2-4d-4}{\pi^2d}\right).
\end{aligned}
\end{equation*}
Putting together, we have:
\begin{equation*}
    \text{Tr}(\bK^2-\tilde{\bK}^2) = \frac{nm^2}{4d^3}\left[(n-1)\left(\frac{d^2-3d-6}{4d} + \frac{d^3-d^2-4d-4}{\pi^2d}\right) - 10d^2-44d-48\right].
\end{equation*}
One can verify that, when $d\geq 5$ and $n\geq 300$, $\text{Tr}(\bK^2-\tilde{\bK}^2)>0$. 

Since $\text{Tr}(\bK-\tilde{\bK})<0, \text{Tr}(\bK^2-\tilde{\bK^2})>0$ and $\bY^{\top}(\bK-\tilde{\bK})\bY\geq 0$, according to \eqref{eq:tmp-1}, $\mathbb{E}_{\btheta}[L_k-\tilde{L}_k]<0$. In addition, when $k$ increases, $\mathbb{E}_{\btheta}[L_k-\tilde{L}_k]$ would further decrease.

\textbf{2) Later stage.}

To further analyze the dynamics at later stage, we use eigendecomposition. Denote by $(\lambda_i, \mathbf{v}_i)$ the eigenpairs of $\bK$ and $\beta_i := \bY^{\top}\mathbf{v}_i$, for ReLU model, we have that:
\begin{equation*}
\begin{aligned}
    \mathbb{E}_{\btheta}[L_k] \propto&\; \text{Tr}[(\bI-\eta \bK)^{2k}\bK] + \bY^{\top}(\bI-\eta \bK)^{2k}\bY \\
    \propto&\; \sum_{i=1}^n (\lambda_i+\beta_i^2)(1-\eta\lambda_i)^{2k}.
\end{aligned}
\end{equation*}

At later stage when $k\to\infty$, $(1-\eta \lambda_n)^{2k}$ demonates the above expression. We have, 
\begin{equation*}
\begin{aligned}
    \mathbb{E}_{\btheta}[L_k]\propto&\; (\lambda_n+\beta_n^2)(1-\eta\lambda_n)^{2k}\sum_{i=1}^n \frac{\lambda_i+\beta_i^2}{\lambda_n+\beta_n^2}\left(\frac{1-\eta\lambda_i}{1-\eta\lambda_n}\right)^{2k}\\
    \sim&\; (\lambda_n+\beta_n^2)(1-\eta\lambda_n)^{2k}
\end{aligned}
\end{equation*}
Similarly, for ReGLU model
\begin{equation*}
    \mathbb{E}_{\btheta}[\tilde{L}_k]\sim (\tilde{\lambda}_n+\tilde{\beta}_n^2)(1-\eta\tilde{\lambda}_n)^{2k}.
\end{equation*}

Now, solving for $(\lambda_n+\beta_n^2)(1-\eta\lambda_n)^{2k} > (\tilde{\lambda}_n+\tilde{\beta}_n^2)(1-\eta\tilde{\lambda}_n)^{2k}$, we obtain that
\begin{equation*}
    k>\frac{\log [(\tilde{\lambda}_n+\tilde{\beta}_n^2)/(\lambda_n+\beta_n^2)]}{2\log[(1-\eta\lambda_n)/(1-\eta\tilde{\lambda}_n)]}.
\end{equation*}

Hence, for sufficiently large $k$, it holds that $\mathbb{E}_{\btheta}[L_k-\tilde{L}_k]>0$.

\end{proof}

%% file: Appendices/3.experiments.tex
\section{Experiment on Generalization Gap Analysis}
\label{app:gengap-experiments}

In this section, we provide more experiment details and results on our generalization gap analysis. 

\textbf{Models.} We use MLP Mixer \cite{tolstikhin2021mlp}, ViT \cite{dosovitskiy2021image} and GPT-2 \cite{radford2019language} in our experiments. MLP Mixer and ViT are used for image classification tasks, and GPT-2 is used for natural language processing (NLP) task. All three models use MLP in their architecture design, allowing us to compare their ability in reducing generalization gap with and without GLU structure.

\textbf{Datasets.} For image classification task, we use Tiny ImageNet \cite{le2015tiny} and CIFAR-10 \cite{krizhevsky2009learning}. Standard data augmentation are applied, including random resized cropping, color jittering, horizontal flipping, and random erasing for Tiny ImageNet dataset, and random cropping with padding, horizontal flipping, and random erasing for CIFAR-10 dataset. For NLP task, we use FineWeb-Edu \cite{penedo2024fineweb}.

\textbf{Optimizer and regularization.} For MLP Mixer and ViT, we use traditional stochastic gradient descent (SGD) with no additional regularization. The learning rate is set to $5\times 10^{-3}$ for MLP Mixer and $0.01$ for ViT. For GPT-2, we follow standard training method, using AdamW optimizer with a learning rate of $4 \times 10^{-4}$ and weight decay of $0.1$.

Below are plots showing the generalization gap $L_{\mathcal{D}} - L_{S}$ against the training loss $L_{S}$ for models with and without GLU. All experiments are performed on NVIDIA 3090 GPU with 24GB of memory. From these results we can observe that for the same training error, GLU structure has limited influence on reducing the generalization gap, solidifying our argument that the advantage of GLU structure mainly comes from accelerating optimization.

\begin{figure}[ht]
  \vskip 0.2in
  \begin{center}
    \begin{subfigure}[t]{0.33\textwidth}
      \centering
      \includegraphics[width=\linewidth]{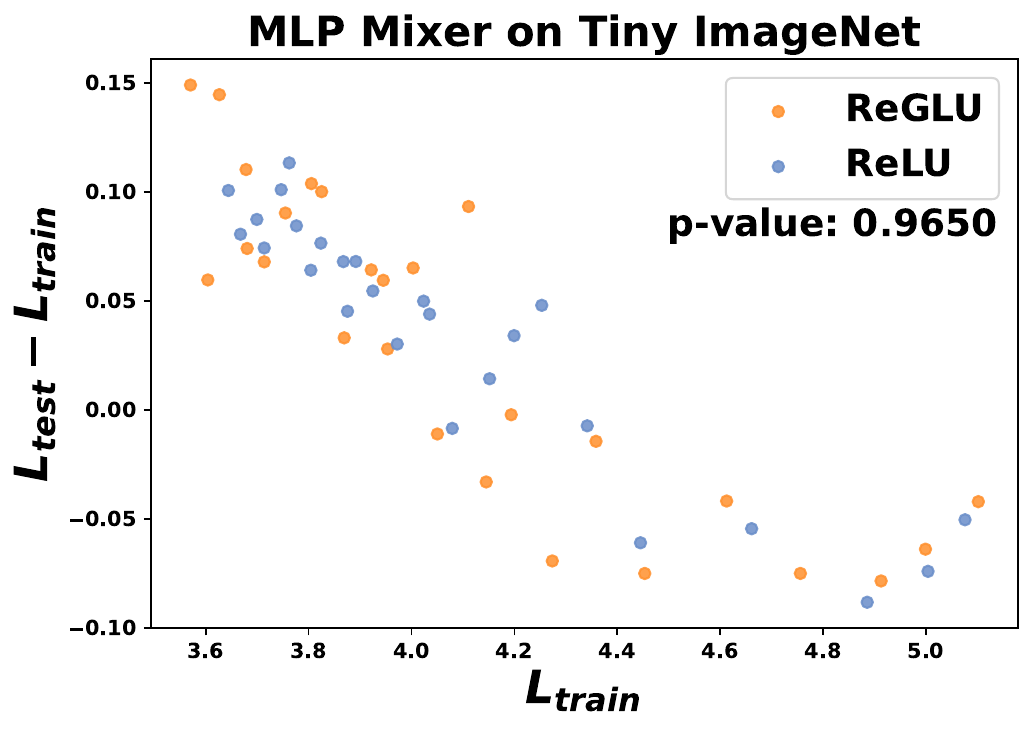}
      \centerline{(a)}
    \end{subfigure}
    \begin{subfigure}[t]{0.33\textwidth}
      \centering
      \includegraphics[width=\linewidth]{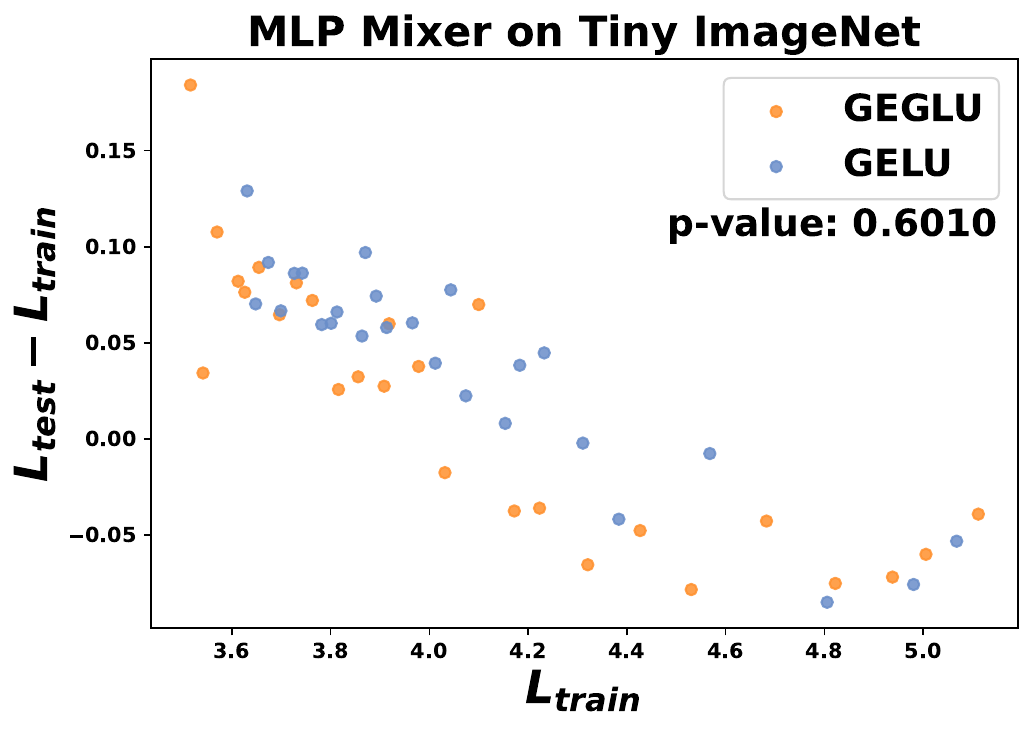}
      \centerline{(b)}
    \end{subfigure}
    \begin{subfigure}[t]{0.33\textwidth}
      \centering
      \includegraphics[width=\linewidth]{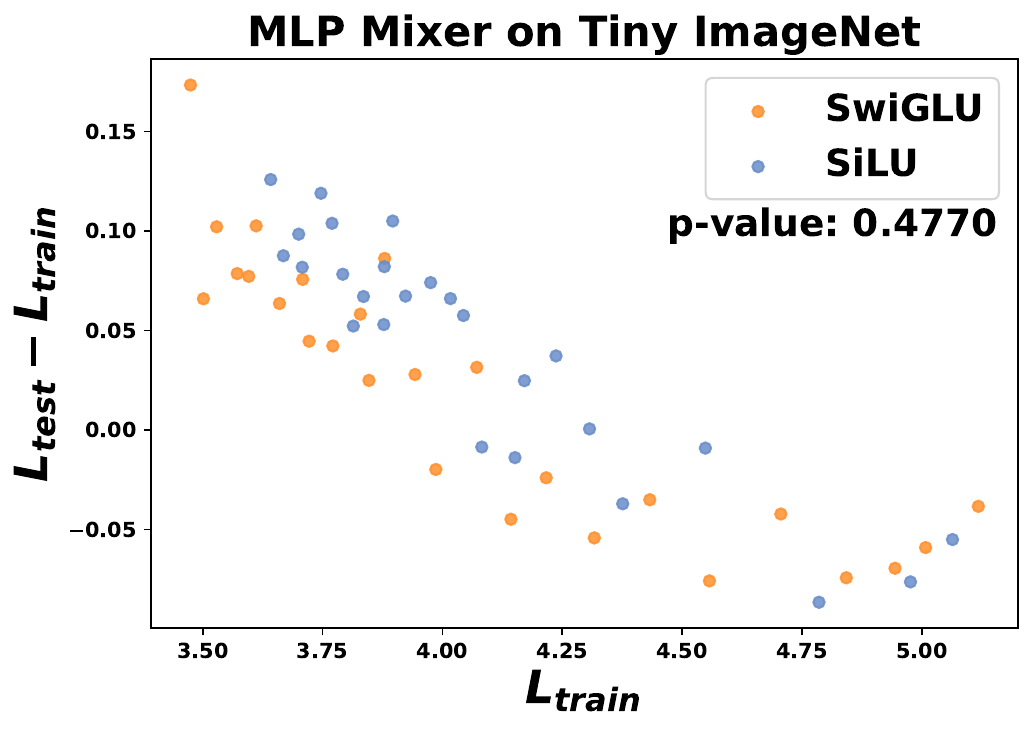}
      \centerline{(c)}
    \end{subfigure}

    \caption{
      Generalization gap vs training loss with MLP Mixer trained on Tiny ImageNet.
      (a) ReLU vs ReGLU. 
      (b) GELU vs GEGLU.
      (c) SiLU vs SwiGLU. 
    }
    \label{fig:mixer-tiny}
  \end{center}
\end{figure}

\begin{figure}[ht]
  \vskip 0.2in
  \begin{center}
    \begin{subfigure}[t]{0.33\textwidth}
      \centering
      \includegraphics[width=\linewidth]{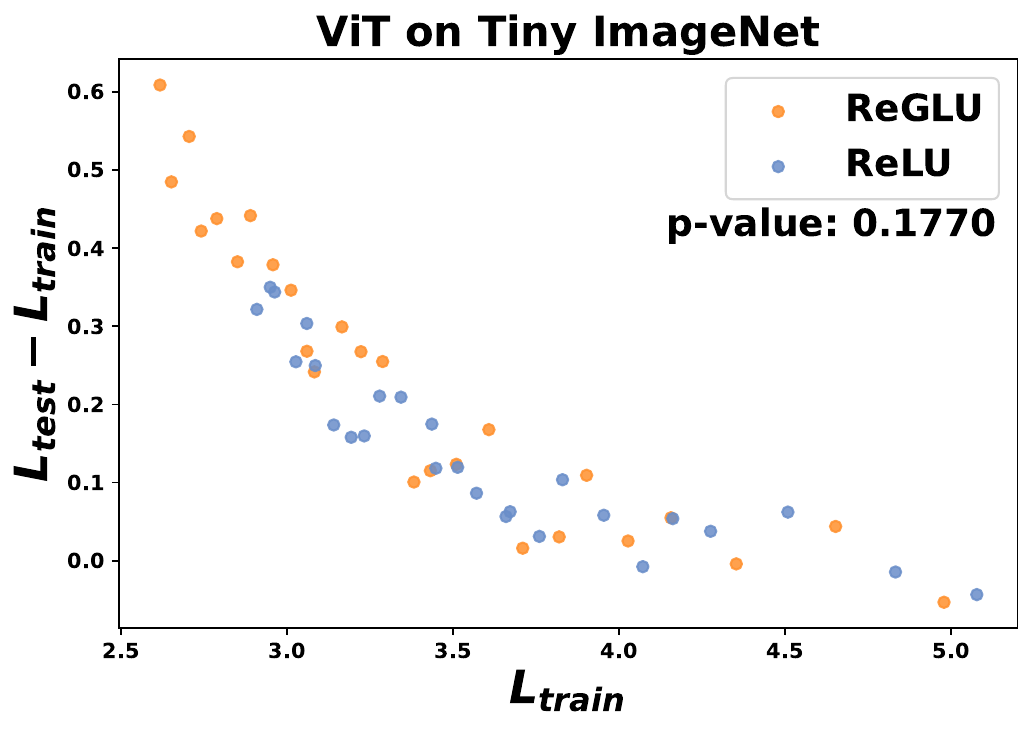}
      \centerline{(a)}
    \end{subfigure}
    \begin{subfigure}[t]{0.33\textwidth}
      \centering
      \includegraphics[width=\linewidth]{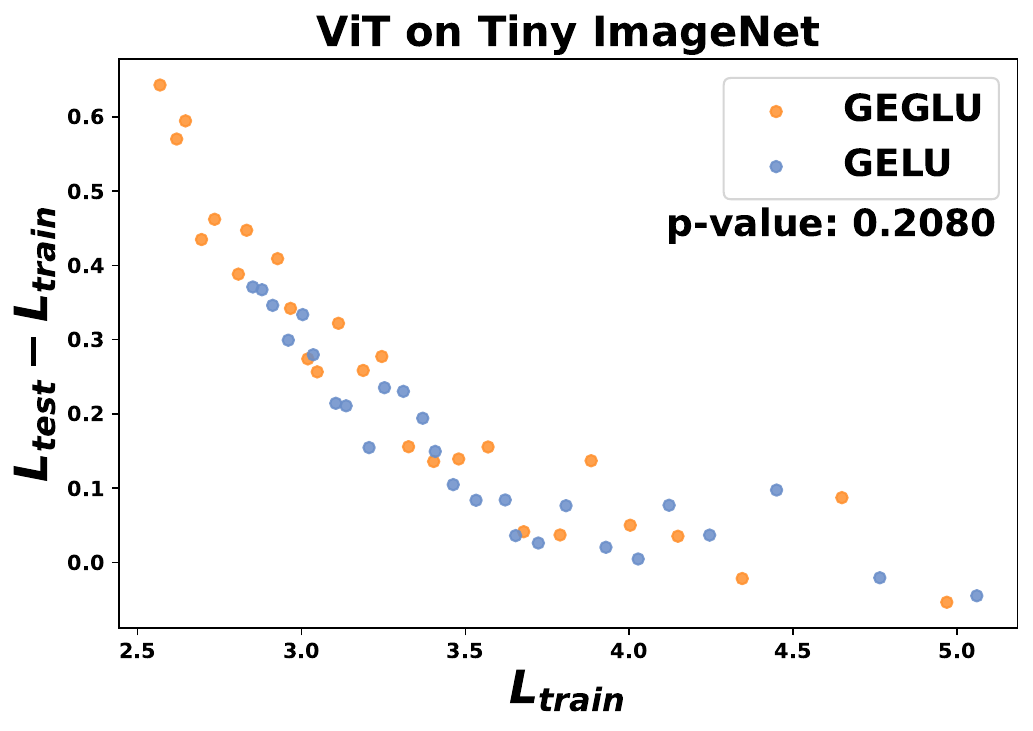}
      \centerline{(b)}
    \end{subfigure}
    \begin{subfigure}[t]{0.33\textwidth}
      \centering
      \includegraphics[width=\linewidth]{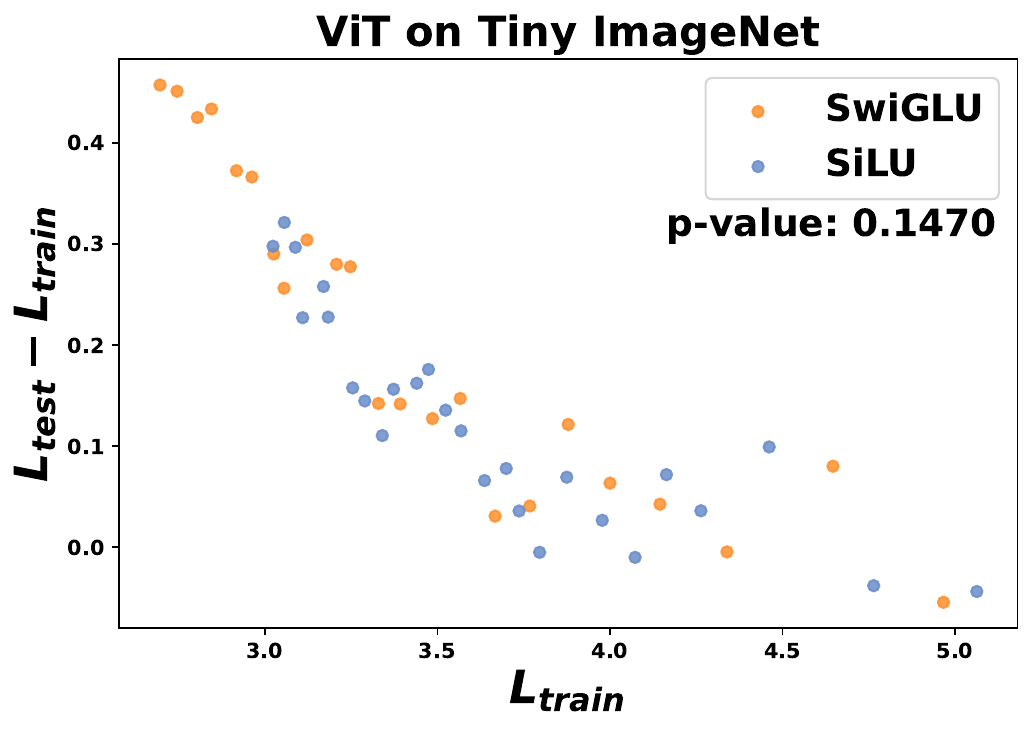}
      \centerline{(c)}
    \end{subfigure}

    \caption{
      Generalization gap vs training loss with ViT trained on Tiny ImageNet.
      (a) ReLU vs ReGLU. 
      (b) GELU vs GEGLU.
      (c) SiLU vs SwiGLU. 
    }
    \label{fig:vit-tiny}
  \end{center}
\end{figure}

\begin{figure}[ht]
  \vskip 0.2in
  \begin{center}
    \begin{subfigure}[t]{0.33\textwidth}
      \centering
      \includegraphics[width=\linewidth]{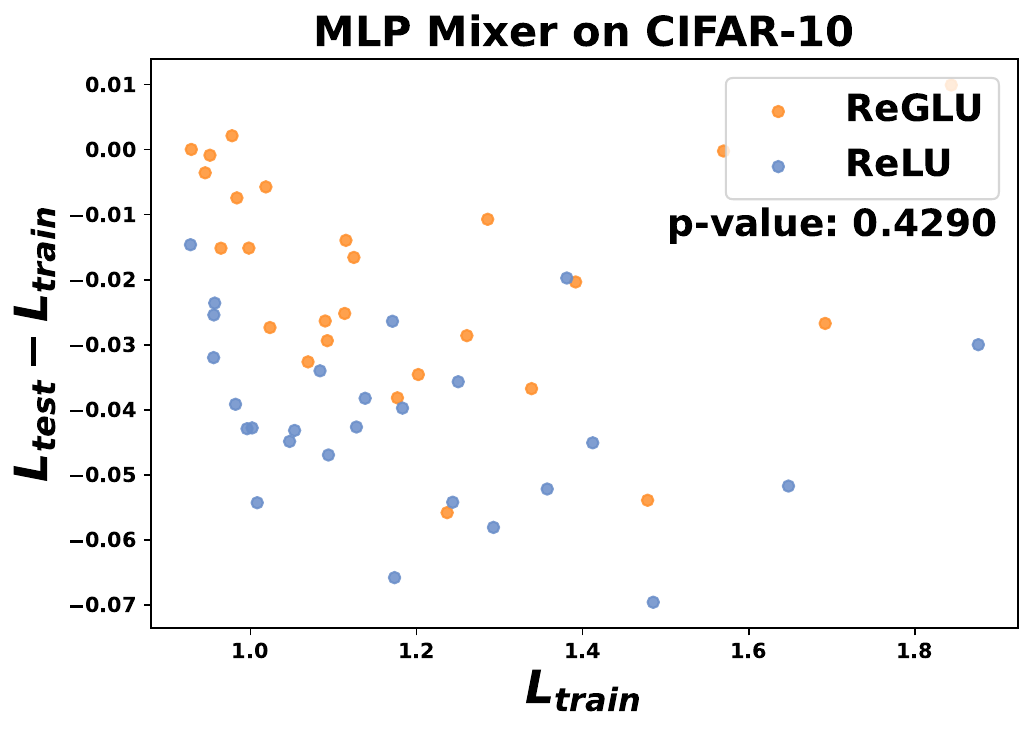}
      \centerline{(a)}
    \end{subfigure}
    \begin{subfigure}[t]{0.33\textwidth}
      \centering
      \includegraphics[width=\linewidth]{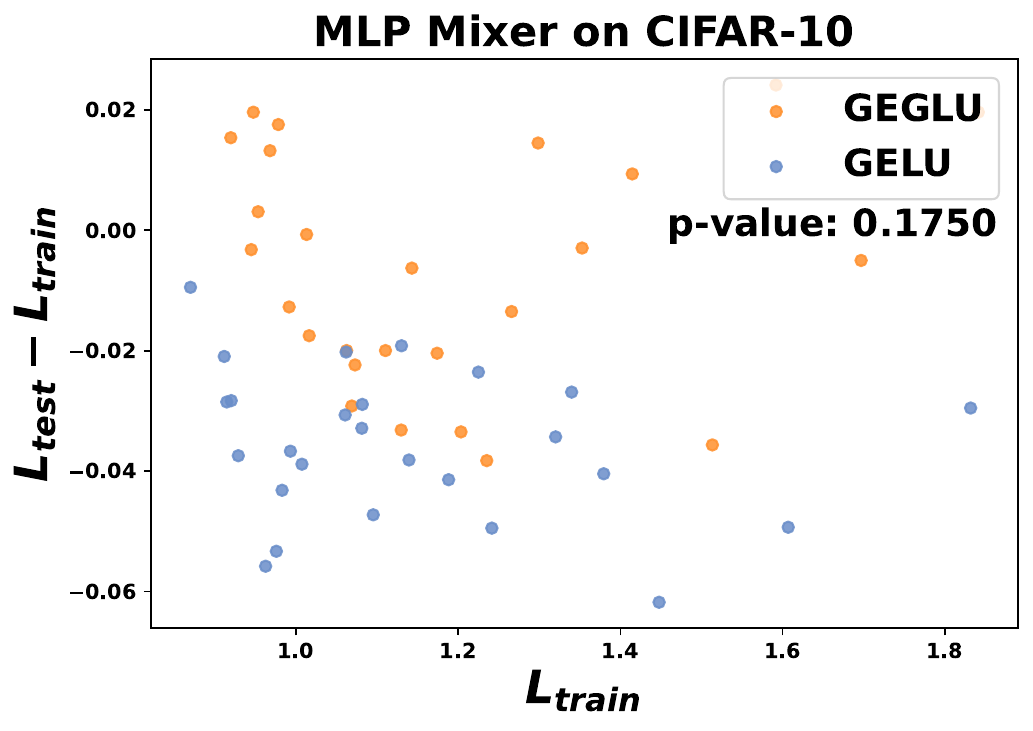}
      \centerline{(b)}
    \end{subfigure}
    \begin{subfigure}[t]{0.33\textwidth}
      \centering
      \includegraphics[width=\linewidth]{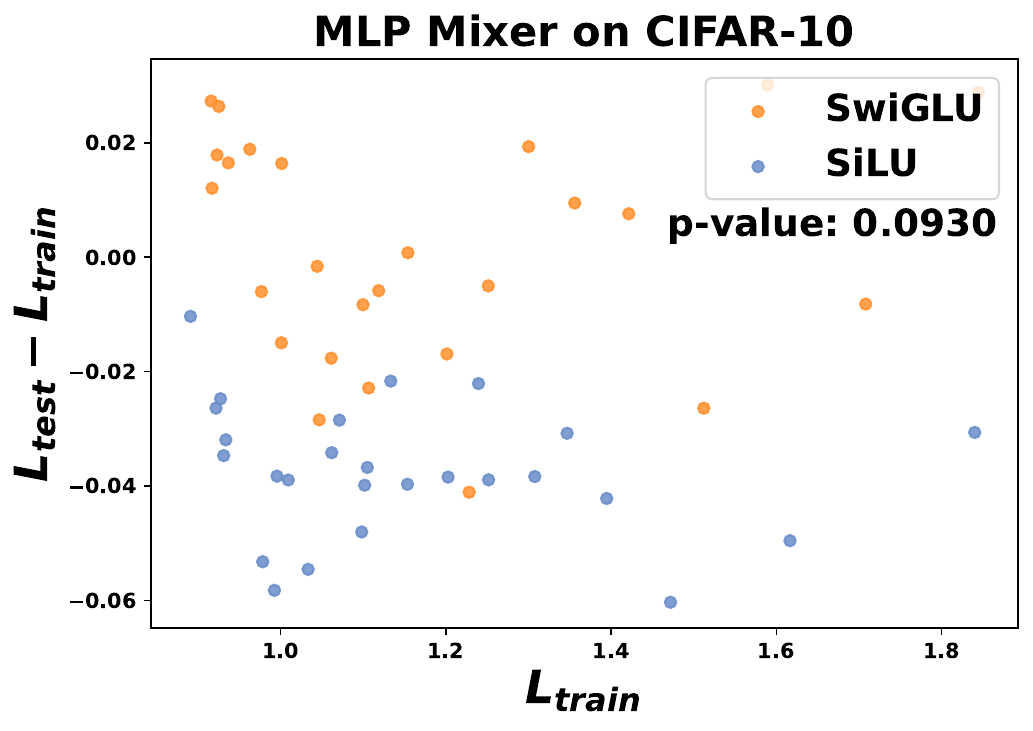}
      \centerline{(c)}
    \end{subfigure}

    \caption{
      Generalization gap vs training loss with MLP Mixer trained on CIFAR-10.
      (a) ReLU vs ReGLU. 
      (b) GELU vs GEGLU.
      (c) SiLU vs SwiGLU. 
    }
    \label{fig:mixer-cifar10}
  \end{center}
\end{figure}

\begin{figure}[ht]
  \vskip 0.2in
  \begin{center}
    \begin{subfigure}[t]{0.33\textwidth}
      \centering
      \includegraphics[width=\linewidth]{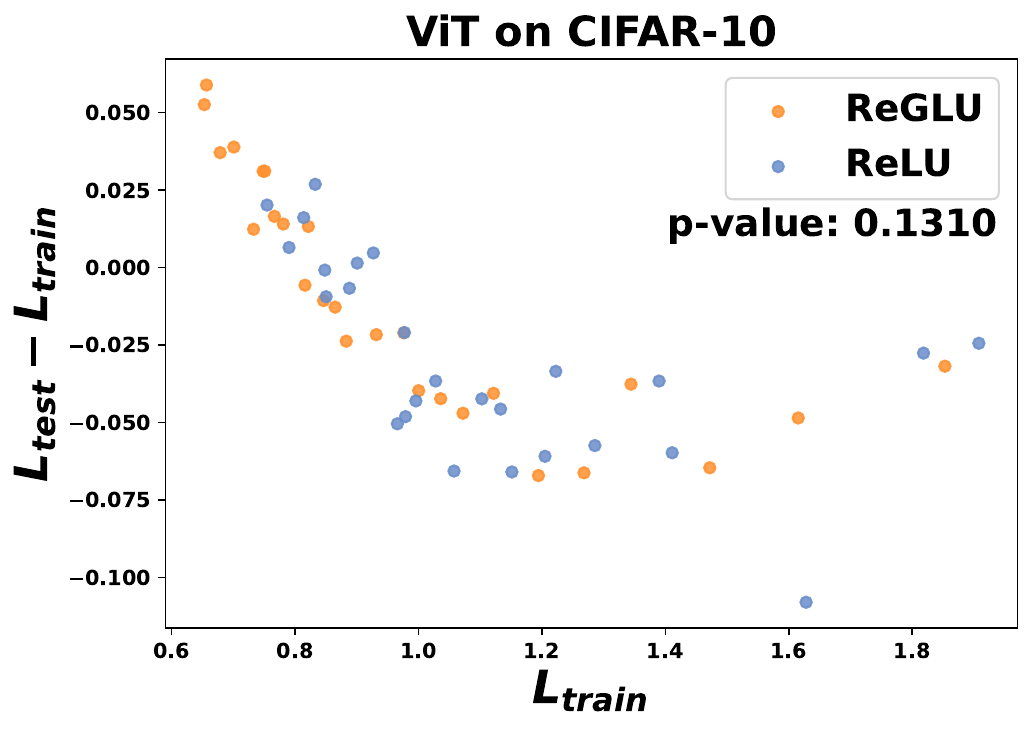}
      \centerline{(a)}
    \end{subfigure}
    \begin{subfigure}[t]{0.33\textwidth}
      \centering
      \includegraphics[width=\linewidth]{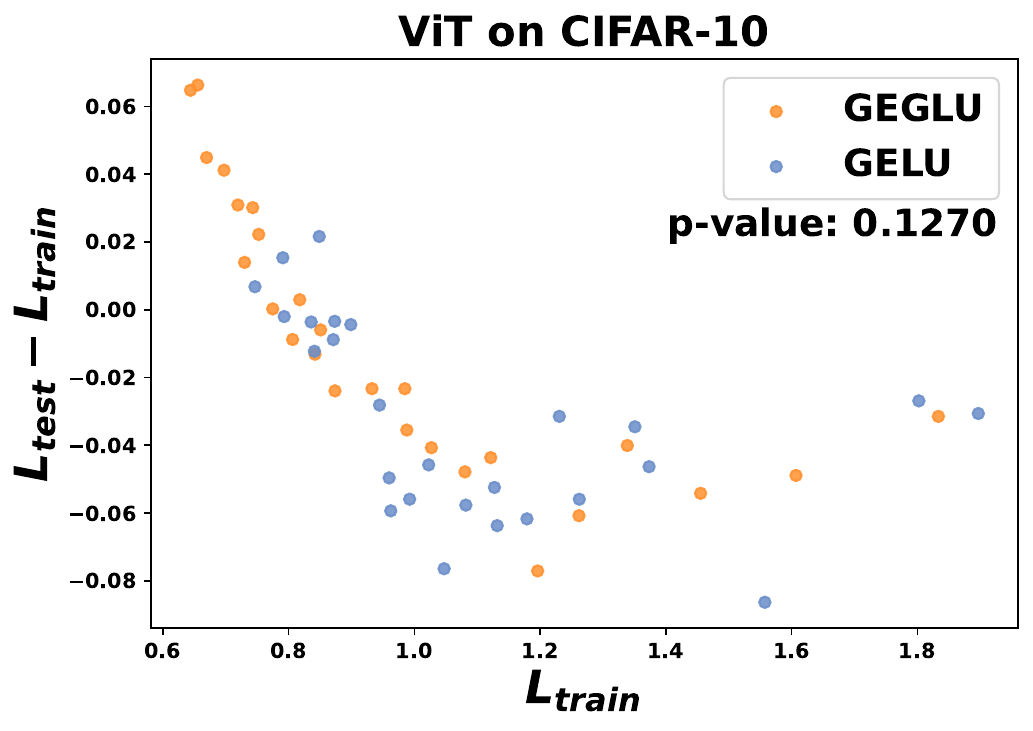}
      \centerline{(b)}
    \end{subfigure}
    \begin{subfigure}[t]{0.33\textwidth}
      \centering
      \includegraphics[width=\linewidth]{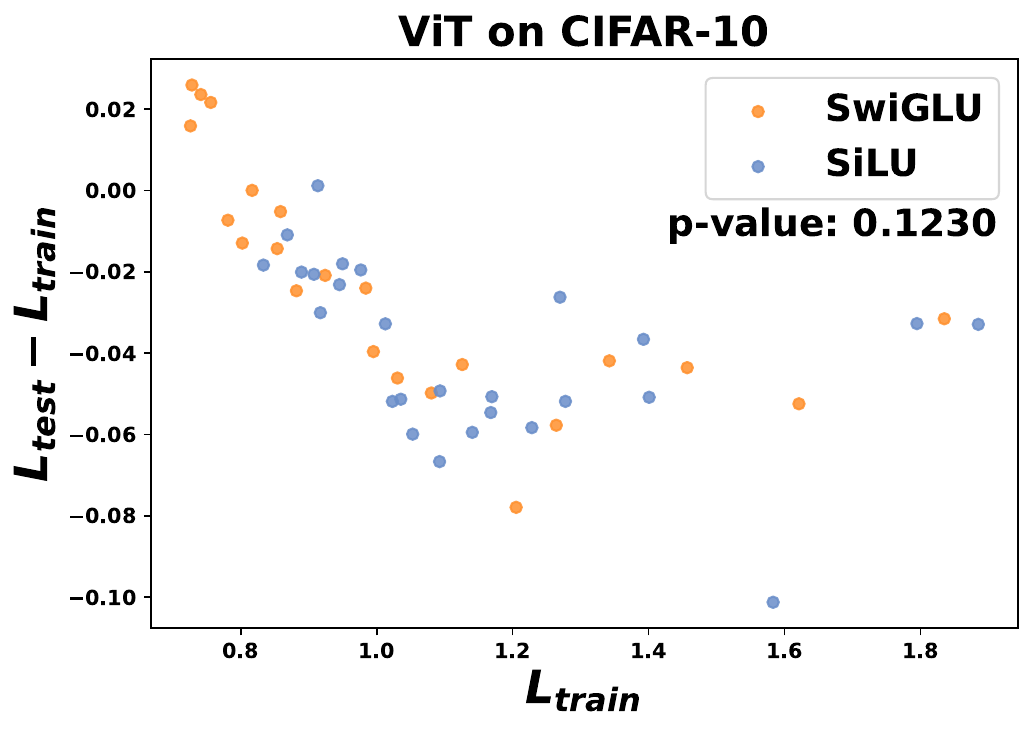}
      \centerline{(c)}
    \end{subfigure}

    \caption{
      Generalization gap vs training loss with ViT trained on CIFAR-10.
      (a) ReLU vs ReGLU. 
      (b) GELU vs GEGLU.
      (c) SiLU vs SwiGLU. 
    }
    \label{fig:vit-cifar10}
  \end{center}
\end{figure}

\begin{figure}[ht]
  \vskip 0.2in
  \begin{center}
    \begin{subfigure}[t]{0.475\textwidth}
      \centering
      \includegraphics[width=\linewidth]{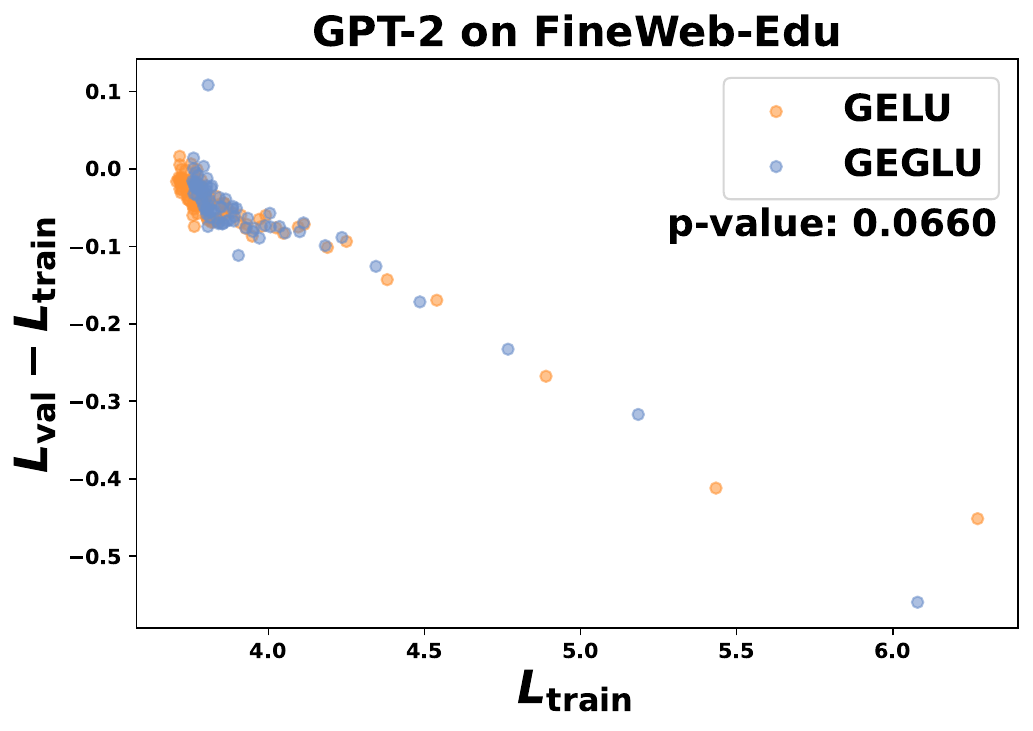}
      \centerline{(a)}
    \end{subfigure}
    \begin{subfigure}[t]{0.475\textwidth}
      \centering
      \includegraphics[width=\linewidth]{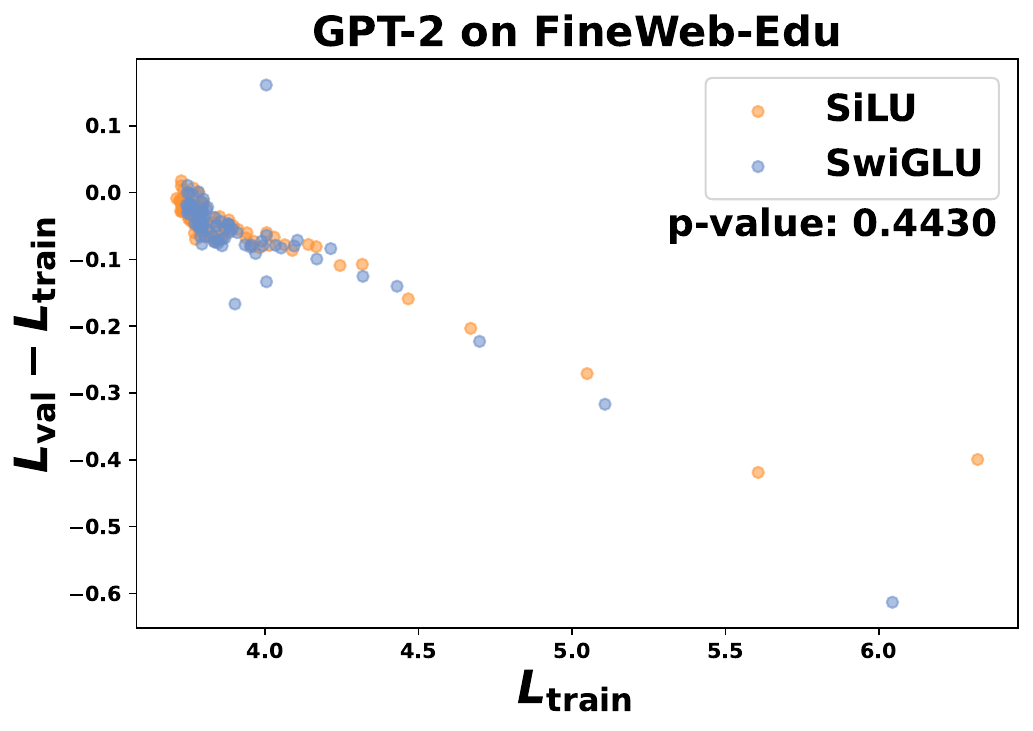}
      \centerline{(b)}
    \end{subfigure}

    \caption{
      Generalization gap vs training loss with GPT-2 trained on FineWeb-Edu.
      (a) GELU vs GEGLU.
      (b) SiLU vs SwiGLU. 
    }
    \label{fig:gpt2-fineweb-edu}
  \end{center}
\end{figure}